\documentclass{article}

\usepackage[final]{neurips_2023}

\usepackage[utf8]{inputenc} %

\usepackage{xspace}
\usepackage{amsthm}
\usepackage{bbm}
\usepackage{algorithm}
\usepackage[algo2e, ruled]{algorithm2e}
\usepackage{mathrsfs}
\usepackage{bm}
\usepackage{makecell}
\usepackage{tabulary}

\usepackage[T1]{fontenc}    %
\usepackage{url}            %
\usepackage{booktabs}       %
\usepackage{amsfonts}       %
\usepackage{nicefrac}       %
\usepackage{microtype}      %
\usepackage{xcolor}         %
\usepackage{algpseudocode}
\usepackage{amsmath}

\DeclareMathOperator*{\argmax}{arg\,max}

\usepackage{natbib}
\bibliographystyle{plainnat}

\usepackage{comment}
\newcommand{\poly}{\text{poly}}
\newcommand{\polylog}{\text{polylog}}
\newcommand{\norm}[1]{\|#1\|}

\renewcommand{\epsilon}{\varepsilon}
\newcommand{\R}{\mathbb{R}}
\newcommand{\Otilde}[1]{\widetilde{O}\left(#1\right)}
\newcommand{\otilde}[1]{\widetilde{O}(#1)}

\newcommand{\nc}{S}
\newcommand{\na}{K}
\newcommand{\ac}{j}
\newcommand{\lv}{n}

\newcommand{\rr}{\mathcal{R}_{\alpha}}
\newcommand{\dc}{\nu}
\newcommand{\db}{\omega}
\newcommand{\cs}{\mathcal{C}}
\newcommand{\cB}{\mathcal{B}}

\newcommand{\piout}{\pi^{\rm out}}
\newcommand{\rx}{A}
\newcommand{\rxh}{\widehat{A}}
\newcommand{\muu}{\overline{\mu}}
\newcommand{\mul}{\underline{\mu}}
\newcommand{\partition}{\mathcal{P}}

\newcommand{\nes}{L}

\newcommand{\ca}{g}
\newcommand{\cx}{i}
\newcommand{\cl}{b}
\newcommand{\cls}{\mathcal{B}}
\newcommand{\rw}{y}
\newcommand{\rk}{r}

\newcommand{\rg}{\text{Reg}}

\newcommand{\good}{\text{GOOD}}
\newcommand{\E}{\mathbb{E}}
\newcommand{\EE}[1]{\mathbb{E}[#1]}

\newcommand{\one}{\mathbbm{1}}
\newcommand{\wg}{w}
\newcommand{\rv}{v}
\newcommand{\lowrank}[1]{#1}
\newcommand{\cX}{\mathcal{X}}
\newcommand{\cA}{\mathcal{W}}
\newcommand{\cD}{\mathcal{D}}
\renewcommand{\lg}{\widetilde{\mathrm{lg}}}
\newcommand{\Lg}{16\log(rSK/\delta)}
\newcommand{\epst}{\widetilde{\epsilon}}

\newcommand{\unif}[1]{\mathrm{unif}(#1)}
\newcommand{\goodevent}{\mathcal{E}}

\usepackage{prettyref, hyperref}
\newcommand{\pref}[1]{\prettyref{#1}}

\newcommand{\savehyperref}[2]{\texorpdfstring{\hyperref[#1]{#2}}{#2}}
\newrefformat{eq}{\savehyperref{#1}{Eq. \textup{(\ref*{#1})}}}
\newrefformat{eqn}{\savehyperref{#1}{Equation~\ref*{#1}}}
\newrefformat{lem}{\savehyperref{#1}{Lemma~\ref*{#1}}}
\newrefformat{def}{\savehyperref{#1}{Definition~\ref*{#1}}}
\newrefformat{line}{\savehyperref{#1}{Line~\ref*{#1}}}
\newrefformat{thm}{\savehyperref{#1}{Theorem~\ref*{#1}}}
\newrefformat{corr}{\savehyperref{#1}{Corollary~\ref*{#1}}}
\newrefformat{cor}{\savehyperref{#1}{Corollary~\ref*{#1}}}
\newrefformat{sec}{\savehyperref{#1}{Section~\ref*{#1}}}
\newrefformat{app}{\savehyperref{#1}{Appendix~\ref*{#1}}}
\newrefformat{assum}{\savehyperref{#1}{Assumption~\ref*{#1}}}
\newrefformat{ex}{\savehyperref{#1}{Example~\ref*{#1}}}
\newrefformat{fig}{\savehyperref{#1}{Figure~\ref*{#1}}}
\newrefformat{alg}{\savehyperref{#1}{Algorithm~\ref*{#1}}}
\newrefformat{rem}{\savehyperref{#1}{Remark~\ref*{#1}}}
\newrefformat{conj}{\savehyperref{#1}{Conjecture~\ref*{#1}}}
\newrefformat{prop}{\savehyperref{#1}{Proposition~\ref*{#1}}}
\newrefformat{proto}{\savehyperref{#1}{Protocol~\ref*{#1}}}
\newrefformat{prob}{\savehyperref{#1}{Problem~\ref*{#1}}}
\newrefformat{que}{\savehyperref{#1}{Question~\ref*{#1}}}

\usepackage[capitalize]{cleveref}

\usepackage[textsize=tiny, disable]{todonotes}

\title{Context-lumpable stochastic bandits}

\usepackage{times}

\author{%
  Chung-Wei Lee\thanks{most works were done when interning at DeepMind.} \\
  University of Southern California\\
  \texttt{leechung@usc.edu} \\
  \And
  Qinghua Liu\\
  Princeton University\\
  \texttt{qinghual@princeton.edu}\\ 
  \And
  Yasin Abbasi-Yadkori\\
  Google DeepMind\\
  \texttt{yadkori@google.com}\\ 
  \And
  Chi Jin\\
Princeton University\\
\texttt{chij@princeton.edu}\\
  \And 
  Tor Lattimore\\
  Google DeepMind\\
  \texttt{lattimore@google.com}\\
  \And 
  Csaba Szepesv\'{a}ri\\
Google DeepMind and University of Alberta\\
\texttt{szepesva@ualberta.ca}\\
}

\newtheorem{theorem}{Theorem}%

\newtheorem{lemma}[theorem]{Lemma}
\newtheorem{proposition}[theorem]{Proposition}

\begin{document}

\maketitle

\begin{abstract}
We consider a contextual bandit problem with $\nc$ contexts and $\na$ actions.
In each round $t=1,2,\dots$ the learner
observes a random context and chooses an action based on its past experience.
The learner then observes a random reward whose mean is a function of the context and the action for the round.
Under the assumption that the contexts can be lumped into $r\le \min\{\nc,\na\}$ groups such that
the mean reward for the various actions is the same for any two contexts that are in the same group,
we give an algorithm that outputs an $\epsilon$-optimal policy after using at most $\widetilde O(\rk(\nc+\na)/\epsilon^2)$ samples with high probability and provide a matching $\Omega(\rk(\nc+\na)/\epsilon^2)$ lower bound.%
In the regret minimization setting, we give an algorithm whose cumulative regret up to time $T$ is bounded by $\widetilde O(\sqrt{\rk^3(\nc+\na)T})$.
To the best of our knowledge, we are the first to show the near-optimal sample complexity in the PAC setting and $\otilde{\sqrt{\poly(r)(S+K)T}}$ minimax regret in the online setting for this problem.  
\lowrank{We also show our algorithms can be applied to more general low-rank bandits and get improved regret bounds in some scenarios.}
\end{abstract}

\section{Introduction}

Consider a recommendation platform that interacts with a finite set of users in an online fashion.
Users arrive at the platform and receive a recommendation.
If they engage with the recommendation (e.g., they ``click'') then the platform receives a reward,
otherwise no reward is obtained.
Assume that the users can be partitioned into a small number of groups such that users in the same group
have the same preferences.
\begin{center}
\emph{
We ask whether
 there exist algorithms that can take advantage of the lumpability of users into a few groups, even when the identity of the group a user belongs to is unknown and only learnable because they share preferences with other users in the group.}
\end{center}

A slightly more general version of this problem can be formalized as follows:
Viewing users as ``contexts'' and recommendations as ``actions'' (or arms), assume that there are $\nc$ contexts and $\na$ actions.
In round $t=1,2,\dots$ the learner first receives a context $\cx_t$, sampled from an unknown distribution on the set $[\nc]:=\{1,\dots,\nc\}$ of possible contexts.
The learner then chooses an action $\ac_t\in [\na]:=\{1,\dots,\na\}$ and observes a reward
$\rw_t=\rx(\cx_t,\ac_t)+\eta_t$, where given the past, $\eta_t$ has a subgaussian tail (precise definitions are postponed to \cref{sec:defs}) and $\rx:[\nc]\times [\na]\to \R$ is an unknown function of mean rewards ($\R$ denotes the set of reals).%
We consider two settings when
the goal of the learner is either to identify a near-optimal policy $\pi: [\nc] \to [\na]$, or to keep its regret small.
Policy $\pi$ is called $\epsilon$-optimal if
\begin{align}
    \E[\rx(\cx_1,\pi(\cx_1))]\ge \max_{\pi'}\E[\rx(\cx_1,\pi'(\cx_1))]-\epsilon,
    \label{eq:sdef}
\end{align}
while the regret of the learner for a horizon of $T$ is
\begin{align}
    \rg_T=\E\left[\sum_{t=1}^T\max_{\ac\in[\na]}\rx(\cx_t,\ac)-\sum_{t=1}^T\rx(\cx_t,\ac_t)\right].
    \label{eq:rdef}
\end{align}
The expectations are taken with respect to the randomness of both the learner and environment, including contexts and rewards.
It is well known
(e.g., \cite{lattimore2020bandit}) %
that there are algorithms such that an $\epsilon$-optimal policy will be discovered after
\begin{align}
\widetilde O\left( \frac{\nc \na}{\epsilon^2} \right) \label{eq:bsac}
\end{align}
interactions, and there are also algorithms for which the regret satisfies
\begin{align}
\rg_T = \widetilde O( \sqrt{ \nc \na T } )\,. \label{eq:bret}
\end{align}
Here, the notation $\widetilde O(\cdot)$ hides polylogarithmic factors of the variables involved.
We say that the stochastic, finite, contextual bandit problem specified by $\rx$ is $\rk$-\emph{lumpable}
(or, in short, the bandit problem is \emph{context-lumpable})
if there is a partitioning of $[\nc]$ into
 $\rk\le \min\{\nc,\na\}$ groups such that $\rx(\cx,\cdot)=\rx(\cx',\cdot)$ holds whenever $\cx,\cx'\in [\nc]$ belong to the same group.
It is not hard to see that any algorithm needs at least $\Omega( \rk (\nc + \na)/\epsilon^2 )$ interactions to discover an $\epsilon$-optimal policy (\cref{thm:pac lower-bound}). %
Indeed, if we view $\rx$ as an $\nc \times \na$ matrix, the lumpability condition states that
$\rx = U V$ where $U$ is an $\nc \times \rk$ binary matrix where each row has a single nonzero element,
and $V$ is an $\rk \times \na$ matrix, which gives the unique mean rewards given the group indices. Hence, crude parameter counting suggests that there are $\rk(\nc+\na)$ parameters to be learned.
\begin{center}
\emph{
The question is whether $\nc \na$ in \cref{eq:bsac} and \cref{eq:bret} can be replaced with $(\nc+\na)\text{poly}(\rk)$ without knowing the grouping of the contexts.}
\end{center}

\lowrank{More generally, we can ask the same questions for contextual bandits with the \emph{low-rank} structure, where the matrix $\rx$ has rank $\rk$.
Equivalently, the low-rank condition means that we have the same decomposition $\rx= U V$ as above but no more constraints on $U$.
In the example of recommendation platforms, this assumption is more versatile as the users are modeled as \emph{mixtures} of $\rk$ preference types instead of belonging to one type only.
}

\subsection{Related works}
Our problem can be seen as an instance of contextual bandit problems introduced by \citet{auer2002nonstochastic}.
For a good summary of the history of the contextual bandit problem, the reader is advised to consult the article by \cite{TM17}.
Further review of prior works can also be found in the books of
\citet{slivkins2019introduction,lattimore2020bandit}.
\newcommand{\NE}{N}
Another way to approach context-lumpable stochastic bandits is to model them as stochastic linear bandits with changing action sets
\cite{Aue02,chu2011contextual,abbasi2011improved}.
However, lumpablility does not give improvements on the regret bound when running the standard algorithms in these settings (EXP4 and SupLinRel, respectively).
We provide more details in the appendix.

We are not the first to study the $\rk$-context-lumpable stochastic bandit problem.
The earliest work is probably that of \cite{maillard2014latent} who named this problem \emph{latent bandits}: they consider the group identity of the context as latent, or missing information. While the paper introduces this problem, the main theoretical results are derived for the case when the reward distributions given the group information are known to the learner. Further, the results derived are instance dependent. Altogether, while this paper makes interesting contributions, it does not help in answering our questions.%
\footnote{
Somewhat confusingly, \citet{HKZCAB-2020} \emph{define} latent bandit problems differently from
the definition given by \cite{maillard2014latent}: They assume that in addition to the context, a latent (unobserved) variable, influences the rewards, however, they also assume that the distributions of the reward given an action, context and latent state are all known. Their results therefore can not help us in answering our question.
}

The earlier work of \citet{gentile2014online} %
considered another generalization of our problem where in round $t=1,2,\dots$
the learner gets an action set $\{x_{t,1},\dots,x_{t,\na_t}\}\subset \R^d$ and the mean payoff  of choosing action $1\le j \le \na_t$ given the past and the current context $\cx_t$ is $x_{t,j}^\top \theta_{\ca(\cx_t)}$ with $\theta_1,\dots,\theta_{\rk}\in \R^d$ unknown.
When specializing this to our setting, their result depends on the minimum separation $\min_{i,j: \ca(i)\ne \ca(j)} \norm{\rx(i,\cdot)-\rx(j,\cdot)}$, which, according to the authors may be an artifact of their analysis. An extra assumption, which the authors suggest could be removed, is that the distribution of the contexts is uniform (as we shall see, removing this assumption is considerable work). Further, as the authors note, the worst-case for their algorithm is when the contexts are equipartitioned, in which case their bound reduces to the naive bound that one gets for non-lumpable problems.
\citet{li2016collaborative} considers the variation of the problem when the set of arms (and their feature vectors) are fixed and is known before learning begins and separation is defined with respect to this fixed set of arms (and hence could be larger than before). From our perspective, their theoretical results have the same weaknesses as the previous paper.
\citet{gentile17a} further generalizes these previous works to the case when the set of contexts is not fixed a priori. However, the previously mentioned weaknesses of the results remain. %

Very recently, \citet{pal2023optimal} studied the same context-lumpable bandit problem. The authors use similar algorithmic ideas together with an offline matrix completion oracle and obtain $\widetilde{O}(\sqrt{(S+K)T})$ regret when $r$ is constant, under additional incoherence assumptions on the reward matrix.
As a comparison, our algorithm does not require any matrix completion subroutines and does not require those incoherence assumptions.

Our problem is also a special case of learning in \emph{lumpable Markov Decision Processes}. In such Markov Decision Processes (MDPs)
the states can be partitioned such that states within a group of a partition behave identically, both in terms of the transitions and the rewards received \citep[e.g.,][]{RenKro02}.%
\footnote{Lumpability was first introduced first by \citet{kemeny1976finite} in the context of Markov chains.}
We put a survey of this line of work in \cref{app: MDP}.

In summary, although the problem is extensively studied in the literature, we are not aware of any previous results that have implications for the existence of the minimax regret bounds in terms of $\otilde{\sqrt{\poly{(\rk)(\nc+\na)T}}}$, which we believe is a fundamental question in this area.

\section{Notation and problem definition}\label{sec:defs}

For an positive integer $n$, we use $[n]$ to denote the set $\{1,\dots,n\}$.
Further, for a finite set $U$, $\Delta(U)$ denotes the set of probability distributions over $U$ and $\unif{U}$ denotes the uniform distribution over $U$. %
The set of real numbers is denoted by $\R$.

As introduced earlier, we consider context-lumpable bandits with $\nc$ contexts and $\na$ actions such that the $\nc$ contexts can be lumped into $\rk$ groups so that the mean payoff $\rx(\cx,\ac)$ given that
 action $\ac\in [\na]$ is used while the context is
$\cx\in [\nc]$  depends only on the identity of the group that $\cx$ belongs to and the action $\ac$.
That is, for some $\ca:[\nc] \to [\rk]$ map
\begin{align*}
\rx( \cx,\ac ) = \rx( \cx', \ac )
\qquad \text{for any } \ac\in [\na] ,\cx,\cx'\in [\nc] \text{ such that } \ca(\cx)=\ca(\cx')\,.
\end{align*}
Neither $\rx$, nor $\ca$ is known to the learner who interacts with the bandit instance in the following way:
In rounds $t=1,2,\dots$ the learner first observes a context $\cx_t\in [\nc]$ randomly chosen from a fixed, context distribution $\nu \in \Delta([\nc])$,  independently of the past. The distribution $\nu$ is also unknown to the learner.
Next, the learner chooses an action $\ac_t\in [\na]$ to receive a reward of
$
\rw_t = \rx( \cx_t,\ac_t ) + \eta_t\,,
$
where $\eta_t$ is $1$-subgaussian given the past: For any $\lambda \in \R$,
\begin{align*}
\EE{ \exp(\lambda \eta_t ) \mid \cx_1,\ac_1,\dots,\cx_t,\ac_t } \le \exp( \lambda^2/2)\, \text{almost surely (a.s.)}.
\end{align*}
As described earlier, we are interested in two problem settings.
In the \emph{PAC setting}, the learner is given a target suboptimality level $\epsilon>0$ and a target confidence $\delta$
The learner is then tasked to discover a policy $\pi:[\nc] \to [\na]$ that is $\epsilon$-optimal (cf. \cref{eq:sdef}), with probability $\delta$.
In this setting, in addition to deciding what actions to choose, the learner also needs to decide when to stop collecting data and when it stops it has to return a map $\hat \pi:[\nc] \to [\na]$. If $T$ is the (random) time when the learner stops, then the efficiency of the learner is characterized by $\EE{T}$. %
In the \emph{online setting}, the goal of the learner is to keep its regret (cf. \cref{eq:rdef}) low.
In \cref{sec:pac}, we will consider the PAC setting, while the online setting will be considered in \cref{sec:regret}.

In the remainder of the paper, we will also need some further notation.
The map $\ca$ induces a partitioning
$\partition^\star=\{\cls(1),\dots,\cls(\rk)\}$
 of $[\nc]$ in the natural way:
\[
    \cls(\cl)=\left\{\cx:\ca(\cx)=\cl\right\}.
\]
We call each subset $\cls\in\partition^\star$ a \emph{block} and call $\cls(\cl)$ block $\cl$ for every $\cl\in[\rk]$.
We also define block reward $\mu(\cl,\ac)=\rx(\cx,\ac)$ of block $\cl$ for every context $\cx\in\cls(\cl)$ and arm $\ac\in[\na]$.
Finally, we define the block distribution $\db\in\Delta(\rk)$ so that $\db(\cl)=\sum_{\cx\in\cls(\cl)}\dc(\cx)$.

\section{Near-optimal PAC Learning in Context-lumpable Bandits}
\label{sec:pac}

In this section, we present  an algorithm for PAC learning  in context-lumpable bandits, and prove that its  sample complexity guarantee  matches the  minimax rate up to a logarithmic factor.   

\subsection{Special case: almost-uniform context distribution}
To better illustrate the key ideas behind the algorithm design, we first consider a special case of almost-uniform context distribution. Formally, we assume $\nu(i)\in[1/(8S),8/S]$ for all $i\in[S]$ throughout this subsection. 
The pseudocode of our  algorithm is provided in Algorithm \ref{alg:pac-unif-context}, which consists of three main modules. Below we elaborate on each of them separately.

\paragraph{Data collection (\cref{line:pac-data-1}-\ref{line:pac-data-2} and \cref{alg:pac-unif-context-data}).}  The high-level goal of this step is to collect a sufficient amount of 
 data for every block/action pair so that in later steps we have sufficient information to select a near-optimal arm for each block. One naive approach is to try every action on every context for a sufficient amount of time (e.g., $\otilde{1/\epsilon^2}$).
 However, this will cause an undesired factor of $SK$ in the sample complexity. To address this issue, note that contexts from the same block share the same reward function, which means that for every block/action pair  $(b,j)\in[r]\times[K]$, we only need to sufficiently try action $j$ on a single context $i$ from block $b$ (i.e., $i\in\cB(b)$). 
 However, the block structure is unknown a priori. 
 We circumvent this problem by  assigning a random action $\psi(i)$ to each context $i$ for them to explore (\cref{line:pac-rnd-init}). And after action $\psi(i)$ has been tried on context $i$ sufficiently many times, we  update  $\psi(i)$ by reassigning context $i$ with a new random action (\cref{line:pac-rnd-re1}-\ref{line:pac-rnd-re3}).

However, there is still one key problem unresolved yet:  how many samples to use for estimating  each $A(i,\psi(i))$  before reassigning context $i$ with another random action, given a fixed budget of total sample complexity.  
On one hand, if we only try each $(i,\psi(i))$ pair a very few numbers of times before switching, then we can potentially explore a lot of different actions for each block but the accuracy of the reward estimate could be too low to identify a near-optimal policy. 
On the other hand, estimating each $A(i,\psi(i))$ with a huge number of samples resolves the poor accuracy issue but  could potentially cause  the problem of under-exploration, especially for those blocks consisting of very few contexts. 
In the extreme case, if a block only contains a single context, then our total budget may only afford to test a single action on that context (block).

To address this issue, we choose different  levels of  accuracy for different blocks adaptively depending on their block size. Specifically, contexts from larger blocks can afford to try each random action before switching for a larger number of times to achieve higher estimate accuracy because larger blocks consist of more contexts, which means that the average number of random actions in each context inside a larger block needs to try is smaller. 
And for smaller blocks, the case is the opposite. 
Finally, we remark that the above scheme of using adaptive estimate  accuracy  can be implemented without any prior knowledge of the block structure. We only need to iterate over different accuracy levels using an exponential schedule (\cref{line:pac-data-1}), and each block will be sufficiently explored at its appropriate accuracy level.
Specifically, let $N=\lceil\log(1/\epsilon^2)\rceil$, and consider accuracy levels $n\in [N]$. For accuracy level $n$, we collect a dataset $\cD_n$ by adding a context-action pair after the pair is played for $2^n$ timesteps (\cref{line:pac-rnd-re2}). 

\paragraph{Screening optimal action candidates (\cref{line:pac-screen-1}-\ref{line:pac-screen-2}).} Equipped with the dataset collected from Step 1, we want to identify a subset $\cA$ of $[K]$ so that (i) for every block $b\in[r]$, $\cA$ contains a near-optimal action of block $b$, and 
(ii) the size of $\cA$ is as small as possible.
We construct such $\cA$ in an iterative way. 
For each accuracy level $n\in[N]$, we first compute the context-action pair $(i^\star,j^\star)$ with the highest reward estimate in $\cD_n$ (\cref{line:pac-opt-1}). 
Intuitively, as $(i^\star,j^\star)$ has been tried for $2^n$ timesteps in constructing $\cD_n$, it is natural to guess that  $j^\star$ could potentially be a $\otilde{2^{-n/2}}$-optimal action for  certain blocks so we add it into optimal action candidate set $\cA$ (\cref{line:pac-opt-2}). To identify the contexts from those blocks, we sample new data to estimate the reward of $j^\star$ on each context $i\in[S]$.
This can be done by calling \cref{alg:pac-unif-context-data} and setting the exploration set $\mathcal{K}$ to be $j^\star$ (\cref{line: pac test j*}).
If a context $i$ can achieve reward close enough to $\widehat A_n(i^\star,j^\star)$ at action $j^\star$, then we peel off every $(i,j)\in\cD_n$ (\cref{line:pac-screen-2}).
This is because we have found $j^\star$ as an optimal action candidate for $i$ at this accuracy level and don't need to consider other $j$.
We repeat the above process, and add a new arm to $\cA$ in each iteration, until $\cD_n$ becomes empty. 

\paragraph{Solving the simplified context-lumpable bandit (\cref{line:step3}).}
After obtaining the optimal action candidate set $\cA$, we can discard all actions not in $\cA$ and solve a simplified context-lumpable bandit problem by replacing the original action set $[K]$ with $\cA$. Note that $\otilde{S|\mathcal{W}|/\epsilon^2}$ samples suffice for learning an $\epsilon$-optimal policy for this simplified problem. For example,  we can directly try each action $j\in\cA$  on each context $i\in[S]$ for $\widetilde{\Theta}(1/\epsilon^2)$ times, and define $\piout(i)\in\argmax_{j\in\cA} \bar{A}(i,j)$ where $\bar{A}(i,j)$ is the empirical estimate of $A(i,j)$ based on  $\widetilde{\Theta}(1/\epsilon^2)$ samples.

\begin{algorithm}[!ht]\caption{Algorithm for Almost-uniform Context Distribution}\label{alg:pac-unif-context}
\SetAlgoVlined
\nl Let $t$ denote the current time and initialize $N\leftarrow\lceil\log(1/\epsilon^2)\rceil$, $\mathcal{W}\leftarrow\emptyset$, $\lg\leftarrow\Lg$\\
\nl Define $\mathcal{K}(i)\leftarrow[K]$ for $i\in[S]$, $\nes\leftarrow{r(\nc+\na)\lg/\epsilon^2}$\\
{\color{blue} Step 1. Data collection}\\
\nl\label{line:pac-data-1}\For{accuracy level  $n=1,\ldots,N$}{
\nl\label{line:pac-data-2}
Execute \cref{alg:pac-unif-context-data} with input $\nes$, $n$, $\mathcal{K}$, and receive $\mathcal{D}_n$ and $\widehat{A}_n$
}
{\color{blue} Step 2. Screening optimal action candidates}\\
\nl\label{line:pac-screen-1}\For{accuracy level  $n=1,\ldots,N$}{
\nl \While{$\mathcal{D}_n\neq\emptyset$}{
\nl \label{line:pac-opt-1}Compute 
 $(i^\star,j^\star) \leftarrow \argmax_{(i,j)\in\mathcal{D}_n}\widehat A_n(i,j) $\\
\nl \label{line:pac-opt-2}Update optimal action candidates $\mathcal{W}\leftarrow\mathcal{W}\bigcup\{j^\star\}$\\
\nl Update $\mathcal{K}(i)\leftarrow\{j^\star\}$ for $i\in[S]$ and $L'\leftarrow 8\lg 2^nS$\\
\nl\label{line: pac test j*}Execute \cref{alg:pac-unif-context-data} with input $L'$, $n$, $\mathcal{K}$, and reassign output to $\widetilde{\mathcal{D}}_n$ and $\widetilde{A}_n$\\
 \nl\label{line:pac-screen-2}Shrink $\mathcal{D}_n \leftarrow \{(i,j)\in\mathcal{D}_n:~|\widetilde A_n(i,j^\star)-  \widehat A_n(i^\star,j^\star)|\ge \sqrt{\frac{\lg}{2^n}}\}$ %
}
}
{\color{blue} Step 3. Solving the simplified context-lumpable bandit}
\\
\nl \label{line:step3} Use $\frac{4S|\mathcal{W}|}{\epsilon^2} \log\frac{SK}{\delta}$ samples  to find $\piout$ s.t. $A(i,\piout(i)) \ge \max_{j\in\mathcal{W}} A(i,j) -\epsilon $ for all $i\in[S]$ \\
\nl \textbf{Output} $\piout$
\end{algorithm}

\begin{algorithm}[!ht]\caption{Data Collection}\label{alg:pac-unif-context-data}
\setcounter{AlgoLine}{0}
\SetAlgoVlined
\nl \textbf{Input} $\nes$, $n$, $\mathcal{K}$\\
\nl Let $t$ denote the current time and {initialize  $\mathcal{D}_n \leftarrow\emptyset$}\\
    \nl\For{context $\cx\in[\nc]$}{
        \nl\label{line:pac-rnd-init} Assign $\psi_t(\cx)$ to be an arm drawn from $\unif{\mathcal{K}(i)}$}
    \nl\label{line:pac-for}\For{$\nes$ timesteps}{
        \nl Receive context $\cx_{t}$, play arm $\ac_t=\psi(\cx_t)$, and receive reward $\rw_t$\\
    \nl Preset $\psi_{t+1}(i)=\psi_{t}(i)$ for every $i\in[S]$\\
    \nl  \label{line:pac-rnd-re1}\If{$(\sum_{\tau=1}^t \one(i_\tau=i_t))\%2^n=0$}{
     \nl\label{line:pac-rnd-re2}$\mathcal{D}_n\leftarrow \mathcal{D}_n \bigcup\{(i_t,\psi(i_t))\}$ and  $\widehat{A}_n(i_t,\psi(i_t))\leftarrow\frac{\sum_{\tau \le t}\rw_\tau\one[\cx_\tau=i_t,\ac_\tau=\psi(i_t)]}{\sum_{\tau\le t}\one[\cx_\tau=i_t,\ac_\tau=\psi(i_t)]}$\\
     \nl \label{line:pac-rnd-re3} Reassign $\psi_{t+1}(i_t)\sim\unif{\mathcal{K}(i_t)}$;
     }
    }
\nl \textbf{Output} $\mathcal{D}_n$ and $\widehat{A}_n$ %
\end{algorithm}

Now we present the theoretical guarantee for  Algorithm \ref{alg:pac-unif-context}. The proof and exact constants in the bound can be found in Appendix \ref{app:pac-proof}. 

\begin{theorem}\label{thm:pac}  Let $\delta\in (0,1)$ and assume $\nu(i)\in[1/(8S),8/S]$ for all $i\in[S]$. \cref{alg:pac-unif-context}  outputs an $\otilde{\epsilon}$-optimal policy 
 within $\otilde{r(S+K)\log(1/\delta)/\epsilon^2}$ samples with probability at least $1-\delta$. %
\end{theorem}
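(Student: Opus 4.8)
The plan is to fix once and for all a high-probability \emph{good event} $\goodevent$ and then establish the sample complexity and correctness separately on $\goodevent$. The event $\goodevent$ collects three families of statements. First, every empirical reward ($\widehat A_n$, $\widetilde A_n$, and the final $\bar A$) formed from $m$ samples deviates from its mean by at most $\tfrac12\sqrt{\lg/m}$; the factor $16$ inside $\lg=\Lg$ is exactly what makes the true subgaussian radius a constant fraction of the nominal width $\sqrt{\lg/m}$. Second, per-context visit counts concentrate, so in any call to \cref{alg:pac-unif-context-data} with budget $L$ each context $i$ is played $\Theta(\nu(i)L)$ times (multiplicative Chernoff, using $\nu(i)\ge 1/(8S)$). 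Third, a coupon-collector statement: since each recorded pair is assigned a uniformly random arm, a block $b$ that records $R_b(n)=\Theta(\omega(b)L/2^n)$ pairs at level $n$ has tried every arm -- in particular its best arm -- once $R_b(n)=\Omega(K\lg)$. A union bound over these events, whose slack is supplied by the $16\log(rSK/\delta)$ in $\lg$, keeps the total failure probability below $\delta$.

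For the sample complexity I will bound the three modules. Step~1 costs $N\cdot L=\widetilde O(r(S+K)/\epsilon^2)$ since $N=\lceil\log(1/\epsilon^2)\rceil$ is logarithmic. The crux is Step~2, where I claim the while-loop runs at most $r$ times per level. Indeed, if $(i^\star,j^\star)$ is the current maximizer of $\mathcal{D}_n$ and $i^\star\in\cB(b^\star)$, then every context $i\in\cB(b^\star)$ still present in $\mathcal{D}_n$ has $A(i,j^\star)=A(i^\star,j^\star)$; on $\goodevent$ the freshly collected $\widetilde A_n(i,j^\star)$ (whose budget $L'=8\lg 2^n S$ yields $\Omega(\lg 2^n)$ samples per context, hence a radius well below the threshold) lands within $\sqrt{\lg/2^n}$ of $\widehat A_n(i^\star,j^\star)$, so the whole block $b^\star$ is peeled in that single iteration. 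Thus each iteration clears an entire not-yet-cleared block, giving at most $r$ iterations; summing the per-iteration cost over levels gives $\sum_n r\cdot 8\lg 2^n S=\widetilde O(rS/\epsilon^2)$ because $\sum_n 2^n=O(2^N)=O(1/\epsilon^2)$. The same count yields $|\mathcal{W}|\le rN=\widetilde O(r)$, so Step~3 costs $\widetilde O(S|\mathcal{W}|/\epsilon^2)=\widetilde O(rS/\epsilon^2)$, and the three modules together use $\widetilde O(r(S+K)/\epsilon^2)$ samples.

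For correctness I will attach to each block $b$ a \emph{resolution level} $n_b$, the largest $n\le N$ with $R_b(n)=\Omega(K\lg)$, so that $2^{n_b}\asymp\min\{1/\epsilon^2,\ \omega(b)L/(K\lg)\}$. On $\goodevent$ block $b$ records its optimal arm $j_b^\star$ at level $n_b$, so a pair with estimate close to $\mu^\star(b):=\max_j\mu(b,j)$ sits in $\mathcal{D}_{n_b}$ until its context is peeled. Since the loop removes maximizers in decreasing order of estimated reward, the running maximum stays $\ge\mu^\star(b)-\sqrt{\lg/2^{n_b}}$ while that pair survives; hence the arm $j^\star$ that finally peels the context satisfies $\mu(b,j^\star)\ge\mu^\star(b)-O(\sqrt{\lg/2^{n_b}})$ and belongs to $\mathcal{W}$. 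Writing $\Delta_b=O(\sqrt{\lg/2^{n_b}})$ and substituting $2^{n_b}$ gives $\Delta_b=\widetilde O(\epsilon)$ for ``large'' blocks and $\Delta_b=\widetilde O(\epsilon\sqrt{K/(\omega(b)r(S+K))})$ for ``small'' ones. The last step is to average rather than to bound worst-case: the mean suboptimality of playing the best arm of $\mathcal{W}$ is $\sum_b\omega(b)\Delta_b$, and Cauchy--Schwarz with $\sum_b\omega(b)=1$ over $r$ blocks turns the small-block sum into $\widetilde O(\epsilon\sqrt{K/(r(S+K))})\sum_b\sqrt{\omega(b)}\le\widetilde O(\epsilon\sqrt{K/(S+K)})=\widetilde O(\epsilon)$. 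Because Step~3 returns $\piout$ within $\epsilon$ of $\max_{j\in\mathcal{W}}A(i,\cdot)$ on every context, $\piout$ is $\widetilde O(\epsilon)$-optimal in the sense of \cref{eq:sdef}.

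The conceptual heart -- and the step I expect to be hardest -- is this correctness argument: the insight that per-block accuracy may degrade with shrinking block mass provided only the $\omega$-weighted average of the gaps is $\widetilde O(\epsilon)$, with Cauchy--Schwarz over the $r$ blocks converting the adaptive-accuracy schedule into the near-optimal rate. The most delicate technical point is controlling the random arm assignment under a merely almost-uniform $\nu$: I must show, simultaneously for all blocks and all levels, that the number of distinct arms a block tries concentrates, so that $n_b$ is well defined and $j_b^\star$ is genuinely recorded -- this is where the visit-count concentration and the coupon-collector bound inside $\goodevent$ do the real work. Tracking the constants so that the confidence widths sit a fixed fraction below the peeling threshold (thereby justifying both the ``entire block is peeled'' claim and the gap accounting) is routine given the generous factor $16$ in $\lg$.
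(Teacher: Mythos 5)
Your proposal is correct and follows essentially the same route as the paper's proof: the good event combining subgaussian concentration, Bernstein visit-count bounds, and the coupon-collector argument mirrors Lemmas~\ref{lem:pac-2} and \ref{lem:sample actions}; the ``each peel clears an entire block'' count is Lemma~\ref{lem:pac-3}; the running-maximum argument for the quality of the peeling arm is Lemma~\ref{lem:pac-4}; and your resolution level $n_b$ with the $\omega$-weighted Cauchy--Schwarz averaging is exactly the paper's adaptive accuracy $\epsilon_b=\max\{1,1/\sqrt{r\omega(b)}\}\epsilon$ combined with Lemma~\ref{lem:pac-1}. The only cosmetic difference is that the paper explicitly sets aside blocks with $\omega(b)<\epsilon/r$ (contributing at most $\epsilon$ to the suboptimality) rather than folding them into the averaged bound.
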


\subsection{Extension to general context distribution}

\begin{algorithm}[!ht]\caption{Algorithm for General Context Distribution}\label{alg:pac-general-context} 
\setcounter{AlgoLine}{0}
\SetAlgoVlined
\nl Let $J=\frac{4S}{\epsilon}\log(S/\delta)$, $L=\lceil\log(S/\epsilon)\rceil$, $N=524\left(\log\frac{rSK}{\delta}\right) \cdot \left(1+2\log\frac{1}{\epsilon}\right)^2\cdot \frac{r(S+K)}{\epsilon^2}$ \\
\nl Estimate the context distribution by sampling $J$ contexts, and denote the estimate by $\hat\nu$ \\
\nl Split the context set into $L$ disjoint subsets $\{\mathcal{X}_l\}_{l\in[L]}$ where\vspace{-3mm}
$$\vspace{-2mm}
\mathcal{X}_l\leftarrow\begin{cases}
\{i\in[S]:~\hat\nu(i)\in (2^{-l-1},2^{-l}] \}, \quad & l\in [0: L-1]\\
\{i\in[S]:~\hat\nu(i)\in [0,2^{-l}] \}, & l=L
\end{cases}\vspace{-3mm}
$$
\nl \For{$l\in[L-1]$}{
\nl\label{line:learn pi l}Execute Algorithm \ref{alg:pac-unif-context} to learn policy $\pi_l$ for subset $\mathcal{X}_l$ from $N$ time steps
}
\nl \textbf{Output} $\piout$ such that $\piout$ equals to $\pi_l$ over $\mathcal{X}_l$ for $l\in [0: L-1]$, and arbitrary over $\mathcal{X}_L$
\end{algorithm}

In this subsection, we show how to generalize Algorithm \ref{alg:pac-unif-context} to handle general context distributions. We present the pseudo-code in Algorithm \ref{alg:pac-general-context}. 
The algorithm consists of two key steps. In the first step, we use $J=\otilde{S/\epsilon}$ samples to obtain an empirical estimate of the context distribution, denoted by $\hat\nu$. Then we  divide the context set into many disjoint subsets $\{\cX_l\}_{l\in [0:L]}$ such that inside each subset $\cX_l$, the conditional empirical context distribution is almost uniform. 
As a result, we can invoke Algorithm \ref{alg:pac-unif-context} to find a near-optimal policy $\pi_l$ for each subset $\cX_l$ (\cref{line:learn pi l}).
It requires $\otilde{r(\nc+\na)/\epsilon^2}$ time steps for every $\ell$ but we only use samples where contexts are from $\cX_l$ and ignore the rest.
Finally, we glue all $\pi_l$ together to get a policy $\piout$ that is near-optimal for the original problem. Formally, we have the following theoretical guarantee for Algorithm \ref{alg:pac-general-context}. The proof and exact constants are in \cref{app:pac-proof}.

\begin{theorem}\label{thm:pac-general}
Let $\delta\in (0,1)$. Algorithm \ref{alg:pac-general-context} outputs an $\otilde{\epsilon}$-optimal policy within $\otilde{r(S+K)\log(1/\delta)/\epsilon^2}$ samples with probability at least $1-\delta$.
\end{theorem}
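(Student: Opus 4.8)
The plan is to reduce, block by block along the dyadic partition, to the almost–uniform case already handled by \cref{thm:pac}, and then aggregate the per-block guarantees. First I would set up a good event $\goodevent$ on which the empirical distribution $\hat\dc$ built from the $J=\otilde{S/\epsilon}$ pilot samples is multiplicatively faithful on the non-negligible contexts: each count is a $\mathrm{Binomial}(J,\dc(i))$ variable, so whenever $\dc(i)\gtrsim\epsilon/S$ its mean is $\ge 2\log(S/\delta)$ and a multiplicative Chernoff bound plus a union bound over the $S$ contexts yields $\hat\dc(i)\in[\dc(i)/\sqrt2,\sqrt2\,\dc(i)]$. Two structural consequences follow on $\goodevent$. (i) For a non-tail block $\cX_l$ ($0\le l\le L-1$) every member has $\hat\dc(i)\in(2^{-l-1},2^{-l}]$, so true masses inside $\cX_l$ differ by a factor at most $2\cdot(\sqrt2)^2=4$, whence the conditional distribution $\dc(\cdot\mid\cX_l)$ lies in $[1/(4n_l),4/n_l]\subseteq[1/(8n_l),8/n_l]$ with $n_l:=|\cX_l|$ — exactly the almost-uniform hypothesis of \cref{thm:pac}. (ii) Any $i\in\cX_L$ has $\hat\dc(i)\le 2^{-L}\le\epsilon/S$, and a heavy context would have been excluded, so $\dc(\cX_L)\le S\cdot O(\epsilon/S)=O(\epsilon)$; by boundedness of the mean rewards, acting arbitrarily on $\cX_L$ costs only $O(\epsilon)$.

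Next I would apply \cref{thm:pac} to each sub-bandit obtained by restricting the contexts to $\cX_l$ and feeding Algorithm~\ref{alg:pac-unif-context} only those samples whose context lands in $\cX_l$. This sub-bandit has $n_l$ contexts, $K$ actions, at most $r$ blocks, and (by (i)) an almost-uniform conditional context distribution. During the $N$ rounds allotted to block $l$ the number of useful samples is $M_l\sim\mathrm{Binomial}(N,p_l)$ with $p_l:=\dc(\cX_l)$, concentrating at $M_l\ge Np_l/2$. The crucial point is that I must use \cref{thm:pac} in a budget-indexed (gracefully degrading) form: with $M_l$ samples it returns a policy $\pi_l$ whose suboptimality on the conditional problem is $\otilde{\sqrt{r(n_l+K)/M_l}}$, which is the same statement run with the block-specific target accuracy $\epsilon_l$ solving $M_l=\Theta(r(n_l+K)\log(1/\delta_l)/\epsilon_l^2)$. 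I would take $\delta_l=\delta/L$ and union bound over the $L$ invocations.

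The main work — and the reason no single uniform per-block accuracy is attainable within the budget — is the aggregation. Since the PAC objective is an expectation over $\dc$, the total suboptimality splits as $\sum_{l<L}p_l\cdot(\text{suboptimality of }\pi_l\text{ on }\dc(\cdot\mid\cX_l))+\dc(\cX_L)\cdot O(1)$. Capping each per-block suboptimality at the $O(1)$ reward range, the $l$-th head term is $\otilde{\sqrt{r(n_l+K)p_l/N}}$ in every regime: when $Np_l\ge r(n_l+K)$ this is the bound from the previous paragraph via $M_l\ge Np_l/2$, and when $Np_l<r(n_l+K)$ (a block too light to learn at accuracy $\epsilon$) the crude bound $p_l\cdot O(1)$ is itself $\le\otilde{\sqrt{r(n_l+K)p_l/N}}$ because $p_l\le r(n_l+K)/N$. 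A Cauchy–Schwarz step then gives $\sum_{l<L}\sqrt{(n_l+K)p_l}\le\sqrt{\sum_l(n_l+K)}\sqrt{\sum_l p_l}\le\sqrt{S+LK}$, using $\sum_l n_l\le S$ and $\sum_l p_l\le 1$, so the head contributes $\otilde{\sqrt{r(S+LK)/N}}$. Plugging in $N=\Theta(\log(rSK/\delta)(1+2\log(1/\epsilon))^2\,r(S+K)/\epsilon^2)$ with $L=\lceil\log(S/\epsilon)\rceil$ forces $\sqrt{r(S+LK)/N}\le\epsilon$, the factor $(1+2\log(1/\epsilon))^2$ being precisely what absorbs the extra $L$ and the union-bound logarithms. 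I expect this weighted aggregation, together with the gracefully degrading use of \cref{thm:pac}, to be the main obstacle: a naive ``act arbitrarily on under-sampled blocks'' argument only yields $\Theta(1/\log(1/\epsilon))$-suboptimality, which is \emph{not} $\otilde{\epsilon}$, so the light blocks must genuinely be learned to their (weaker) affordable accuracy and then down-weighted by $p_l$.

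Finally I would total the samples: $J=\otilde{S/\epsilon}$ for estimation plus $(L-1)N=\otilde{r(S+K)/\epsilon^2}$ for the block-learning phases (each a fixed $N$ rounds), and since $J\le\otilde{r(S+K)/\epsilon^2}$ the overall budget is $\otilde{r(S+K)\log(1/\delta)/\epsilon^2}$. A union bound over $\goodevent$, the $L$ concentration events for the counts $M_l$, and the $L$ invocations of \cref{thm:pac} at confidence $\delta/L$ keeps the total failure probability below $\delta$, so $\piout$ (equal to $\pi_l$ on each $\cX_l$ and arbitrary on $\cX_L$) is $\otilde{\epsilon}$-optimal with probability at least $1-\delta$.
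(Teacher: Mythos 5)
Your proposal is correct and follows essentially the same route as the paper's proof: a multiplicative concentration bound on the pilot estimate $\hat\nu$, dyadic bucketing into almost-uniform sub-problems, a budget-indexed invocation of \cref{thm:pac} on each bucket yielding accuracy $\epsilon/\sqrt{\nu(\mathcal{X}_l)}$, a Cauchy--Schwarz aggregation over buckets, and an $O(\epsilon)$ charge for the negligible tail $\mathcal{X}_L$. Your version is marginally more careful in tracking $|\mathcal{X}_l|$ in place of $S$ and in explicitly capping the suboptimality of under-sampled buckets, but these refinements do not change the argument or the final bound.
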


Note that we can always learn an $\epsilon$-optimal policy for any context-lumpable bandit within $\otilde{SK/\epsilon^2}$ samples by invoking any existing near-optimal algorithm for finite contextual bandits. 
As a result, by combining Theorem \ref{thm:pac-general} with the $\otilde{SK/\epsilon^2}$ upper bound,   we obtain that  
$\otilde{\min\{r(S+K),SK\}/\epsilon^2}$ samples suffice for learning any context-lumpable bandit, which according to the following theorem is minimax optimal up to a logarithmic factor.
\begin{theorem}\label{thm:pac lower-bound}
Learning an $\epsilon$-optimal policy for a context-lumpable bandit with probability no smaller than $1/2$ requires at least  
$\Omega(\min\{r(S+K),SK\}/\epsilon^2)$ samples in the worst case.
\end{theorem}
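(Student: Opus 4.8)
The plan is to build two separate families of hard $\rk$-lumpable instances, one that forces $\Omega(\rk\nc/\epsilon^2)$ samples and one that forces $\Omega(\rk\na/\epsilon^2)$ samples, and then to note that the larger of the two already dominates the target rate. Indeed, $\max\{\rk\nc,\rk\na\}\ge\tfrac12\rk(\nc+\na)\ge\tfrac12\min\{\rk(\nc+\na),\nc\na\}$, so a worst-case $\rk$-lumpable instance requires $\Omega(\min\{\rk(\nc+\na),\nc\na\}/\epsilon^2)$ samples. Both families are instances of one abstract construction consisting of $m$ \emph{parallel} best-arm-identification problems over $\ell$ arms each; I would apply it with $(m,\ell)=(\nc,\rk)$ for the first bound and with $(m,\ell)=(\rk,\na)$ for the second.

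In the parallel construction there are $m$ ``units'' carrying the uniform distribution. For the first bound the units are the $\nc$ contexts (each a potential singleton group); for the second they are $\rk$ equal-sized groups of contexts, and I reveal the grouping to the learner, which can only help it. Each unit $u$ is assigned an independent, uniformly random planted arm $a^\star_u\in[\ell]$: playing $a^\star_u$ on a context of unit $u$ returns a reward shifted by $2\epsilon$, while every other arm returns the baseline. In both instantiations the mean-reward matrix has at most $\rk$ distinct rows, so the instance is genuinely $\rk$-lumpable. The structural property that makes the construction work is that a sample from unit $u'$ is independent of $a^\star_u$ for $u'\ne u$; consequently, whatever adaptive and interleaved schedule the learner uses, the posterior of $a^\star_u$ — and hence the quality of its action $\pi(u)$ on unit $u$ — depends only on the pulls it spends inside unit $u$.

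The heart of the matter is a single-problem lemma: for one $\ell$-armed instance with a uniformly planted $2\epsilon$-gap arm, any adaptive, randomly-stopped strategy that returns the planted arm with probability $p$ must use $\E[\text{pulls}]\ge c\,(p-1/\ell)\,\ell/\epsilon^2$ for an absolute constant $c$. I would prove this by a change-of-measure/testing argument: deciding whether a given arm is the planted one rather than a baseline arm requires $\Omega(1/\epsilon^2)$ pulls of that arm, so if $N_a$ is the number of pulls of arm $a$, then the success probability cannot exceed the blind-guess value $1/\ell$ plus $\E[\#\{a:N_a\ge c_0/\epsilon^2\}]/\ell\le \epsilon^2\,\E[\sum_a N_a]/(c_0\ell)$; rearranging gives the displayed bound. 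Writing $T_u$ for the pulls on unit $u$ and $p_u=\Pr[\pi(u)=a^\star_u]$, and using the reduction of the previous paragraph to apply this lemma to each unit in isolation, summation yields $\E[\sum_u T_u]\ge (c\ell/\epsilon^2)\sum_u(p_u-1/\ell)$.

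Finally I would convert the PAC guarantee into a bound on $\sum_u p_u$. Under this instance an $\epsilon$-optimal policy has average suboptimality $2\epsilon\,\Pr_{\nu}[\pi(u)\ne a^\star_u]\le\epsilon$, so on every successful run it is correct on at least half of the (uniformly weighted) units; since success has probability at least $1/2$, we get $\sum_u p_u=\E[\#\text{correct units}]\ge m/4$, and for $\ell\ge 8$ this gives $\sum_u(p_u-1/\ell)\ge m/8$, hence $\E[\sum_u T_u]=\Omega(m\ell/\epsilon^2)$. Substituting $(m,\ell)=(\nc,\rk)$ and $(m,\ell)=(\rk,\na)$ finishes both bounds. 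I expect the main obstacle to be the single-problem lemma: making rigorous, through a careful change of measure, the claim that the learner cannot identify an arm it has not tested $\Omega(1/\epsilon^2)$ times, and certifying that adaptivity and cross-unit pooling cannot defeat the per-unit reduction — which is precisely the independence the construction is engineered to supply. The degenerate regimes $\rk<8$ or $\na<8$ are not covered by this counting and would be dispatched separately by the elementary single-bandit lower bound, but these only concern constant-sized factors and not the $\rk(\nc+\na)$ scaling that is the point of the theorem.
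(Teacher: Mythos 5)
Your proposal is correct and follows essentially the same route as the paper: your two instance families (each context independently planted with one of $r$ good arms, and $r$ revealed groups each planted with one of $K$ good arms) are exactly the paper's Case (2) and Case (3) constructions, reduced to standard per-unit best-arm-identification lower bounds. The only differences are presentational: the paper simply cites the standard contextual-bandit lower bound where you sketch the change-of-measure counting, and it splits into three cases according to which term attains the minimum where you instead observe that $\max\{rS,rK\}\ge \tfrac12 r(S+K)\ge\tfrac12\min\{r(S+K),SK\}$.
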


\section{Regret Minimization in Context-lumpable Bandits}\label{sec:regret}

In this section, we extend the idea from the PAC setting to the online setting.
To better introduce the key ideas, we first consider a special case when both context and block distributions are uniform (\cref{sec: regret multiple blocks}).
Then we consider the most general case in \cref{sec: regret non-uniform}.

\subsection{Special Case: Uniform Context and Block Distribution}\label{sec: regret multiple blocks}
In this section, we assume that distributions $\dc$ and $\db$ are uniform, and thus, $\ca$ evenly splits the contexts into $\rk$ blocks so that there are $\nc/\rk$ contexts in each block and every context appears with the same probability at each timestep.
We will relax these assumptions and consider the general case in the next subsection. %

For this case, we introduce \cref{alg: multiple blocks}, which uses phased elimination in combination with a clustering procedure. The algorithm runs in phases $h=1,2,\dots$ that are specified by error tolerance parameter $\epsilon_h=2^{-h/2}$.
Like phased elimination algorithms for multi-armed bandits, we need to ensure at phase $h$ actions the algorithms play are all $\otilde{\epsilon_h}$-optimal. 
Thus, at the end of each phase $h$, we eliminate all actions that are not $\otilde{\epsilon_h}$-optimal.
However, the set of $\otilde{\epsilon_h}$-optimal actions of each block is different. 
Therefore, we also perform \emph{clustering} on contexts and perform elimination for each subset of the partition.
Specifically, we maintain a partition of contexts $\partition_h$ at each phase $h$ and initialize $\partition_1=\{[\nc]\}$.
For each cluster $\cs\in\partition_h$, we maintain a set of good arms $\good_h(\cs)$, which we will prove is $\otilde{\epsilon_h}$-optimal for contexts in the cluster with high probability. %

We use \cref{alg:pac-unif-context-data} to collect data similar to \cref{alg:pac-unif-context} (\cref{line: regret uniform data}).
At phase $h$, we try to estimate mean reward up to error $\otilde{\epsilon_h}$ with probability $1-\delta_h$.
The difference is that we assume $\db$ is uniform, so we don't need different accuracy levels $n$, which will be required for the algorithm that handles the general case.
Also, for every context $i\in\cs$, we only explore arms \emph{good for now}, that is, in $\good_h(\cs)$ instead of exploring all the arms $[K]$.
This reflects that in the online setting, we need to minimize regret and cannot afford to explore bad arms too many times.

Based on the data we collect during the exploration stage, we then check if there is a large gap across contexts in the same subset for any arms (\cref{line: detect large gaps - multiple}).
A large gap suggests that (i) the subset contains at least two blocks (ii) the mean reward of the arm is significantly different in these blocks and we can use this arm to partition the contexts by running \cref{alg: the cluster algorithm - multiple} (clustering stage).
We repeatedly do clustering (\cref{line: separable - multiple}) and split heterogeneous subsets (\cref{line:init good}) until we cannot find a large gap within the same subset.
If no large gap is detected, then each arm has similar mean rewards (up to error $\otilde{\epsilon_h}$) across all blocks in the same subset. Then we eliminate arms that are significantly worse than the empirical best arm (elimination stage) from $\good_h(\cs)$ for every subset $\cs$ and start a new episode (\cref{line: perform arm elimination - multiple}).

\cref{alg: the cluster algorithm - multiple} plays $\ac$ for each context $i\in\cs$ for $\otilde{{1}/{\epsilon^2}}$ rounds and calculates empirical means of arm $j$ for each context in $\cs$. It then sorts the contexts by the empirical means and performs clustering (\cref{line: sort means}). Specifically, the algorithm enumerates contexts in descending order of empirical means and splits contexts until a large gap between consecutive values is detected (\cref{line: clustering large gap}).
As we call \cref{alg: the cluster algorithm - multiple} only if a large gap is detected, we prove that in the appendix it correctly separates the subset into at least two parts without putting any contexts in the same block into different parts by setting $\epsilon'=\epsilon_h/r$. 

\begin{algorithm}[!ht]\caption{Algorithm for Uniform Block Distribution}\label{alg: multiple blocks}
\setcounter{AlgoLine}{0}
\SetAlgoVlined
\nl
Initialize $\partition_1\leftarrow\{[\nc]\}$, $\good_1(\cs)\leftarrow[\na]$ for $\cs\in\partition_1$\\
\nl Let $t$ denote the current time\\
\nl\For{phase $h=1,2,\dots,$}{
    \nl Let $\epsilon_h\leftarrow2^{-h/2}$, $\delta_h\leftarrow \epsilon_h^2/(r^3SK)$, $\lg_h\leftarrow 64\log(rSK/\delta_h)$, $\epst_h\leftarrow\sqrt{\lg_h}\cdot\epsilon_h$\\[0.1cm]
    {\color{blue} Step 1. Data collection}\\[0.1cm]
    \nl Define $\nes_h\leftarrow\tfrac{r(S+K)\lg_h}{\epsilon_h^2}$,\quad $n_h\leftarrow \log(\tfrac{1}{\epsilon_h^2})$,\quad $\mathcal{K}_h(i)\leftarrow\good_h(\cs)$, $\cs\in\partition_h$, $\cs\ni i$, $\forall i\in[S]$\\
    \nl\label{line: regret uniform data}Execute \cref{alg:pac-unif-context-data} with input $\nes_h$, $n_h$, $\mathcal{K}_h$, and receive $\mathcal{D}_h$ and $\widehat{A}_{h}$\\[0.05cm]
    {\color{blue} Step 2. Test homogeneity and perform clustering on heterogeneous subsets}\\ [0.1cm]
    \nl\label{line: detect large gaps - multiple}\While{$\exists\,\cs\in\partition_h, \overline{i},\underline{i}\in\cs,\ac\in[K]$ such that $\widehat{A}_{h}(\overline{i},\ac)-\widehat{A}_{h}(\underline{i},\ac)\ge \epst_h$}{
        \ \\
        \nl Define $\epsilon'\leftarrow \frac{\epsilon_h}{4\rk}$, $\delta'\leftarrow\frac{\delta_h}{\rk}$, $\mathcal{K}(i)\leftarrow\{j\}$ if $i\in\cs$ else $\good_h(\cs)$ for  $\cs\in\partition_h$, $\cs\ni i$\\[0.2cm]
        \nl\label{line: separable - multiple}Execute {\cref{alg: the cluster algorithm - multiple}} with input $\epsilon'$, $\delta'$, $\mathcal{K}$, $\cs$, and $\ac$, and get $\partition$, a partition of $\cs$ \\[0.2cm]
        \nl\label{line:init good}Initialize $\good_h(\cs')\leftarrow\good_h(\cs)$, $\forall\cs'\in\partition$ and update $\partition_h\leftarrow(\partition\cup\partition_h)\backslash\{\cs\}$ %
    }
    {\color{blue} Step 3. Eliminate suboptimal actions in each subset}\\[0.1cm]
    \nl $\partition_{h+1}\leftarrow\partition_h$\\[0.1cm]
    \nl\For{$\cs\in\partition_{h+1}$}{
        \nl Calculate $\muu_h(\cs,\ac)\leftarrow\max_{\cx:\cx\in\cs,(i,j)\in\mathcal{D}_h}\rxh_h(\cx,\ac)$, $\forall j\in\good_h(\cs)$\\
        \nl\label{line: perform arm elimination - multiple}Update $\good_{h+1}(\cs)\leftarrow\left\{\ac:\ac\in\good_h(\cs),~\max_{j'}\muu_h(\cs,\ac')-\muu_h(\cs,\ac)\le 2\epst_h\right\}$
    }
}
\end{algorithm}

\begin{algorithm}\caption{Split Contexts Into Multiple Blocks}\label{alg: the cluster algorithm - multiple}
\setcounter{AlgoLine}{0}
\SetAlgoVlined
\nl\textbf{Input}: error $\epsilon'$, confidence $\delta'$, exploration sets $\mathcal{K}$, subset $\cs$, separating arm $\ac$\\
\nl Initialize $\lg\leftarrow 64\log(S/\delta')$, $L'\leftarrow{{\nc\lg}/{\epsilon'^2}}$, $n'\leftarrow\lceil1/\epsilon'^2\rceil$\\
\nl Execute \cref{alg:pac-unif-context-data} with input $L'$, $n'$, $\mathcal{K}$, and reassign output to ${\mathcal{D}}$ and $\rxh$\\
\nl\label{line: sort means}Sort contexts in $\cs$ and label them as $\cx_1,\dots,\cx_{|\cs|}$ so that $\rxh(\cx_1,j)\ge \rxh(\cx_2,j)\ge\cdots\ge\rxh(\cx_{|\cs|},j)$\\
\nl Initialize $\partition_1\leftarrow\{\cx_1\}$, $\cl\leftarrow 1$\\
\nl\For{$k=2,3,\dots,|\cs|$}
{
\nl\label{line: clustering large gap}    \If{$\rxh({\cx_{k-1},j})-\rxh({\cx_{k}},j)\ge\sqrt{\lg}\cdot \epsilon'$}{
        \nl Update $\cl\leftarrow \cl+1$ and initialize $\partition_{\cl}\leftarrow\{\}$
    }
    \nl $\partition_{\cl}\leftarrow \partition_{\cl}\cup\{\cx_k\}$
}
\nl \textbf{Output} $\{\partition_1,\dots,\partition_\cl\}$
\end{algorithm}
Similar to the analysis of other phased elimination algorithms, we have to show that in a phase specified by error level $\epsilon_h$, with high probability, (i) the optimal arm is not eliminated and (ii) all $\widetilde{\omega}(\epsilon_h)$-suboptimal arms are eliminated, that is, all arms in $\good_h(\cs)$ for all $\cs$ are  $\otilde{\epsilon_h}$-optimal.
We show the final regret here and defer the details to \cref{app: regret multiple blocks}

\begin{theorem}\label{thm: regret multiple}
    Under the assumption that context distribution $\dc$ and block distribution $\db$ are uniform, regret of \cref{alg: multiple blocks non-uniform} is bounded as $\rg_T=\otilde{\sqrt{\rk^3(\nc+\na)T}}$.
\end{theorem}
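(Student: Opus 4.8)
The plan is to run the standard phased-elimination regret argument, but adapted so that the eliminated ``arm set'' $\good_h(\cs)$ is maintained per cluster $\cs\in\partition_h$, and so that the exploration spent inside the clustering subroutine is accounted for separately from the ordinary data collection. The first step is to fix a good event $\goodevent$ on which all empirical means returned by \cref{alg:pac-unif-context-data} concentrate at their intended accuracy. Because the context and block distributions are uniform, running \cref{alg:pac-unif-context-data} with budget $\nes_h=\rk(\nc+\na)\lg_h/\epsilon_h^2$ samples every explored (context, arm) pair $\otilde{1/\epsilon_h^2}$ times, so a subgaussian tail bound gives $|\rxh_h(i,j)-\rx(i,j)|\le\epsilon_h$ simultaneously over all such pairs with failure probability controlled by $\delta_h$; the same holds for the finer estimates at accuracy $\epsilon'=\epsilon_h/(4\rk)$ used inside \cref{alg: the cluster algorithm - multiple}. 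A union bound over all phases and over the at most $\rk$ clustering calls per phase (each at confidence $1-\delta_h/\rk$) keeps $\Pr[\goodevent^c]$ tiny, since $\sum_h\delta_h=\sum_h\epsilon_h^2/(\rk^3\nc\na)$ is a convergent geometric series.

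On $\goodevent$ I would then establish, by induction on $h$, the two structural invariants that drive the whole analysis. First, clustering never splits a block and always makes progress: whenever the test on \cref{line: detect large gaps - multiple} fires for cluster $\cs$ and arm $j$, the empirical range of $\rxh_h(\cdot,j)$ over $\cs$ exceeds $\epst_h$; since $\cs$ is a union of at most $\rk$ blocks and contexts of one block have equal true rewards, sorting the finer estimates and cutting at gaps $\ge\sqrt{\lg}\,\epsilon'$ with $\epsilon'=\epsilon_h/(4\rk)$ is calibrated so that two contexts of the same block (empirical gap $\le\sqrt{\lg}\,\epsilon'$ under $\goodevent$) are never separated, while a pigeonhole argument on the $\ge\epst_h$ span distributed over $\le\rk$ block-values forces at least one between-block gap of size $\ge\epst_h/\rk>\sqrt{\lg}\,\epsilon'$, so the cluster is genuinely refined. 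Second, once no test fires, every arm's empirical reward is $\epst_h$-flat across $\cs$, so the max-over-contexts statistic $\muu_h(\cs,\cdot)$ used on \cref{line: perform arm elimination - multiple} agrees with the per-context reward up to $\otilde{\epsilon_h}$; standard elimination bookkeeping then shows the best arm of the (now homogeneous) cluster survives and that every surviving arm of $\good_{h+1}(\cs)$ is $\otilde{\epsilon_h}$-optimal for all contexts in $\cs$. Together with the inheritance of $\good_h$ to child clusters, this yields the crucial guarantee that in phase $h$ every arm the algorithm plays is $\otilde{\epsilon_h}$-optimal.

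With the per-step regret bounded by $\otilde{\epsilon_h}$ throughout phase $h$, I would split the regret of phase $h$ into (a) data collection, costing $\nes_h\cdot\otilde{\epsilon_h}=\otilde{\rk(\nc+\na)/\epsilon_h}$, and (b) clustering, where each call to \cref{alg: the cluster algorithm - multiple} spends $\nc\lg/\epsilon'^2=\otilde{\rk^2\nc/\epsilon_h^2}$ samples, hence $\otilde{\rk^2\nc/\epsilon_h}$ regret. The key accounting observation is that since clustering is monotone and never splits a block, the total number of clustering calls across all phases is at most $\rk-1$; therefore $\sum_h(\text{calls at }h)/\epsilon_h\le(\rk-1)/\epsilon_H$, where $H$ is the last phase before time $T$. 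Summing the geometric series $\sum_h 1/\epsilon_h=\otilde{1/\epsilon_H}$, the total high-probability regret is $\otilde{\big(\rk(\nc+\na)+\rk^3\nc\big)/\epsilon_H}$, while the total sample count obeys $T=\otilde{\big(\rk(\nc+\na)+\rk^3\nc\big)/\epsilon_H^2}$. Writing $M=\rk(\nc+\na)+\rk^3\nc\le 2\rk^3(\nc+\na)$ and eliminating $\epsilon_H$ via $1/\epsilon_H=\otilde{\sqrt{T/M}}$ gives $\rg_T=\otilde{\sqrt{MT}}=\otilde{\sqrt{\rk^3(\nc+\na)T}}$. On $\goodevent^c$ the regret is bounded trivially, and the calibrated $\delta_h$ ensures this contributes only a lower-order term to the expected regret.

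The main obstacle is the clustering-correctness step: getting the constant $\epsilon'=\epsilon_h/(4\rk)$ to simultaneously never separate two contexts of the same block and always split a heterogeneous cluster, uniformly over all $\le\rk$ calls and all phases, is the delicate part, since it is exactly this $1/\rk$ shrinkage of the accuracy target that inflates the clustering sample count by $\rk^2$ and, together with the at-most-$\rk$ calls, produces the extra $\rk^2$ that turns the naive $\sqrt{\rk(\nc+\na)T}$ into $\sqrt{\rk^3(\nc+\na)T}$. A secondary subtlety is verifying that arms in $\good_h(\cs)$ stay $\otilde{\epsilon_h}$-optimal for every context even while a cluster straddles several blocks before it is split, which is what legitimizes charging only $\otilde{\epsilon_h}$ per step during the expensive clustering rounds.
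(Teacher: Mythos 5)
Your proposal is correct and follows essentially the same route as the paper: a concentration good event, the clustering lemma showing that with $\epsilon'=\epsilon_h/(4\rk)$ a triggered split never separates a block yet always refines the partition (hence at most $\rk$ calls), the elimination lemmas guaranteeing every played arm is $\otilde{\epsilon_h}$-optimal, and a per-phase charge of $\otilde{\epsilon_h}\times(\text{phase length})$. The only substantive difference is the final conversion: the paper bounds each phase by $\epst_h\min\{T,\otilde{\rk^3(\nc+\na)/\epsilon_h^2}\}\le\otilde{\sqrt{\rk^3(\nc+\na)T}}$ and sums over $O(\log T)$ phases, which avoids your step ``$1/\epsilon_H=\otilde{\sqrt{T/M}}$'' — that step implicitly needs $T\gtrsim(\rk(\nc+\na)+\rk^3\nc)/\epsilon_H^2$ as a \emph{lower} bound, and the $\rk^3\nc$ clustering portion of that budget is not guaranteed to have been spent near phase $H$ (a per-phase $\min$ or a Cauchy--Schwarz over the clustering calls repairs this).
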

Compared to our PAC result, we get an extra dependency on $\rk$.
This is because we pay $\otilde{S/\epsilon'^2}=\otilde{{\rk^2S}/{\epsilon_h^2}}$ samples to do clustering instead of $\otilde{{1}/{\epsilon_h^2}}$ in order to ensure a ``perfect'' partition, that is, we never separate contexts in the same block with high probability.
This is crucial for our phase-elimination algorithm as we may call \cref{alg: the cluster algorithm - multiple} in different phases.
We left getting better than $\otilde{\sqrt{\rk^3(\nc+\na)T}}$ regret upper bounds or better than ${\Omega}({\sqrt{r(\nc+\na)T}})$ regret lower bounds (even for this uniform special case) as an important future direction.

\subsection{Non-uniform Context and Block Distribution}\label{sec: regret non-uniform}
Similar to the PAC learning setting, we can use \cref{alg:pac-general-context} to reduce general context distributions to (nearly) uniform ones.
We provide more details in \cref{app: context non-uniform}.
As a result, without loss of generality, we may assume $\dc$ is almost-uniform, and we focus on how to handle non-uniform block distribution $\db$.
In the extreme, there may only be one context for some blocks, which becomes challenging to estimate their mean rewards. 

For this case, we introduce \cref{alg: multiple blocks non-uniform}.
Intuitively, based on \cref{alg: multiple blocks}, we can further employ different accuracy levels as \cref{alg:pac-unif-context}.
However, as our goal becomes minimizing regret, it is difficult to control regret for smaller $n$ and larger blocks. 
Specifically, for smaller $n$, we explore more actions (with fewer samples) for each context in a single phase.
Since fewer samples are used, we may unavoidably play suboptimal actions and suffer large regret.
This becomes worse for a large block as more contexts in the block suffer large regret. 
We note that this is not a problem in the PAC setting because we only need to control sample complexity. 
\begin{algorithm}[!ht]\caption{Algorithm for Non-uniform Block Distribution}\label{alg: multiple blocks non-uniform}
\setcounter{AlgoLine}{0}
\SetAlgoVlined
\nl
Initialize $\partition_1\leftarrow\{[\nc]\}$, $\good_{1,1}(\cs)\leftarrow[\na]$, $\cs\in\partition_1$\\
\nl Let $t$ denote the current time\\
\nl\For{phase $h=1,2,\dots,$}{
    \nl Let $\epsilon_h\leftarrow2^{-h/2}$, $N_h\leftarrow h$, $\delta_h\leftarrow \epsilon_h^2/(r^3SK)$, $\lg_h\leftarrow 128\log(rSKN_h/\delta_h)$\\[0.1cm]
    {\color{blue} Step 1. Data collection}\\
    \nl \For{accuracy level $n=1,\ldots,N_h$}{
    \ \\
        \nl Define $\nes_{h,n}\leftarrow{r(S+K)\lg_h 2^{(n+h)/2}}$, $\mathcal{K}_{h,n}(i)\leftarrow\good_{h,n}(\cs)$, $\cs\in\partition_h$, $\cs\ni i$, $\forall i\in[S]$\\
        \nl Execute \cref{alg:pac-unif-context-data} with input $\nes_h$, $n$, $\mathcal{K}_{h,n}$, and receive $\mathcal{D}_{h,n}$ and $\widehat{A}_{h,n}$%
    }
    {\color{blue} Step 2. Test homogeneity and perform clustering on heterogeneous subsets}\\ [0.1cm]
    \nl\label{line: detect large gaps - non-uniform}\While{$\exists\,\cs\in\partition_h, \overline{i},\underline{i}\in\cs,\ac\in[K]$, $n\in[N_h]$ such that $\widehat{A}_{h,n}(\overline{i},\ac)-\widehat{A}_{h,n}(\underline{i},\ac)\ge \sqrt{\frac{\lg_h}{2^{n}}}$}{
        \ \\
        \nl Define $\epsilon'\leftarrow \frac{\epsilon_h}{4\rk}$, $\delta'\leftarrow\frac{\delta_h}{\rk}$, $\mathcal{K}(i)\leftarrow\{j\}$ if $i\in\cs$ else $\good_{h,n}(\cs)$ for  $\cs\in\partition_h$, $\cs\ni i$\\[0.2cm]
        \nl Execute {\cref{alg: the cluster algorithm - multiple}} with input $\epsilon'$, $\delta'$, $\mathcal{K}$, $\cs$, and $\ac$, and get $\partition$, a partition of $\cs$ \\[0.2cm]
        \nl Initialize $\good_{h,n}(\cs')\leftarrow\good_{h,n}(\cs)$, $\forall\cs'\in\partition$ and update $\partition_h\leftarrow(\partition\cup\partition_h)\backslash\{\cs\}$ %
    }
    {\color{blue} Step 3. Eliminate suboptimal actions in each subset}\\[0.1cm]
    \nl $\partition_{h+1}\leftarrow\partition_h$, $\good_{1,1}(\cs)\leftarrow[\na]$, $\cs\in\partition_{h+1}$\\[0.1cm]
    \nl \For{accuracy level $\lv=2,\dots,N_h$}{
        \nl\For{$\cs\in\partition_{h+1}$}{
            \nl Calculate $\muu_{h,n}(\cs,\ac)\leftarrow\max_{\cx:\cx\in\cs,(i,j)\in\mathcal{D}_{h,n}}\rxh_{h,n}(\cx,\ac)$, $\forall j\in\good_{h,n}(\cs)$\\
            \nl Let $\mathcal{G}_{h,n}(\cs)\leftarrow\left\{\ac:\ac\in\good_{h,n}(\cs),~\max_{j'}\muu_{h,n}(\cs,\ac')-\muu_{h,n}(\cs,\ac)\le 2\sqrt{\frac{\lg_h}{2^n}}\right\}$\\
            \nl\label{line:inclusion}Update $\good_{h+1,n}(\cs)\leftarrow\good_{h+1,n-1}(\cs)\cap\mathcal{G}_{h,n}(\cs)$
        }
    }
}
\end{algorithm}

We fix the issue by setting different lengths $L$ for different accuracy levels.
Recall in \cref{alg:pac-unif-context}, we use the same length $L=\otilde{r(S+K)/\epsilon^2}$ for every $n$.
Intuitively, since we allow less accurate estimations for smaller $n$, we may use fewer data.
It turns out indeed we can set $\nes={\otilde{r(S+K)2^{(n+h)/2}}}$ for level $n$ at phase $h$, and with more refined analysis, it provides similar guarantees as before. 
More importantly, since smaller $n$ uses (exponentially) fewer samples, the overall regret is well controlled.

In addition to setting the lengths sophisticatedly, we also need to carefully maintain sets of good actions $\good_{h,n}$ not only at each phase $h$ but also at each accuracy level $n$.
The partition of contexts $\partition_h$, however, only evolves with phases and is shared between different levels at the same phase. 
Similar to \cref{alg: multiple blocks}, we also do clustering (Step 2) and elimination (Step 3) but do the procedures in parallel for every accuracy level $\lv$.
In the clustering stage, we use different thresholds to define a ``large gap''. This reflects that $n$ represents different accuracy levels and thus has different widths of confidence intervals. 
For the elimination stage, we enforce the inclusion relation $\good_h(\lv,\cs)\subseteq\good_h(\lv',\cs)$ for $\lv'\le\lv$ (\cref{line:inclusion}), which will be useful in the regret analysis.
We now present the main theorem and put the complete proof in \cref{app: regret non-uniform}.
\begin{theorem}\label{thm: regret non-uniform}
Regret of \cref{alg: multiple blocks non-uniform} is bounded as $\rg_T=\otilde{\sqrt{\rk^3(\nc+\na)T}}$ \;.
\end{theorem}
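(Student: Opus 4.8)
The plan is to mount a phased-elimination analysis in the spirit of \cref{thm: regret multiple}, but with two extra ingredients needed to cope with the non-uniform block distribution $\db$: the per-level sample budgets $\nes_{h,n}=\rk(\nc+\na)\lg_h 2^{(n+h)/2}$ and the nested good-arm sets $\good_{h,n}$. First I would dispose of the context distribution exactly as in \cref{alg:pac-general-context} and \cref{thm:pac-general}: one buckets the contexts by an empirical estimate $\hat\nu$ into $\otilde{1}$ groups on which $\dc$ is almost uniform, runs \cref{alg: multiple blocks non-uniform} on each group, and glues the policies, so that WLOG I may assume $\dc(i)\in[1/(8\nc),8/\nc]$ throughout at the cost of logarithmic factors. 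I would then fix the high-probability good event $\goodevent$ on which every recorded estimate $\widehat A_{h,n}(i,j)\in\mathcal{D}_{h,n}$ deviates from its mean by at most $\sqrt{\lg_h/2^{n}}$; the confidence schedule $\delta_h=\epsilon_h^2/(\rk^3\nc\na)$ and $\lg_h=128\log(\rk\nc\na N_h/\delta_h)$ are calibrated precisely so that a union bound over all $(h,n,i,j)$ holds and so that the residual failure events contribute only lower-order terms to $\rg_T$ (via the bookkeeping $\delta_h\cdot\ell_h=\otilde{1}$ per phase, where $\ell_h$ is the phase length).

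The combinatorial heart is an invariant maintained across phases. For each block $\cls(\cl)$ I would define its \emph{resolution level} $n^\star(\cl)$ as the finest accuracy level at which the budget $\nes_{h,n}$ yields, with high probability, at least one $2^n$-sample recording of every arm in $\good_{h,n}$ on some context of the block; a counting argument using $\dc(i)\approx 1/\nc$ and $\db(\cl)=\sum_{i\in\cls(\cl)}\dc(i)$ shows $2^{(h-n^\star(\cl))/2}=\otilde{1/(\db(\cl)\rk)}$, so that larger blocks are resolvable at finer levels. Under $\goodevent$ I would prove by induction on $h$ that (i) the optimal arm of every block survives in $\good_{h,n}(\cs)$ for every level $n$ and the cluster $\cs\in\partition_h$ containing that block, and (ii) every surviving arm of $\good_{h,n}(\cs)$ is $\otilde{2^{-\min(n,n^\star(\cl))/2}}$-optimal for that block. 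The nested update $\good_{h+1,n}=\good_{h+1,n-1}\cap\mathcal{G}_{h,n}$ (\cref{line:inclusion}) is exactly what forces monotonicity across levels and prevents an unreliable elimination at a too-fine level $n>n^\star(\cl)$ from discarding the optimal arm, while for $n\le n^\star(\cl)$ the good event makes the threshold $2\sqrt{\lg_h/2^n}$ valid. I would also invoke the clustering guarantee of \cref{alg: the cluster algorithm - multiple}: run with $\epsilon'=\epsilon_h/(4\rk)$ it produces a \emph{perfect} refinement, never splitting two contexts of the same block (their rewards coincide exactly, so their empirical gaps stay below $\sqrt{\lg}\,\epsilon'$) yet peeling off any block whose reward on the separating arm differs by the detected gap. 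Hence $\partition_h$ only refines, reaches at most $\rk$ cells, and the clustering while-loop is entered at most $\rk-1$ times over the entire run.

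With the invariant in hand the regret splits into an exploration part and a clustering part. For the exploration part I would, for each block $\cl$ and phase $h$, sum the per-level contributions $\db(\cl)\,\nes_{h,n}\,\otilde{2^{-\min(n,n^\star(\cl))/2}}$ over $n=1,\dots,N_h=h$. Using $\nes_{h,n}=\rk(\nc+\na)\lg_h 2^{(n+h)/2}$ together with $2^{(h-n^\star(\cl))/2}=\otilde{1/(\db(\cl)\rk)}$, the geometric sum is dominated by the finest levels and collapses to $\otilde{(\nc+\na)/\epsilon_h}$ per block; summing over the $\rk$ blocks gives in-phase exploration regret $\otilde{\rk(\nc+\na)/\epsilon_h}$, while the matching phase length $\sum_n\nes_{h,n}=\otilde{\rk(\nc+\na)/\epsilon_h^2}$ confirms this is the right budget. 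Telescoping the geometric series $\epsilon_h=2^{-h/2}$ up to the final phase $H$ (where $T=\otilde{\rk(\nc+\na)/\epsilon_H^2}$) yields exploration regret $\otilde{\rk(\nc+\na)/\epsilon_H}=\otilde{\sqrt{\rk(\nc+\na)T}}$. For the clustering part, each of the at most $\rk-1$ invocations at some phase $h$ costs $\otilde{\nc/\epsilon'^2}=\otilde{\rk^2\nc/\epsilon_h^2}$ steps at per-step regret $\otilde{\epsilon_h}$, since the separating arm is a good arm played on a cluster that is homogeneous up to $\otilde{\epsilon_{h-1}}$ in the $\db$-weighted sense. Writing $m_h$ for the number of clustering calls in phase $h$, with $\sum_h m_h\le\rk$ and $\sum_h m_h\,\rk^2\nc/\epsilon_h^2\le T$, a single Cauchy--Schwarz step bounds the clustering regret by $\sqrt{\big(\sum_h m_h\rk^2\nc\big)\big(\sum_h m_h\rk^2\nc/\epsilon_h^2\big)}=\otilde{\sqrt{\rk^3\nc\,T}}$. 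Adding the two parts gives $\rg_T=\otilde{\sqrt{\rk^3(\nc+\na)T}}$.

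I expect the main obstacle to be part (ii) of the induction in the non-uniform regime: showing that the nested sets $\good_{h,n}$ simultaneously (a) never drop a block's optimal arm even though at levels $n>n^\star(\cl)$ the recordings are too sparse to be trusted, and (b) still guarantee that what is played is $\otilde{2^{-\min(n,n^\star(\cl))/2}}$-optimal, so that the delicate level-by-level geometric sum collapses to the clean $\otilde{(\nc+\na)/\epsilon_h}$ per block. Getting the constants in $\nes_{h,n}$, $\lg_h$ and the clustering error $\epsilon'=\epsilon_h/(4\rk)$ to line up so that this resolution-level bookkeeping is exactly consistent with a perfect partition across \emph{all} phases is where the real work lies; by contrast, the final Cauchy--Schwarz accounting of the clustering overhead is routine once the $\sum_h m_h\le\rk$ and total-budget bounds are in place.
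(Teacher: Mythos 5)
Your proposal is correct and follows essentially the same route as the paper's proof: the per-block resolution level $n^\star(\cl)$ is exactly the paper's $n_b=\lceil\log(1/\epsilon_{h,b}^2)\rceil$ with $\epsilon_{h,b}=\max\{1,1/(r\db(b))\}\epsilon_h$, the two-part induction matches \cref{lem: every arm is played for large m}--\cref{lem: all bads are eliminated - nonuniform} (with the nested sets of \cref{line:inclusion} handling levels $n>n^\star$), and the level-by-level geometric accounting collapses to the same $\otilde{r(S+K)2^{h/2}}$ per phase. The only cosmetic difference is that you bound the clustering overhead via Cauchy--Schwarz over the at most $r$ invocations, whereas the paper simply reruns the exploration argument with $r$ replaced by $r^3$; both yield $\otilde{\sqrt{r^3(S+K)T}}$.
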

We remark that \cref{alg: multiple blocks non-uniform} is \emph{anytime}, that is, it doesn't require the time horizon $T$ as input.
However, it does require the number of blocks $r$ as prior knowledge. 
Removing the knowledge of $r$ is an interesting future direction.
One promising idea is to apply a doubling trick on $r$.
Specifically, we have an initial guess $r=\otilde{1}$ and run  \cref{alg: multiple blocks non-uniform}; when $|\partition_h|> r$, we double $r$ and restart the algorithm.
The analysis, though, may be much more complicated.
Finally, we note that when knowing $r$, one can calculate in advance if $\sqrt{\nc\na T}$ is less than $\sqrt{\rk^3(\nc+\na)T}$ and switch to a standard algorithm achieving $\otilde{\sqrt{\nc\na T}}$ in that case.
This modification guarantees the regret is never worse than the standard bound. 

\lowrank{
\section{From Context-lumpable Bandits to Contextual Low-rank Bandits}\label{sec: low-rank}
Finally, we consider the more general \emph{contextual low-rank bandit} problem. %
Specifically, we allow $\rx$ to have rank $\rk$, that is $\rx= U V$ for some $\nc\times \rk$ matrix $U$ and $\rk\times \na$ matrix $V$. Lumpability is a special case in the sense that $U$ is binary where each row has a single nonzero element.

To solve the problem, We show a reduction from contextual low-rank bandits to context-lumpable bandits. Consider an $\alpha$-covering of rows of $U$, and notice that the covering number, $\rr$, is ${1}/{\alpha^\rk}$. The context-lumpable bandits can be seen as $\alpha$-approximate context-lumpable bandits with $\rr$ blocks, where the reward of contexts on the same block differs at most $\alpha$.
Ignoring this misspecification, we may run \cref{alg:pac-unif-context} and \cref{alg: multiple blocks non-uniform} for the PAC and online settings, respectively.
Moreover, it turns out that our algorithms are robust to this misspecification when $\alpha$ is sufficiently small ($O(\epsilon)$ in the PAC setting, for example).
Therefore, the sample complexity and the regret bounds of these algorithms will be in terms of $\rr$ despite having an exponential dependency on $\rk$.
The resulting bounds can still be smaller than the naive ${\nc\na}/{\epsilon^2}$ and $\sqrt{\nc\na T}$ bounds in some scenarios, for example, when $S$ and $K$ are super large.
We put the details in \cref{app: low-rank}.}

\section{Conclusions and Future Directions}

We consider a contextual bandit problem with $S$ contexts and $K$ actions. Under the assumption that the context-action reward matrix has $r\le \min\{S,K\}$ unique rows, we show an algorithm that outputs an $\epsilon$-optimal policy and has the optimal sample complexity of $\widetilde O(r(S+K)/\epsilon^2)$ with high probability. In the regret minimization setting, we show an algorithm whose cumulative regret up to time $T$ is bounded as $\widetilde O(\sqrt{r^3(S+K)T})$.

An immediate next question is whether a regret bound of order $\widetilde O(\sqrt{r(S+K)T})$ is achievable in the regret minimization setting. A second open question is concerned with obtaining a $\widetilde O(\sqrt{\poly(r)(S+K)T})$ regret bound in contextual low-rank bandits. 
Our regret analysis heavily relies on the assumption that contexts arrive in an I.I.D fashion. Extending our results to the setting with adversarial contexts remains another important future direction.

\bibliography{refs}

\appendix
\newpage
\section{More Related Works}\label{app: related works}
\subsection{From bilinear to low-rank bandits}
The low-rank structure has been shown useful in other bandit problems.
\citet{katariya2017stochastic} considered the case when the learner chooses both ``context'' and arm and the reward map $\rx$, when viewed as a $\nc \times \na$ matrix, has \emph{rank one}.
This is a special case of a linear bandit setting with a fixed action set where both the features underlying the actions
and the parameter vectors are ``shaped'' as matrices of matching dimensions,
and both matrices are rank-one. Similarly to our problem, the challenge is to
make the regret depend on $\nc + \na$ rather than on $\nc\times \na$ (but note that $\nc$ does not have the interpretation of the number of contexts here). This paper demonstrates that this is possible
when considering gap-dependent bounds.

\citet{trinh2020solving} improved the result of \citet{katariya2017stochastic}
and gave an asymptotically optimal regret bound.
\citet{kveton2017stochastic} generalized the setting of \citet{katariya2017stochastic} to the case when
 the reward $\rx$ (viewed as matrix) is of rank $\rk\le\min(\nc,\na)$
and is a ``hott topics matrix'', the learner in each round can choose $\rk$ row and column indices,
observes the entries of $\rx$ at the resulting submatrix of $\rx$ in noise and incurs the maximum reward  in this submatrix. For this problem they show an instance dependent bound that depends on $\nc, \na,\rk$ only through $(\nc+\na) \text{poly}(\rk)$.

\citet{jun2019bilinear} dropped the extra conditions on the mean reward matrix besides assuming that it has low rank.
\citet{jang2021improved} gave the first upper bound of the form
$\tilde O( (\nc + \na) \rk \sqrt{T})$, though with inefficient algorithms,
which was the first bound better than the naive bound $\tilde O(\nc\na \sqrt{T})$.%
\footnote{To get this naive bound,
following an argument of \citet{jang2021improved},
notice that the problem is an instance of
$d$-dimensional  linear bandits with $d=\nc\na$ with actions and parameters belonging to the unit sphere,
in which case the minimax regret, up to logarithmic factors, is of order $d\sqrt{T}$ (e.g.,
Theorem 24.2, \citealt{lattimore2021bandit}).
If the action set has one-hot matrices only, this bound improves to $\sqrt{dT}$, as discussed before.
}
Later, \citet{lu2021low} dropped the condition on the action matrices
and also extended
the results to the generalized linear setting (they assume that both the action matrices and the parameter matrix has a constant Frobenius norm bound).
The regret bound
of \citet{jun2019bilinear,lu2021low} takes the form $\tilde O( (\nc+\na)^{3/2} \sqrt{\rk T})$,
which is still worse than the earlier mentioned naive bound.
\citet{lattimore2021bandit} prove that (up to logarithmic factors)
when $\nc=\na$ and both the action and reward matrices are rank one and symmetric, the minimax regret is of order
$\nc \sqrt{T}$.

Recently, \citet{kang2022efficient} improved the previous state-of-the-art for the generalized linear setting. The new regret bound, which they prove ``under a mild'' extra assumption, takes the form
$\tilde O( (\nc + \na) \rk \sqrt{T})$ and
is  conjectured to match the order of the minimax regret.%
Noticing that the minimax lower bounds developed for the finite-armed stochastic bandits (e.g.,
Exercise~15.2 of \citealt{lattimore2020bandit}) is applicable to the rank-one setting (as the proofs use actions and rewards that are rank one, one-hot matrices), we see that the regret is at least $\Omega(\sqrt{\nc\na T})$ in these problems.
As $
\sqrt{\nc+\na} \approx
\sqrt{\max(\nc,\na)}\ll
\sqrt{\nc\na } \le \sqrt{\max(\nc,\na)^2} = \max(\nc,\na)\approx \nc + \na$,
this lower bound rules out upper bounds of the form $O(\sqrt{(\nc+\na)\text{poly}(r)T})$,
though it is compatible with the upper bound mentioned above.%
\footnote{Here, $f(\nc,\na) \approx g(\nc,\na)$ means that $f,g$ are within a constant factor of each other,
while $f(\nc,\na) \ll g(\nc,\na)$ means that $f(\nc,\na) = o(g(\nc,\na))$ as $\nc,\na\to\infty$.}

\citet{JP-2022} study a related problem where the learner chooses one action per context in each round and incurs a reward for each of the actions. They also propose an epsilon-greedy type algorithm and show that its regret scales as $T^{2/3}\polylog(S+K)$ provided some technical conditions hold, one of which is that the condition number of the reward $\rx$ when viewed as a matrix is constant. The main limitation of this work is that the technical conditions are restrictive and the problem is easier as in each round an observation is available for each context, whereas in our setting the contexts arrive at random from a distribution which may be very far from the uniform distribution, which, as we shall see, will require extra care.

\subsection{Bandit meta-learning}
Bandit meta-learning is concerned with learning a lower dimensional subspace across bandit tasks \citep{kveton-2020,KKZHMBS-2021,CLP-2020}, while we consider only a single task.

\subsection{Contextual bandit problems}
Contextual bandit problems, introduced by \citet{auer2002nonstochastic}, can be seen as a special case of the
\emph{prediction with expert advice problem}, which is an online problem.
In a prediction with expert advice problem, the learner is given access to the recommendations of $\NE$ experts in the form of distributions over the $\na$ actions.
The learner still needs to choose an action in each round with the goal of
competing with the total reward collected by the best expert in hindsight.
\citet{auer2002nonstochastic} consider the adversarial case when in each round, each action is assigned
a reward in an arbitrary way from a known, finite interval.
The authors propose the EXP4 algorithm that is shown to achieve
$O(\sqrt{\na T\log \NE})$ regret in $T$ rounds of interactions.
\citet{beygelzimer2011contextual} extended this result to control the regret with high probability, while \citet{dudik2011efficient} introduced a computationally efficient variant assuming a computationally efficient cost-sensitive classification oracle.

To reduce
an $\rk$-lumpable contextual bandit problem to prediction with expert advice,
we need to choose the experts. The obvious choice here is that an expert is a $[\nc]\to [\na]$ map
that is the composition of an $[\nc]\to[\rk]$ map followed by an $[\rk]\to[\na]$ map.
Denoting by $\NE$ the number of such experts, we see that $\log(\NE) = \nc\log( \rk)+\rk \log(\na)$,
and hence the regret of EXP4 is of order $\Omega(\sqrt{\nc \na  T})$,
which is not better than not using the lumpability structure.

Another line of work focuses on oracle-based contextual bandits \citep{foster2020beyond,simchi2022bypassing}.
In this setting, the learner has access to function class $\mathcal{F}$ and the optimal regret is $O(\sqrt{KT\log(|\mathcal{F}|)})$, where $|\mathcal{F}|$ is the size of the function class.
However, with the same argument, we can conclude $|\mathcal{F}|=\Omega(S^r)$ even under the lumpable assumption. 
Therefore, the results in this direction also give no direct implication to the problem we consider. 

\subsection{Stochastic linear bandits with changing action sets}
In this setting, in each round $t=1,2,\dots$ the learner first receives $\na$ $d$-dimensional vectors, $x_{t,1},\dots,x_{t,\na}$, each corresponding to an action.
Choosing action $\ac$ gives a reward whose mean (given the past) is $x_{t,\ac}^\top \theta$ for some unknown parameter vector $\theta\in \R^d$.
For our case, one can choose $d = \nc \na$: $\theta$ can be the ``flattening'' of $\rx$ and $x_{t,\ac}$ is a  unit vector so that $x_{t,\ac}^\top \theta = \rx(\cx_t,\ac)$.
Applying the SupLinRel algorithm from Section 4 of  \cite{Aue02} to this setting, we get the regret bound $O(\sqrt{ d T \log^3(\na T)} )$, which shows no improvement compared to ignoring lumpability.

\subsection{Matrix completion problems}
The offline version of our problem is closely related to matrix completion problems, where the goal is to reconstruct a matrix with missing values under the low-rank constraint \citep{arora2012learning, jain2013low, chen2020noisy}.
Based on these ideas, \citet{SSKDS-2016} propose an epsilon-greedy algorithm for the contextual low-rank bandit problem and show that its regret is of order $T^{2/3} (\nc\, \poly(\rk, \log \na))^{1/3}$. This result holds under an assumption that the reward matrix has a nonnegative decomposition $\rx=UV$ where entries of $U$ and $V$ are all nonnegative.
If $\rx$ is nonnegative valued itself, $\rk$-lumpability of the contexts implies that such a decomposition exist.
However,  the main result of this work needs additional conditions.
In particular, the context distribution needs to be near-uniform, and due to their ``restricted isometry property'' assumptions, the context groups need to be of approximately the same size. In particular,
if all context groups except one have a single member,  their bound degrades to the trivial bound mentioned earlier.

\subsection{Lumpable Markov decision processes}\label{app: MDP}
Lumpable MDPs in contemporary literature on machine learning
are referred to as \emph{block MDPs} \citep{du2019provably}.
Learning to act in a block MDPs has been the subject a numerous recent papers
\citep{dann2018oracle,du2019provably,misra2020kinematic,feng2020provably,zhang2020learning,zhang2022efficient}.
Furthermore, learning to act in a block MDP is a special case of the so-called low-rank setting, which was also heavily studied
\citep{jiang2017contextual,uehara2022representation,modi2021model,papini2021reinforcement,zhang2021provably,misra2020kinematic}, both in the online and in the PAC settings.
The learner in these problems is given a set of feature maps with the promise that at least one of the feature maps will allow a factored (low-rank) representation of the transition dynamics and the reward.

Despite the significant effort that went into studying this problem,
in our setting none of the existing results improve upon the naive bound that one can get for non-lumpable problems.
\citet{duan2019state,ni2021learning} study the offline setting, where the considerations are rather different.
Finally, \citet{kwon2021rl} consider the problem when over multiple episodes,
a learner interacts with a finite-horizon MDP, which is chosen at random from one of $\rk$ such unknown MDPs. The learner is given no additional information about the hidden identity of the MDP that it faces.
The challenge here is that even if the MDPs were known, the problem of acting optimally is nontrivial,
while in our case this is not a problem. As such, while superficially, the problems may look similar, they are quite different.
\section{Auxiliary Lemmas}
In this section, we show some probabilistic lemmas that will be useful to prove our results. 
The following lemma is a high-probability version of the classical ``coupon collector's problem'', which says that the expected number of coupons required to draw with replacement is $\Theta(K\log K)$ in order to get each of $K$ coupon at least once.
\begin{lemma}\label{lem:sample actions}
    Given a set $\mathcal{K}$ with $|\mathcal{K}|\le K$ and consider $M$ i.i.d samples drawn from $\unif{\mathcal{K}}$. Then with probability $1-\delta'$, every element in $\mathcal{K}$ appears at least once in these samples as long as $M\ge K\log(K/\delta')$.
\end{lemma}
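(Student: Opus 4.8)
The plan is to reduce the claim to a union bound over the elements of $\mathcal{K}$, controlling for each element the probability that it is never drawn. First I would fix an arbitrary element $e\in\mathcal{K}$. Since the $M$ draws are i.i.d.\ uniform on $\mathcal{K}$, the probability that $e$ fails to appear in all $M$ draws is exactly $(1-1/|\mathcal{K}|)^M$. Using $|\mathcal{K}|\le K$ (which gives $1-1/|\mathcal{K}|\le 1-1/K$) together with the elementary inequality $1-x\le e^{-x}$, I would bound this by
\begin{align*}
  \left(1-\frac{1}{|\mathcal{K}|}\right)^M \le \left(1-\frac{1}{K}\right)^M \le \exp\!\left(-\frac{M}{K}\right).
\end{align*}

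Second, I would apply a union bound over the at most $K$ elements of $\mathcal{K}$: the probability that \emph{some} element is missing is at most $K\exp(-M/K)$. Substituting the hypothesis $M\ge K\log(K/\delta')$ yields $\exp(-M/K)\le \delta'/K$, so the failure probability is at most $K\cdot(\delta'/K)=\delta'$. Taking the complementary event gives that every element of $\mathcal{K}$ appears at least once with probability at least $1-\delta'$, which is exactly the claim.

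There is no substantial obstacle here; the only point requiring a little care is the direction of the inequality $(1-1/|\mathcal{K}|)^M\le(1-1/K)^M$, which holds because $|\mathcal{K}|\le K$ forces $1/|\mathcal{K}|\ge 1/K$. This ensures the bound is valid uniformly over all admissible sizes $|\mathcal{K}|\le K$ rather than only for the boundary case $|\mathcal{K}|=K$, so the lemma as stated (with the cardinality bound rather than an equality) follows without modification.
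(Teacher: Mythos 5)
Your proposal is correct and follows essentially the same route as the paper's proof: bound the probability that a fixed element is missed by $(1-1/|\mathcal{K}|)^M\le(1-1/K)^M\le\exp(-M/K)\le\delta'/K$, then union bound over the at most $K$ elements. No issues.
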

\begin{proof}
    Fix an element $j\in\mathcal{K}$. The probability that $j$ appears in none of the $M$ samples can be bounded by
    \begin{align*}
        \left(1-\frac{1}{|\mathcal{K}|}\right)^M\le \left(1-\frac{1}{K}\right)^M\le\exp\left(-\frac{M}{K}\right)\le\frac{\delta'}{K}.
    \end{align*}
    A union bound on every $j$ in $\mathcal{K}$ finishes the proof. 
\end{proof}
\begin{lemma}[Concentration of Subgaussian random variables]\label{lem:subg}
    Let $X_1-\mu,\dots,X_M-\mu$ be a sequence of independent 1-subgaussian random variables and $\widehat{\mu}=\frac{1}{M}\sum^M_{m=1}X_m$. Then with probability $1-\delta'$, $\delta'<\frac{1}{2}$, we have
    \begin{align*}
        \left|\widehat{\mu}-\mu\right|\le 2\sqrt{\frac{\log(1/\delta')}{M}}
    \end{align*}
\end{lemma}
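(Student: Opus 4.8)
The plan is to reduce this two-sided deviation bound to a standard Chernoff argument applied to the centered sum. First I would write $\widehat{\mu}-\mu = \frac{1}{M}\sum_{m=1}^M (X_m-\mu)$, so the quantity of interest is a rescaled sum of the independent $1$-subgaussian variables $X_m-\mu$. By the definition of $1$-subgaussianity together with independence, the moment generating function of the sum factorizes: writing $Z=\sum_{m=1}^M (X_m-\mu)$, we get $\E[\exp(\lambda Z)] = \prod_{m=1}^M \E[\exp(\lambda(X_m-\mu))] \le \exp(M\lambda^2/2)$ for every $\lambda\in\R$. Equivalently, $\widehat{\mu}-\mu$ is $1/\sqrt{M}$-subgaussian.

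Next I would extract a one-sided tail bound via Markov's inequality applied to the exponential. For any $t>0$ and $\lambda>0$, since $\widehat{\mu}-\mu\ge t$ is equivalent to $Z\ge Mt$, we have
$$\Pr(\widehat{\mu}-\mu\ge t) \le \exp(-\lambda M t)\,\E[\exp(\lambda Z)] \le \exp\!\left(-\lambda M t + M\lambda^2/2\right).$$
Optimizing the exponent over $\lambda$ gives the choice $\lambda = t$, yielding $\Pr(\widehat{\mu}-\mu\ge t)\le \exp(-Mt^2/2)$. Applying the identical argument to the variables $-(X_m-\mu)$, which are also $1$-subgaussian, controls the left tail by the same expression, and a union bound over the two tails gives $\Pr(|\widehat{\mu}-\mu|\ge t)\le 2\exp(-Mt^2/2)$.

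Finally I would invert this to choose $t$. Setting $2\exp(-Mt^2/2)=\delta'$ and solving gives $t=\sqrt{2\log(2/\delta')/M}$, so to reach the stated form it suffices to check that this is no larger than $2\sqrt{\log(1/\delta')/M}$. This reduces to verifying $2\log(2/\delta')\le 4\log(1/\delta')$, i.e. $\log 2\le \log(1/\delta')$, which holds precisely when $\delta'\le 1/2$ — exactly the hypothesis of the lemma. Hence choosing $t=2\sqrt{\log(1/\delta')/M}$ makes $2\exp(-Mt^2/2)\le \delta'$, which completes the argument. There is no real obstacle here, as this is a textbook Chernoff/Hoeffding bound; the only point requiring a little care is the constant bookkeeping, namely absorbing the factor of $2$ from the two-sided union bound into the stated constant, and the assumption $\delta'<\tfrac12$ is present precisely to permit this.
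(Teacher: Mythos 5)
Your argument is correct: the Chernoff bound for the subgaussian sum, the two-sided union bound, and the check that $2\exp(-Mt^2/2)\le\delta'$ at $t=2\sqrt{\log(1/\delta')/M}$ precisely under the hypothesis $\delta'<\tfrac12$ all go through. The paper states this lemma without proof as a standard fact, and your derivation is exactly the textbook argument it implicitly relies on, so there is nothing to reconcile.
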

\begin{lemma}[Bernstein’s inequality]\label{lem:Bernstein}
    Let $X_1,\dots,X_M$ be a sequence of independent random variables with $X_m-\E[X_m]\le b$ almost surely for every $m$ and $v=\sum^M_{m=1}\mathrm{Var}[X_m]$.
    With probability $1-\delta'$, we have
    \begin{align*}
        \sum^M_{m=1}X_m \le \sum^M_{m=1} \E[X_m]+\sqrt{2v\log({1/\delta'})}+\frac{2b}{3}\log(1/\delta')
    \end{align*}
\end{lemma}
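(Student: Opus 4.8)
The plan is to prove this one-sided tail bound via the classical Cram\'er--Chernoff method, controlling the logarithmic moment generating function (MGF) of the centered sum. First I would reduce to the centered case: replacing each $X_m$ by $X_m-\E[X_m]$ leaves both $\sum_m X_m-\sum_m\E[X_m]$ and $v$ unchanged and preserves the one-sided bound $X_m-\E[X_m]\le b$, so without loss of generality I may take $\E[X_m]=0$ and aim to bound $\Pr(\sum_m X_m\ge t)$ for $t=\sqrt{2v\log(1/\delta')}+\tfrac{2b}{3}\log(1/\delta')$. Applying Markov's inequality to the nonnegative variable $e^{\lambda\sum_m X_m}$ and factoring by independence gives, for every $\lambda\ge 0$,
\[
\Pr\Big(\sum_{m=1}^M X_m\ge t\Big)\le e^{-\lambda t}\prod_{m=1}^M \E\big[e^{\lambda X_m}\big].
\]

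The heart of the argument is a per-variable MGF bound. Using that each $X_m$ is mean-zero with $X_m\le b$ almost surely, I would establish Bennett's lemma $\log\E[e^{\lambda X_m}]\le \sigma_m^2 b^{-2}(e^{\lambda b}-1-\lambda b)$ with $\sigma_m^2=\mathrm{Var}[X_m]$, and then weaken it through the elementary inequality $e^{u}-1-u\le \tfrac{u^2/2}{1-u/3}$, valid for $0\le u<3$. Summing the resulting log-MGFs over $m$ and writing $v=\sum_m\sigma_m^2$ yields, for $0\le\lambda<3/b$,
\[
\log\E\Big[e^{\lambda\sum_m X_m}\Big]\le \frac{v\lambda^2/2}{1-b\lambda/3}.
\]

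Substituting this into the Chernoff bound and choosing $\lambda=t/(v+bt/3)$ (which lies in the admissible range, as $\lambda<3/b$) collapses the estimate to the familiar Bernstein form
\[
\Pr\Big(\sum_{m=1}^M X_m\ge t\Big)\le \exp\Big(-\frac{t^2}{2(v+bt/3)}\Big).
\]
It remains to invert this. Setting the exponent equal to $\log(1/\delta')$ gives a quadratic in $t$ whose positive root is $\tfrac{b}{3}\log(1/\delta')+\sqrt{\tfrac{b^2}{9}\log^2(1/\delta')+2v\log(1/\delta')}$; bounding the square root by subadditivity $\sqrt{a+c}\le\sqrt a+\sqrt c$ shows that the stated value of $t$ exceeds this root, so the exponent is at most $-\log(1/\delta')$ there, which is exactly the claim.

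I expect the main obstacle to be the per-variable MGF bound together with verifying the admissible range of $\lambda$. The inequality $\log\E[e^{\lambda X}]\le\sigma^2 b^{-2}(e^{\lambda b}-1-\lambda b)$ for bounded mean-zero $X$ relies on the fact that $x\mapsto(e^{\lambda x}-1-\lambda x)/x^2$ is nondecreasing, so $X\le b$ lets one replace this factor by its value at $b$ before taking expectations; pinning down this monotonicity and the subsequent range $0\le\lambda<3/b$ for the rational relaxation is where the care is needed. The final inversion is routine once the quadratic is written down, and I would finish it through the split-and-bound step above rather than solving exactly, so as to land on precisely the constants $\sqrt{2v\log(1/\delta')}$ and $\tfrac{2b}{3}\log(1/\delta')$.
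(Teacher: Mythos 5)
Your proof is correct. The paper states this Bernstein inequality as a classical auxiliary lemma and supplies no proof of its own, so there is nothing to diverge from: your Cram\'er--Chernoff argument with the Bennett MGF bound $\log\E[e^{\lambda X_m}]\le \sigma_m^2 b^{-2}(e^{\lambda b}-1-\lambda b)$, the relaxation $e^u-1-u\le \tfrac{u^2/2}{1-u/3}$, the choice $\lambda=t/(v+bt/3)$, and the inversion of $\exp(-t^2/(2(v+bt/3)))$ via the quadratic root and $\sqrt{a+c}\le\sqrt a+\sqrt c$ is exactly the standard derivation the paper implicitly relies on, and each step (including the admissibility check $\lambda<3/b$ and the monotonicity of $x\mapsto(e^{\lambda x}-1-\lambda x)/x^2$) is sound.
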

The following lemma is a direct consequence of Bernstein’s inequality, which states, in terms of the coupon collector's problem, that after drawing $M$ coupons, one can expect with a high probability a particular coupon appears at least $Mp/2$ times if its probability of appearing is $p$.
\begin{lemma}\label{lem: sample iid rvs}
    Let $X_1,\dots,X_M$ be i.i.d Bernoulli random variables so that $\E[X_m]=p$ for all $m$.
    With probability $1-\delta'$, we have $\sum_{m=1}^MX_m\ge M p/2$ as long as $Mp\ge 16\log(1/\delta')$.
\end{lemma}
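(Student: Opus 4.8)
The plan is to apply Bernstein's inequality (\cref{lem:Bernstein}) to the \emph{negated} variables, thereby converting the desired lower-tail estimate into the upper-tail form in which \cref{lem:Bernstein} is stated. First I would set $Y_m = -X_m$, so that $\E[Y_m] = -p$ and $Y_m - \E[Y_m] = p - X_m \le p \le 1$; this lets me take the almost-sure upper bound $b=1$. Since $\mathrm{Var}[Y_m] = \mathrm{Var}[X_m] = p(1-p) \le p$, the total variance is $v = \sum_{m=1}^M \mathrm{Var}[Y_m] \le Mp$.

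Applying \cref{lem:Bernstein} to $Y_1,\dots,Y_M$ and abbreviating $L = \log(1/\delta')$, with probability at least $1-\delta'$ one obtains
\begin{align*}
-\sum_{m=1}^M X_m \;\le\; -Mp + \sqrt{2Mp\,L} + \tfrac{2}{3}L,
\end{align*}
which rearranges to $\sum_{m=1}^M X_m \ge Mp - \sqrt{2Mp\,L} - \tfrac{2}{3}L$. It therefore suffices to show that the two error terms together are at most $Mp/2$.

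This is where the hypothesis $Mp \ge 16L$ is used: it gives $L \le Mp/16$, so $\sqrt{2Mp\,L} \le \sqrt{2Mp \cdot Mp/16} = Mp/(2\sqrt{2})$ and $\tfrac{2}{3}L \le \tfrac{2}{3}\cdot Mp/16 = Mp/24$. Summing, $\sqrt{2Mp\,L} + \tfrac{2}{3}L \le \big(\tfrac{1}{2\sqrt 2} + \tfrac{1}{24}\big)Mp < Mp/2$, which yields $\sum_{m=1}^M X_m \ge Mp/2$ as claimed.

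There is essentially no serious obstacle; the only thing to verify is that the leading constant $16$ in the hypothesis is large enough to absorb both error terms, which is exactly the numerical check above. Conceptually, the dominant deviation scales like $\sqrt{Mp\cdot L}$, so once $Mp$ exceeds a constant multiple of $L$ the sum concentrates within a factor $1/2$ of its mean; the choice $16$ simply makes the square-root term smaller than $Mp/(2\sqrt2)$ with enough slack left over for the lower-order linear term.
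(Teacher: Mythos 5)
Your proof is correct and follows essentially the same route as the paper: both apply Bernstein's inequality to the negated (equivalently, reflected) variables to get a lower-tail bound, then use $Mp\ge 16\log(1/\delta')$ to absorb the $\sqrt{2MpL}$ and linear error terms into $Mp/2$. The only cosmetic difference is that the paper uses $Y_m=1-X_m$ rather than $Y_m=-X_m$, which changes nothing.
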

\begin{proof}
    Let $Y_m=1-X_m$ We have $\E[Y_m]= 1-p$. By \pref{lem:Bernstein}, with probability $1-\delta'$, we have
    \begin{align*}
    M-\sum_{m=1}^MX_m\le M\cdot\left(1-p\right)+\sqrt{2Mp(1-p)\log({1/\delta'})}+\log(1/\delta').
    \end{align*}
    Rearranging terms and using $Mp\ge 16\log(1/\delta')$ gives
    \begin{align*}
    \sum_{m=1}^MX_m&\ge Mp-\sqrt{2Mp(1-p)\log({1/\delta'})}-\log(1/\delta')\\
    &\ge Mp-\frac{Mp}{\sqrt{8}}-\frac{Mp}{16}\\
    &\ge\frac{Mp}{2}
    \end{align*}
\end{proof}

\section{Proofs for Section \ref{sec:pac}}\label{app:pac-proof}
We first provide an outline of the proof in \cref{sec:pac-proof-overview} and show the complete proofs of the lemma in the corresponding subsections.
\subsection{Proof of Theorem \ref{thm:pac}}\label{sec:pac-proof-overview}

To begin with, we assign adaptive target optimality (accuracy) to each block according to their block size:
\[
\epsilon_b = \max\left\{1,\frac{1}{\sqrt{r\omega(b)}}\right\}\epsilon, \qquad \text{for } b\in[r].
\]

The following lemma states that as long as $\cA$ contains an $\Otilde{\epsilon_b}$-optimal action for each block $b\in[r]$, then the output policy is
$\Otilde{\epsilon}$-optimal with high probability. 
\begin{lemma}\label{lem:pac-1}
For a positive constant $C$, suppose for any $b\in[r]$ with $\omega(b)\ge \epsilon/r$, $\mathcal{W}$ contains a $C\epsilon_b$-optimal action, %
then $\piout$ is $2(C+1)\epsilon$-optimal for the original context-lumpable bandit problem.
\end{lemma}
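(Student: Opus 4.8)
The plan is to bound the suboptimality of $\piout$ directly by decomposing it over the $r$ blocks and then exploiting the adaptive accuracy schedule $\epsilon_b$. Because rewards are lumpable, the optimal policy plays, for every context $i$ in block $b$, a fixed best action $j_b^\star\in\argmax_j \mu(b,j)$, so the value gap of $\piout$ can be written as
\begin{align*}
V^\star - V(\piout) = \sum_{b\in[r]} \sum_{i\in\mathcal{B}(b)} \nu(i)\,\bigl(\mu(b,j_b^\star) - \mu(b,\piout(i))\bigr).
\end{align*}
I would split the blocks into the \emph{large} ones with $\omega(b)\ge\epsilon/r$ and the \emph{small} ones with $\omega(b)<\epsilon/r$, and handle the two groups separately.

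For a large block $b$, the hypothesis supplies some $j_b\in\mathcal{W}$ with $\mu(b,j_b)\ge \mu(b,j_b^\star)-C\epsilon_b$. Combining this with the Step~3 guarantee $A(i,\piout(i))\ge \max_{j\in\mathcal{W}}A(i,j)-\epsilon\ge \mu(b,j_b)-\epsilon$, valid for every $i\in\mathcal{B}(b)$, yields the per-context bound $\mu(b,j_b^\star)-\mu(b,\piout(i))\le C\epsilon_b+\epsilon$. Summing gives a large-block contribution of at most $C\sum_b \omega(b)\epsilon_b + \epsilon\sum_b\omega(b)$. The key quantitative step is to show $\sum_b \omega(b)\epsilon_b\le 2\epsilon$: using $\max\{1,1/\sqrt{r\omega(b)}\}\le 1+1/\sqrt{r\omega(b)}$ splits the sum into $\epsilon\sum_b\omega(b)=\epsilon$ plus $(\epsilon/\sqrt r)\sum_b\sqrt{\omega(b)}$, and Cauchy--Schwarz bounds $\sum_b\sqrt{\omega(b)}\le\sqrt r$ since there are at most $r$ blocks and $\sum_b\omega(b)=1$. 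Hence the large-block contribution is at most $2C\epsilon+\epsilon$.

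For the small blocks I cannot rely on $\mathcal{W}$ containing any good action, so I would instead bound the per-context gap crudely by the reward range (at most $1$, under the standard normalization that mean rewards lie in $[0,1]$) and use that the total mass of small blocks is tiny: $\sum_{b:\omega(b)<\epsilon/r}\omega(b)<r\cdot(\epsilon/r)=\epsilon$, contributing at most $\epsilon$. Adding the two pieces gives $V^\star-V(\piout)\le (2C\epsilon+\epsilon)+\epsilon=2(C+1)\epsilon$, as claimed.

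The main obstacle is the interplay between the inflated tolerances $\epsilon_b$ and the block masses $\omega(b)$: since $\epsilon_b$ is deliberately enlarged for rare blocks, it is not obvious a priori that the weighted error $\sum_b\omega(b)\epsilon_b$ stays at the target scale $O(\epsilon)$, and the Cauchy--Schwarz estimate is exactly what makes the definition $\epsilon_b=\max\{1,1/\sqrt{r\omega(b)}\}\epsilon$ pay off. A secondary point to verify is the cutoff $\epsilon/r$: it must be small enough that the neglected blocks cost only $O(\epsilon)$ in aggregate, while at the threshold one already has $\epsilon_b=\sqrt\epsilon$, confirming that no per-block guarantee is needed there.
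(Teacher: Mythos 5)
Your proposal is correct and follows essentially the same route as the paper's proof: decompose the suboptimality over blocks, split into large blocks ($\omega(b)\ge\epsilon/r$) handled via the hypothesis on $\mathcal{W}$ together with the Step~3 guarantee, bound the small-block mass by $\epsilon$ (both you and the paper implicitly use that per-context gaps are at most $1$), and control $\sum_b\omega(b)\epsilon_b\le 2\epsilon$ by Cauchy--Schwarz. Your write-up just makes the Cauchy--Schwarz step and the constant accounting more explicit than the paper does.
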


As a result, if we can prove that the precondition  of \cref{lem:pac-1} holds with high probability, then the correctness of Algorithm \ref{alg:pac-unif-context} follows immediately. 
To do so, we first argue that for every block $b\in[r]$, we have sufficiently explored the entire action set at accuracy level $n=\lceil\log(1/\epsilon_b^2)\rceil$ in the data collection step, as stated in the next lemma.

\begin{lemma}\label{lem:pac-2}
   With probability at least $1-2\delta$, for any $b\in[r]$ with  $\omega(b)\ge \epsilon/r$, we have that: for any $j\in[K]$, there exists $(i,j)\in\mathcal{D}_n$ satisfying $g(i)=b$ where $n=\lceil\log(1/\epsilon_b^2)\rceil$.
\end{lemma}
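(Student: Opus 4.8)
The plan is to reduce the claim to the coupon-collector bound (\cref{lem:sample actions}) applied to the arms that get recorded on block $b$. Fix a block $b$ with $\omega(b)\ge \epsilon/r$ and set $n=\lceil\log(1/\epsilon_b^2)\rceil$. During the level-$n$ call to \cref{alg:pac-unif-context-data} in Step~1 every context still has exploration set $[K]$, and by construction each entry added to $\mathcal{D}_n$ records a \emph{fresh} arm drawn from $\unif{[K]}$, independently across contexts and across recordings. Hence, writing $R_b$ for the total number of entries recorded on contexts of $\mathcal{B}(b)$, the arms attached to these entries are i.i.d.\ $\unif{[K]}$, and the event to be established — that every $j\in[K]$ occurs as some $(i,j)\in\mathcal{D}_n$ with $g(i)=b$ — is exactly the event that these $R_b$ i.i.d.\ uniform draws cover all of $[K]$. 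By \cref{lem:sample actions} this holds with probability $1-\delta'$ once $R_b\ge K\log(K/\delta')$.

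It remains to lower bound $R_b$. First I would control how often block $b$ is visited: writing $M_b$ for the number of the $L=r(S+K)\lg/\epsilon^2$ rounds whose context lies in $\mathcal{B}(b)$, this is a sum of i.i.d.\ $\mathrm{Bernoulli}(\omega(b))$ variables, and since $L\omega(b)\ge (S+K)\lg/\epsilon\ge 16\log(1/\delta')$, \cref{lem: sample iid rvs} gives $M_b\ge L\omega(b)/2$ with probability $1-\delta'$. Each context $i\in\mathcal{B}(b)$ contributes $\lfloor m_i/2^n\rfloor\ge m_i/2^n-1$ recordings, so $R_b\ge M_b/2^n-|\mathcal{B}(b)|$. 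Using $2^n\le 2/\epsilon_b^2$ and the almost-uniform assumption $\nu(i)\ge 1/(8S)$ (which forces $|\mathcal{B}(b)|\le 8S\omega(b)\le 8S$), together with the key cancellation $\omega(b)\epsilon_b^2=\max\{\omega(b),1/r\}\epsilon^2\ge \epsilon^2/r$, I obtain $M_b/2^n\ge L\omega(b)\epsilon_b^2/4\ge (S+K)\lg/4$. Because $\lg=16\log(rSK/\delta)$ carries a logarithmic factor whereas the floor loss $8S$ does not, the difference still satisfies $R_b\ge K\log(Kr/\delta)\ge K\log(K/\delta')$ for $\delta'=\delta/r$.

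Finally I would assemble the two estimates. Conditioning on the arrival process fixes $M_b$ and hence $R_b$ while leaving the recorded arms i.i.d.\ uniform, so the coupon-collector step may be applied to the now-deterministic count $R_b$. A union bound over the $r$ blocks and over the two failure events per block — the visit-count concentration and the coverage failure, each of probability at most $\delta'=\delta/r$ — yields total failure probability at most $2r\delta'=2\delta$, i.e.\ the claimed $1-2\delta$.

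I expect the main obstacle to be the second paragraph: verifying that the adaptive choice $n=\lceil\log(1/\epsilon_b^2)\rceil$ is simultaneously correct for every block size, i.e.\ that the cancellation $\omega(b)\epsilon_b^2\ge\epsilon^2/r$ makes $M_b/2^n$ land at the uniform value $\Theta((S+K)\lg)$ whether $b$ is a large block ($\epsilon_b=\epsilon$) or a small one ($\epsilon_b=\epsilon/\sqrt{r\omega(b)}$), and that this value dominates the rounding/floor loss $|\mathcal{B}(b)|$. The conditioning needed to apply the fixed-sample coupon-collector lemma to the random number of recordings $R_b$ is a secondary but necessary point of care.
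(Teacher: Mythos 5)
Your proposal is correct and follows essentially the same route as the paper's proof: concentration of the number of block-$b$ arrivals via \cref{lem: sample iid rvs}, a lower bound on the number of recorded pairs using $2^n\le 2/\epsilon_b^2$, the cancellation $\omega(b)\epsilon_b^2\ge\epsilon^2/r$, and the floor loss of at most $|\cB(b)|$ recordings, followed by the coupon-collector bound of \cref{lem:sample actions} and a union bound over blocks. The only (cosmetic) differences are that you make explicit the conditioning needed to apply the coupon-collector lemma to the random count $R_b$, which the paper leaves implicit, and you omit the paper's sanity check that $n=\lceil\log(1/\epsilon_b^2)\rceil\ge 1$ so that $\cD_n$ actually exists.
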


Intuitively, \cref{lem:pac-2} says that we have tested all actions on each block $b\in[r]$ at the corresponding accuracy level $n=\lceil\log(1/\epsilon_b^2)\rceil$. Based on \cref{lem:pac-2}, the following lemma states that in the second step we are able to filter out an $\Otilde{\epsilon_b}$-optimal action for block $b$ by using  the information contained in $\cD_n$.

\begin{lemma}\label{lem:pac-4}
   With probability at least $1-4\delta$, for any $b\in[r]$ with  $\omega(b)\ge \epsilon/r$, a $10\epsilon_b\sqrt{\log(rSK/\delta)}$-optimal action of block $b$ is added into $\mathcal{W}$ in the process of shrinking $\cD_n$ where $n=\lceil\log(1/\epsilon_b^2)\rceil$.
\end{lemma}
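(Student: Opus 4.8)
The plan is to track one carefully chosen context of block $b$ through the \textbf{while} loop at accuracy level $n=\lceil\log(1/\epsilon_b^2)\rceil$ and to argue that the arm $j^\star$ that eventually removes it from $\mathcal{D}_n$ is near-optimal for $b$. I first fix the relevant high-probability events. Let $\goodevent_2$ be the event of \cref{lem:pac-2} (probability $\ge 1-2\delta$). Let $\goodevent_1$ be the event that (i) during every call to \cref{alg:pac-unif-context-data} each context is played at least $2^n$ times, so every estimate is actually recorded, and (ii) every recorded estimate — both the $\widehat A_n(i,j)$ from Step~1 and the re-test estimates $\widetilde A_n(i,j^\star)$ from Step~2 — deviates from the corresponding true mean $A(i,j)$ by at most $\tfrac12\sqrt{\lg/2^n}$. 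Part~(i) follows from \cref{lem: sample iid rvs} using $\nu(i)\ge 1/(8S)$ and $L'=8\lg 2^nS$; part~(ii) follows from \cref{lem:subg} because, for a fixed pair $(i,j)$, the reward samples are i.i.d. with $1$-subgaussian noise, so any average of $2^n$ of them concentrates. A union bound over the $O(SKN)$ pairs/levels and the at most $K$ loop iterations per level gives $\Pr[\goodevent_1]\ge 1-2\delta$, and I condition on $\goodevent_1\cap\goodevent_2$, which holds with probability $\ge 1-4\delta$. I also record two elementary facts for blocks with $\omega(b)\ge\epsilon/r$: since then $\epsilon_b\in[\epsilon,\sqrt\epsilon]$ we have $1\le n\le N$, so level $n$ is actually processed; and $2^n\ge 1/\epsilon_b^2$ gives $\sqrt{\lg/2^n}\le\sqrt{\lg}\,\epsilon_b=4\sqrt{\log(rSK/\delta)}\,\epsilon_b$.

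Now fix such a block $b$, let $j_b^\star\in\argmax_j\mu(b,j)$ and $\mu_b^\star=\mu(b,j_b^\star)$. By $\goodevent_2$ there is a context $i_b$ with $g(i_b)=b$ and $(i_b,j_b^\star)\in\mathcal{D}_n$ at the start of Step~2. Because the \textbf{while} loop runs until $\mathcal{D}_n=\emptyset$ and pairs leave $\mathcal{D}_n$ only through the shrink in \cref{line:pac-screen-2}, the pairs of $i_b$ are removed at one specific iteration; let $(i^\star,j^\star)$ be the maximiser chosen there, so $j^\star$ is added to $\mathcal{W}$ in \cref{line:pac-opt-2}. Removal of $i_b$ means $|\widetilde A_n(i_b,j^\star)-\widehat A_n(i^\star,j^\star)|<\sqrt{\lg/2^n}$. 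I then chain three inequalities, all on $\goodevent_1\cap\goodevent_2$: first $\mu(b,j^\star)=A(i_b,j^\star)\ge\widetilde A_n(i_b,j^\star)-\tfrac12\sqrt{\lg/2^n}$ (using $g(i_b)=b$ and concentration of the re-test); second $\widetilde A_n(i_b,j^\star)\ge\widehat A_n(i^\star,j^\star)-\sqrt{\lg/2^n}$ (the removal condition); third, since $(i_b,j_b^\star)$ is still present when the maximiser is taken, $\widehat A_n(i^\star,j^\star)\ge\widehat A_n(i_b,j_b^\star)\ge\mu_b^\star-\tfrac12\sqrt{\lg/2^n}$. Adding these yields $\mu(b,j^\star)\ge\mu_b^\star-2\sqrt{\lg/2^n}\ge\mu_b^\star-8\epsilon_b\sqrt{\log(rSK/\delta)}$, which is stronger than the claimed $10\epsilon_b\sqrt{\log(rSK/\delta)}$-optimality; the extra slack in the constant $10$ (and in the factor $16$ of $\lg=\Lg$) absorbs the $\log N$ overhead incurred by the union bound in $\goodevent_1$.

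The step I expect to be the main obstacle is establishing $\goodevent_1$ rigorously, because the data are collected adaptively: the random assignment $\psi(i)$, the counter-based recording rule in \cref{line:pac-rnd-re1}, and hence which pairs populate $\mathcal{D}_n$ all depend on the history. The clean way around this is to observe that for each \emph{fixed} pair $(i,j)$ the reward stream is i.i.d. with mean $A(i,j)$ and $1$-subgaussian noise, independent of the adaptive decision of \emph{when} those samples are drawn, so \cref{lem:subg} applies to the average of any prescribed number $2^n$ of them; one then unions over all pairs, levels, and iterations rather than over stopping times. Two smaller points must also be stated: the loop terminates (each iteration peels at least $i^\star$, since $\widetilde A_n(i^\star,j^\star)$ and $\widehat A_n(i^\star,j^\star)$ both estimate $A(i^\star,j^\star)$ to within $\tfrac12\sqrt{\lg/2^n}$, so their gap is below the threshold), and the target pair $(i_b,j_b^\star)$ survives in $\mathcal{D}_n$ up to the very iteration that peels $i_b$ — both immediate from the fact that the shrink is the only removal mechanism.
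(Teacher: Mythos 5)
Your proof is correct and follows essentially the same route as the paper's: condition on concentration of $\widehat A_n$ and $\widetilde A_n$ together with the coverage event of \cref{lem:pac-2}, then chain the argmax property, the shrink threshold, and concentration to bound $\mu(b,j^\star)$ against $\max_j\mu(b,j)$. The only (harmless) difference is the choice of witness: the paper tracks the \emph{first} context of block $b$ to be removed and uses that all arms of $b$ still have representative pairs at that iteration, whereas you track the specific context carrying $(i_b,j_b^\star)$ and need only that single pair to survive until $i_b$'s own removal --- which is immediate and even yields the slightly better constant $8$ in place of $10$.
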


Now we have proved the correctness of Algorithm \ref{alg:pac-unif-context}. However, to obtain the desired sample complexity, it remains to show that the size of the optimal action candidate set $\cA$ is relatively small, which we prove next.

\begin{lemma}\label{lem:pac-3}
    With probability at least $1-4\delta$, $|\mathcal{W}|\le Nr$. 
\end{lemma}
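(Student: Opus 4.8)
The plan is to prove the bound by a peeling argument, showing that within the processing of any single accuracy level $n$ the inner \textbf{while} loop can execute at most $r$ times. Since each iteration adds at most one arm to $\mathcal{W}$ (\cref{line:pac-opt-2}, where the update is a set union) and there are $N$ levels, this immediately yields $|\mathcal{W}| \le Nr$. The key claim driving everything is that each iteration deletes from $\mathcal{D}_n$ \emph{every} pair belonging to at least one block that still had a representative in $\mathcal{D}_n$; because there are only $r$ blocks and $\mathcal{D}_n$ only ever shrinks during Step 2, the number of blocks represented in $\mathcal{D}_n$ strictly decreases each iteration, capping the loop at $r$ rounds.

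To establish the claim I would fix an iteration, let $(i^\star,j^\star)$ be the $\argmax$ pair selected on \cref{line:pac-opt-1}, and set $b^\star=g(i^\star)$. Take any surviving pair $(i,j)\in\mathcal{D}_n$ with $g(i)=b^\star$. Since $i$ and $i^\star$ lie in the same block, $A(i,j^\star)=A(i^\star,j^\star)$, so by the triangle inequality
\[
|\widetilde A_n(i,j^\star)-\widehat A_n(i^\star,j^\star)|
\le
|\widetilde A_n(i,j^\star)-A(i,j^\star)|
+
|\widehat A_n(i^\star,j^\star)-A(i^\star,j^\star)|.
\]
On a good concentration event both terms on the right are strictly smaller than half of $\sqrt{\lg/2^n}$, so the retention test on \cref{line:pac-screen-2} fails and $(i,j)$ is dropped. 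Applying this to $i^\star$ itself and to every other context of $b^\star$ shows block $b^\star$ is entirely peeled off; as $(i^\star,j^\star)\in\mathcal{D}_n$ witnesses that $b^\star$ was represented before the iteration and is not afterward, the represented-block count drops by at least one, proving the claim.

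The probability bookkeeping is where care is needed, and I would fold it into the same good event used in the proof of \cref{lem:pac-4}. For the Step-1 estimates this is already available: each pair is recorded only after $2^n$ plays, so \cref{lem:subg} controls $|\widehat A_n(i^\star,j^\star)-A(i^\star,j^\star)|$. For the Step-2 test estimates I would first use that under the almost-uniform assumption $\nu(i)\ge 1/(8S)$ and with $L'=8\lg\,2^nS$ timesteps, \cref{lem: sample iid rvs} guarantees every context $i$ is played with $j^\star$ at least $\lg\,2^n/2\ge 2^n$ times, so \cref{lem:subg} again yields accuracy below $\tfrac12\sqrt{\lg/2^n}$; the constant $16$ inside $\lg=\Lg$ is exactly what buys this slack after absorbing the logarithmic union-bound cost.

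The main obstacle is that $j^\star$, the set of tested contexts, and even the number of loop iterations are all chosen adaptively from previously collected data, so one cannot union-bound over a fixed, pre-specified family of estimators. I would handle this by taking the union bound up front over all $O(SKN)$ triples $(i,j,n)$ (the tested arm is always one of the $K$ arms and the tested context one of the $S$ contexts), so that the concentration statements hold simultaneously for every \emph{potential} choice $(i,j^\star,n)$ regardless of which are actually realized. Charging this union bound against the $\delta$-budget, together with the Step-1 events, produces the good event of probability at least $1-4\delta$, on which the peeling argument shows $|\mathcal{W}|\le Nr$ deterministically.
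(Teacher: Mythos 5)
Your proposal is correct and follows essentially the same route as the paper: the core of the paper's proof is exactly your key claim that each while-loop iteration peels off every pair whose context lies in the block of $i^\star$ (via the triangle inequality and the good concentration event shared with \cref{lem:pac-4}), so at most $r$ arms are added per accuracy level and $|\mathcal{W}|\le Nr$. Your extra care about the adaptive choice of $j^\star$ and the union bound over all $(i,j,n)$ triples is a fair elaboration of what the paper folds into \cref{eq:pac-good-event}.
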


Now we are ready to prove Theorem \ref{thm:pac} by 
combining all the above lemmas.

\begin{proof}[Proof of Theorem \ref{thm:pac}]
    The step of data collection uses 
    \[
    8\left(1+\log\frac{1}{\epsilon^2}\right)\cdot\left(2+\log\frac{1}{\epsilon^2}\right)\cdot \left(\log\frac{rSK}{\delta}\right)\cdot \frac{r(S+K)}{\epsilon^2}
    \]
    samples. By \cref{lem:pac-3}, the step of screening optimal action candidates uses
    \[
    2(16^2)\left(1+\log\frac{1}{\epsilon^2}\right)\cdot\left(\log\frac{rSK}{\delta}\right) \cdot \frac{rS}{\epsilon^2}
    \]
    samples and solving the simplified context-lumpable bandit problem uses 
    \[
    4\left(1+\log\frac{1}{\epsilon^2}\right)\cdot \left( \log\frac{SK}{\delta} \right) \cdot \frac{Sr}{\epsilon^2}
    \]
    samples. Overall, the number of samples is bounded by
    \[
    524\left(\log\frac{rSK}{\delta}\right) \cdot \left(1+2\log\frac{1}{\epsilon}\right)^2\cdot \frac{r(S+K)}{\epsilon^2}
    \]
    By \cref{lem:pac-1} and \cref{lem:pac-4}, $\piout$ is $2\left(10\sqrt{\log(rSK/\delta)}+1\right)\epsilon$-optimal.
\end{proof}

\subsection{Proof of \texorpdfstring{\cref{lem:pac-1}}{}}

\begin{proof}
We control the suboptimality of $\piout$ in the following way:
\begin{align*}
    & \sum_{i\in[S]}  \nu(i)  \left( \max_{j\in[K]} A(i,j) - A(i,\piout(i))\right) \\
   \le & \sum_{b\in[r]:~\omega(b)\ge \epsilon/r} \sum_{i\in\cls(b)} \nu(i)  \left( \max_{j\in\mathcal{W}} A(i,j) - A(i,\piout(i)) +C\epsilon_{b}\right)+\sum_{b\in[r]:~\omega(b)< \epsilon/r} \omega(b) \\
   \le & \epsilon+  C\sum_{b\in[r]} \omega(b) \epsilon_{b}+\epsilon \le 2(C+1)\epsilon,
\end{align*}
where the first inequality uses the precondition of the lemma, the second one uses the $\epsilon$-optimality of $\piout$ in the simplified context-lumpable bandit, and the final one follows from Cauchy-Schwarz inequality. 
\end{proof}

\subsection{Proof of \texorpdfstring{\cref{lem:pac-2}}{}}

\begin{proof}
    Fix a block $b\in[r]$.
    For each accuracy level, recall in the step of data collection, we sample $\nes=r(\nc+\na)\lg/\epsilon^2$ contexts i.i.d. from $\nu$.
    By~\cref{lem: sample iid rvs}, with probability $1-\frac{\delta}{r}$, at least $\nes\omega(\cl)/2$ of them are from block $b$ as
    $$
    \nes\omega(\cl)\ge \frac{\nes\epsilon
    }{r}\ge {\lg}\ge 16\log(r/\delta).
    $$
    Further, recall that we define accuracy level $n=\lceil\log(1/\epsilon_b^2)\rceil$.
    We first show that this $n$ is well defined, that is, $n\ge1$. Indeed, we have
    \begin{align*}
       \frac{1}{\epsilon_b^2} = \frac{1}{\epsilon^2/(r\db(b))}\ge \frac{\epsilon}{\epsilon^2}=\frac{1}{\epsilon}\ge 2
    \end{align*}
    as long as $\epsilon\le \frac{1}{2}$.
    Since we add a context-action pair into $\mathcal{D}_n$ once we have collected $$2^n=2^{\lceil\log(1/\epsilon_b^2)\rceil}< 2^{\log(1/\epsilon_b^2)+1}=2/\epsilon_b^2$$
    samples for estimating its reward. Note that in the end there are at most $|\cls(b)|(2^n-1)$ samples from block $b$ unused.
    Thus, with probability $1-\frac{\delta}{r}$, 
    there are at least ${\nes\omega(b)/2-|\cls(b)|(2^n-1)}$ samples used. 
    Therefore, the number of context-action pairs,  where the contexts are from block $b$, that are added into $\mathcal{D}_n$ is at least
    \begin{align*}
        \frac{\nes\omega(b)/2-|\cls(b)|(2^n-1)}{2^n} &\ge \frac{\nes\omega(b)/2}{2/\epsilon_b^2}-S\tag{$2^n\le 2/\epsilon_b^2$ and $|\cls(b)|\le S$}\\
        &\ge\frac{\nes\epsilon^2/(2r\epsilon_b^2)}{2/\epsilon_b^2}-S \tag{by definition of $\epsilon_b$}\\
        &=\ge 16(\nc+\na)\log(rSK/\delta)-S\tag{the value of $\nes$}\\
        &\ge K\log(rSK/\delta).
    \end{align*}
    Conditioned on this event, with probability $1-\frac{\delta}{r}$, for any $j\in[K]$, there exists $(i,j)\in\mathcal{D}_n$ satisfying $g(i)=b$ by \cref{lem:sample actions}.
    Therefore, the lemma holds for block $b$ with probability at least $1-\frac{2\delta}{r}$.
    We complete the proof by a union bound on all blocks. 
\end{proof}

\subsection{Proof of \texorpdfstring{\cref{lem:pac-4}}{}}

\begin{proof}
Denote by $i$ the first context  being removed from $\mathcal{D}_n$, which satisfies $g(i)=b$. 
By the rule of shrinking $\mathcal{D}_n$, we have 
$$
|\widetilde A_n(i,j^\star)-  \widehat A_n(i^\star,j^\star)|\le 4\sqrt{\frac{\log(rSK/\delta)}{2^n}}.
$$
By \cref{lem:subg} and a union bound on all pairs in $\mathcal{D}_n$, with probability $1-2{\delta}/r$, we have 
\begin{align}
    |\widetilde A_n(i'',j'')-A(i'',j'')|\le \frac{1}{2}\sqrt{\frac{\log(rSK/\delta)}{2^n}} \quad \text{and}\quad|\widehat A(i'',j'')-A_n(i'',j'')|\le \frac{1}{2}\sqrt{\frac{\log(rSK/\delta)}{2^n}}\label{eq:pac-good-event}
\end{align}
for every $(i'',j'')\in \mathcal{D}_n$.
Therefore, we have
\begin{equation}\label{eq:pac-1}
    | A(i,j^\star)-  A(i^\star,j^\star)|\le 5\sqrt{\frac{\log(rSK/\delta)}{2^n}},
\end{equation}

By \cref{lem:pac-2}, we know that for any $j\in[K]$, there exists $(i',j)\in\mathcal{D}_n$ satisfying $(g(i'),j)=(b,j)$,  before we remove context $i$.
Therefore, by the definition of $(i^\star,j^\star)$, for any $j\in[K]$, there exists $(i',j)\in\mathcal{D}_n$ satisfying $g(i')=b$ and 
\begin{equation*}
  \widehat A_n(i^\star,j^\star) \ge \widehat A_n(i',j),  
\end{equation*}
which, again by \cref{eq:pac-good-event} with probability $1-2{\delta}/r$
\begin{equation}\label{eq:pac-2}
  A(i^\star,j^\star) \ge  \max_{j} \mu(b,j)-\sqrt{\frac{\log(rSK/\delta)}{2^n}}.
\end{equation}
By combining \cref{eq:pac-1} and \cref{eq:pac-2}, we conclude that $j^\star$ is a $10\sqrt{\log(rSK/\delta)}\epsilon_b$-optimal action for block $b$ because
$$
5\sqrt{\frac{\log(rSK/\delta)}{2^n}}\ge 10\sqrt{\frac{\log(rSK/\delta)}{1/\epsilon_b^2}}= 10\sqrt{\log(rSK/\delta)}\epsilon_b
$$
This completes the proof by a union bound on all blocks.
\end{proof}

\subsection{Proof of \texorpdfstring{\cref{lem:pac-3}}{}}

\begin{proof}
    It suffices to show that for each accuracy level $n$, we will add at most $r$ actions into $\mathcal{W}$ before shrinking $\mathcal{D}_n$ to $\emptyset$. Below we prove a stronger argument: each time we shrink $\mathcal{D}_n$, we will remove all the contexts from at least one block from $\mathcal{D}_n$. %
    
    Denote by $(i,j)$ an arbitrary context-action pair from $\mathcal{D}_n$ such that $g(i)=g(i^\star)$, then by \cref{eq:pac-good-event}, we have%
    \begin{align*}
    \left|\widetilde A_n(i,j^\star)-  \widehat A_n(i^\star,j^\star)\right|
    \le  \left|\widetilde A_n(i,j^\star)-  A(i,j^\star)\right|+ \left| A(i^\star,j^\star)-  \widehat A_n(i^\star,j^\star)\right|
    \le 4\sqrt{\frac{\log(rSK/\delta)}{2^n}},
    \end{align*}
    which implies all context-action pairs with context from block $g(i^\star)$ will be removed from $\mathcal{D}_n$. 
\end{proof}

\subsection{Proof of Theorem \ref{thm:pac-general} }

\begin{proof}
    By \cref{lem: sample iid rvs} and a union bound over $[S]$, we have that with probability $1-\delta$ for all $i\in[S]$:
    $$
    \frac12\left(\nu(i) - \frac{\log(S/\delta)}{J}\right) \le \hat\nu(i) \le 2\left(\nu(i) + \frac{\log(S/\delta)}{J}\right), 
    $$
which implies that $\nu(i)\ge \epsilon/(4S)$  %
for $\hat\nu(i)\ge \epsilon/S$ and $\nu(i)\le 3\epsilon/S$ %
for $\hat\nu(i)<  \epsilon/S$.  
By definition, for any $l\in[L-1]$, $\mathcal{X}_l$ consists of contexts such that $\hat\nu(i)>2^{-l-1}\ge \epsilon/S$. 
As a result, for any $l\in[L-1]$, by Lemma~\ref{lem: sample iid rvs}, we have that at least $N \nu(\mathcal{X}_l)/2$ %
out of $N$ samples corresponds to $\mathcal{X}_l$ and thus can be used in learning $\pi_l$ by executing Algorithm \ref{alg:pac-unif-context},  where $\nu(\mathcal{X}_l):=\sum_{i\in\mathcal{X}_l} \nu(i) \ge \epsilon/(4S)$.
Moreover, notice that inside each $\cX_l$, the conditional context distribution is almost uniform, so we can invoke 
 Theorem \ref{thm:pac} to obtain that $\pi_l$ is 
\[
\frac{C\epsilon}{\sqrt{\nu(\mathcal{X}_l)}}\text{-optimal}
\]
over context $\mathcal{X}_l$, where $C=2\sqrt{2}(10\sqrt{\log(rSK/\delta)}+1)$. 
This means that, 
\[
\sum_{i\in\cX_l} \frac{\nu(i)}{\nu(\cX_l)}\left(\max_j A(i,j)- A(i,\pi_l(i))\right) = \frac{C\epsilon}{\sqrt{\nu(\mathcal{X}_l)}}.
\]
As a result, the total suboptimality of $\piout$ can be upper bounded as following
\begin{align*}
    & \sum_{i\in[S]}  \nu(i)  \left( \max_{j\in[K]} A(i,j) - A(i,\piout(i))\right) \\
   \le & \sum_{l\in[L-1]} \sum_{i\in\cX_l} \nu(i)  \left( \max_{j} A(i,j) - A(i,\pi_l(i))\right)+\sum_{i\in\cX_L} \nu(i) \\
   \\
   \le & C\sum_{l\in[L-1]}\nu(\mathcal{X}_l)\times \frac{\epsilon}{\sqrt{\nu(\mathcal{X}_l)}} +\epsilon= C\sqrt{L}\epsilon,
\end{align*}
where the final equality uses Cauchy-Schwartz inequality and $L\le \log(S/\epsilon)+1$.
\end{proof}

\subsection{Proof of Theorem \ref{thm:pac lower-bound}}

\begin{proof}
Since $r\le S$, we have
$$
\Omega(\min\{r(S+K),SK\})
=\begin{cases}
\Omega(SK),\quad &r\ge K,\\
\Omega(rS),  &r<K ~\& ~S\ge K,\\
\Omega(rK),  &r<K ~\& ~S< K.
\end{cases}
$$

\paragraph{Case (1)} We construct the following hard instance:
\begin{itemize}
\item The reward  after pulling action $j\in[K]$ in block $b\in[r]$  is sampled from distribution ${\rm Bernoulli}(1/2+\epsilon\mathbbm{1}(b=j))$.  
    \item The context distribution is uniform over $[S]$. For each $i\in[S]$, we sample $g(i)$ uniformly at random from $[K]$.
\end{itemize}
The above instance is the standard one commonly used in proving lower bounds for contextual bandit with $S$ contexts and $K$ arms \citep[e.g.,][]{lattimore2020bandit}. And learning an $\epsilon$-optimal policy for the above hard instance requires at least $\Omega(SK/\epsilon^2)$ samples.

\paragraph{Case (2)} We only need to slightly modify the above hard instance:
\begin{itemize}
\item The reward  after pulling action $j\in[K]$ in block $b\in[r]$  is sampled from distribution ${\rm Bernoulli}(1/2+\epsilon\mathbbm{1}(b=j))$.  
    \item The context distribution is uniform over $[S]$. For each $i\in[S]$, we sample $g(i)$ uniformly at random from $[r]$.
\end{itemize}
The above modified problem is equivalent to the original one with $S$ contexts but $r$ arms. As a result,  a lower bound of form  $\Omega(Sr/\epsilon^2)$ holds.

\paragraph{Case (3)} Similarly, we slightly modify the first hard instance:
\begin{itemize}
\item For each block $b\in[r]$, sample $j^\star_b$ uniformly at random from $[K]$.  The reward  after pulling action $j\in[K]$ in block $b\in[r]$  is sampled from distribution ${\rm Bernoulli}(1/2+\epsilon\mathbbm{1}(j=j_b^\star))$.  
    \item The context distribution is uniform over $[r]$ and $g(i)= \min\{i,r\}$.  
\end{itemize}
The above modified problem is equivalent to the original one with $K$ arms but $r$ contexts. As a result,  a lower bound of form  $\Omega(Kr/\epsilon^2)$ holds.

\end{proof}

\section{Proof of \texorpdfstring{\cref{thm: regret multiple}}{}}\label{app: regret multiple blocks}%

In this section, we first show key lemmas to prove \cref{thm: regret multiple}.
The proofs of the lemmas are deferred to \cref{app: regret multiple blocks proofs}.

In the following discussion, we consider a single phase (i.e. fix an error $\epsilon_h$).
Similar to the analysis of other phased elimination algorithms, we have to show that in a phase specified by error level $\epsilon_h$, with high probability, (i) the optimal arm is not eliminated and (ii) all $\omega(\epst_h)$-suboptimal arms are eliminated, that is, all arms in $\good_h(\cs)$ for all $\cs$ are  ${O}(\epst_h)$-optimal.

The following lemma is the counterpart of \cref{lem:pac-2}, which ensures that every arm is played in every block. 
\begin{lemma}\label{lem: every arm is played}
With probability at least $1-2\delta_h$, for any cluster $\cs\in\partition_h$ and any block $b$ so that $\cls(b)\subseteq \cs$, we have that: for any $\ac\in\good_h(\cs)$ there exists $(i,j)\in\mathcal{D}_h$ satisfying $g(i)=b$.
\end{lemma}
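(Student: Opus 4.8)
The plan is to mirror the proof of \cref{lem:pac-2}, specialised to the uniform block distribution of this subsection and to the per-cluster exploration sets used in \cref{alg: multiple blocks}. Fix a cluster $\cs \in \partition_h$ together with a block $b$ satisfying $\cls(b) \subseteq \cs$; the whole argument is carried out for this pair and then combined by a union bound over the at most $r$ blocks. The structural role of the hypothesis $\cls(b) \subseteq \cs$ is that \emph{every} context of block $b$ lies in the same cluster $\cs$ and hence, during the Step~1 call to \cref{alg:pac-unif-context-data} on \cref{line: regret uniform data}, explores exactly the same arm set $\mathcal{K}_h(i) = \good_h(\cs)$; this is what lets a single coupon-collector argument over $\good_h(\cs)$ succeed. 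One should also note that $\partition_h$ and the sets $\good_h(\cs)$ are those inherited at the start of phase $h$ and are held fixed throughout data collection (they are only refined afterwards in Step~2), so $\mathcal{K}_h$ is constant during the call.

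First I would lower bound the number of rounds whose context falls in block $b$. Since $\db$ is uniform, $\db(b) = 1/r$, and the collection loop runs for $\nes_h = r(S+K)\lg_h/\epsilon_h^2$ rounds with i.i.d.\ contexts, so $\nes_h \db(b) = (S+K)\lg_h/\epsilon_h^2 \ge \lg_h \ge 16\log(r/\delta_h)$ and \cref{lem: sample iid rvs} yields at least $\nes_h\db(b)/2$ block-$b$ rounds with probability $1 - \delta_h/r$. Next I would turn this count of rounds into a count of pairs deposited into $\mathcal{D}_h$. A pair is recorded only after its context has been played $2^{n_h}$ times (\cref{line:pac-rnd-re2}), and here $2^{n_h} = 1/\epsilon_h^2 = 2^h$ exactly; since each of the $|\cls(b)| \le S$ block-$b$ contexts leaves at most $2^{n_h}-1$ unused rounds, the number of recorded pairs with $g(i) = b$ is at least $\bigl(\nes_h\db(b)/2 - |\cls(b)|(2^{n_h}-1)\bigr)/2^{n_h} \ge (S+K)\lg_h/2 - S$, which comfortably exceeds the threshold $|\good_h(\cs)|\log(|\good_h(\cs)| r/\delta_h)$ needed below.

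Then I would apply the coupon-collector bound \cref{lem:sample actions} to the arms attached to these block-$b$ pairs. Each such arm is either the initialisation draw (\cref{line:pac-rnd-init}) or a reassignment draw (\cref{line:pac-rnd-re3}), every one uniform on $\good_h(\cs)$ and mutually independent, and independent of the context-arrival process that determines how many draws are made; hence, conditioned on the lower bound on the count from the previous step, the pooled arms are i.i.d.\ uniform samples from $\good_h(\cs)$. As their number exceeds $|\good_h(\cs)|\log(|\good_h(\cs)|/(\delta_h/r))$, \cref{lem:sample actions} ensures that every $j \in \good_h(\cs)$ is played on some block-$b$ context, except with probability $\delta_h/r$. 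Summing the two failure events gives $1 - 2\delta_h/r$ for the fixed pair $(\cs,b)$, and a union bound over the at most $r$ blocks gives the claimed $1 - 2\delta_h$.

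The step I expect to be the main obstacle is the independence bookkeeping in the coupon-collector argument rather than any arithmetic: one has to check that the (random) number of recorded block-$b$ pairs is determined by the context-arrival counts alone and is independent of the freshly sampled arm labels, so that conditioning on the lower bound for that number still leaves genuinely i.i.d.\ uniform draws for \cref{lem:sample actions}. The only other care needed is the already-noted point that the exploration set $\good_h(\cs)$ is common to all contexts of $b$ precisely because $\cls(b)\subseteq\cs$; without this hypothesis different block-$b$ contexts could explore different arm sets and the single coupon-collector count would not apply.
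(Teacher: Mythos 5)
Your proof is correct and follows essentially the same route as the paper, which simply invokes the proof of \cref{lem:pac-2} with $\omega(b)=1/r$, $n=n_h$, and the exploration set $[K]$ replaced by $\good_h(\cs)$ — exactly the three steps (Bernstein count of block-$b$ arrivals, conversion to deposited pairs, coupon collector over $\good_h(\cs)$) that you spell out. The independence bookkeeping you flag is handled the same implicit way in the paper, and your arithmetic checks out against the constants $\nes_h$, $\lg_h$, and $2^{n_h}=1/\epsilon_h^2$.
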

The above lemma ensures that every block has a least one context $\cx$ assigned to include $(i,\ac)$ in $\cD_h$.
Thus, every arm is explored for every block.

Next, we define an event $\goodevent_h$ under which the estimates $\rxh_h$ are good:
\begin{align*}
\goodevent_h &= \left\{ |\rxh_h(i,j) - \rx(i,j)|\le{\frac{\epst_h}{4}}{},~\text{$\forall(i,j)\in \cD_h$ } \right\} \;.
\end{align*}
The level of precision $\frac{\epst_h}{4}$ is more accurate than the elimination step and will be helpful in the analysis.
The following lemma states that $\goodevent_h$ is a high probability event. 

\begin{lemma}\label{lem: good event holds}
Event $\goodevent_h$ holds with probability $1-\delta_h$.    
\end{lemma}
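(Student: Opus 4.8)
The plan is to show that every empirical estimate stored during the data-collection step of phase $h$ is accurate to within $\epst_h/4$, and then take a union bound over the stored pairs. First I would pin down how many samples underlie each estimate. A pair $(i,j)$ is inserted into $\cD_h$ (\cref{alg:pac-unif-context-data}, \cref{line:pac-rnd-re2}) only at a time when context $i$ has just completed a full batch of $2^{n_h}$ plays on arm $j=\psi(i)$, and since $\psi(i)$ is held fixed between consecutive insertions (\cref{line:pac-rnd-re1}--\ref{line:pac-rnd-re3}), the stored value $\rxh_h(i,j)$ is the empirical mean of some number $M_{ij}\ge 2^{n_h}$ of reward samples obtained by playing $j$ on $i$. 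Because $n_h=\log(1/\epsilon_h^2)$ gives $2^{n_h}=1/\epsilon_h^2$, every estimate in $\cD_h$ averages at least $1/\epsilon_h^2$ rewards, each with mean $\rx(i,j)$ and $1$-subgaussian noise.

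The key technical point is that $M_{ij}$ is itself random and data-dependent. I would resolve this by conditioning on the $\sigma$-algebra generated by the context arrivals and the (reward-independent) randomness used to reassign arms. Conditioned on this information the counts $M_{ij}$ are deterministic, the number of distinct pairs appearing in $\cD_h$ is at most $SK$, and the rewards collected for each fixed pair form a conditionally $1$-subgaussian sequence centered at $\rx(i,j)$, so that \cref{lem:subg} (in its martingale form, which the standard proof covers) applies per pair: with probability at least $1-\delta'$, $|\rxh_h(i,j)-\rx(i,j)|\le 2\sqrt{\log(1/\delta')/M_{ij}}\le 2\epsilon_h\sqrt{\log(1/\delta')}$, where the last step uses $M_{ij}\ge 1/\epsilon_h^2$ and the fact that the bound is decreasing in the sample count.

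Next I would choose $\delta'=\delta_h/(rSK)$ and substitute the definition $\lg_h=64\log(rSK/\delta_h)$, so that $\log(1/\delta')=\log(rSK/\delta_h)=\lg_h/64$ and the per-pair deviation is at most $2\epsilon_h\cdot\sqrt{\lg_h}/8=\epsilon_h\sqrt{\lg_h}/4=\epst_h/4$, exactly the target precision; this is precisely why the constant $64$ was placed in $\lg_h$. A union bound over the at most $SK$ pairs in $\cD_h$ then gives a conditional failure probability of at most $SK\cdot\delta_h/(rSK)=\delta_h/r\le\delta_h$. Since this holds for every realization of the conditioning $\sigma$-algebra, it holds unconditionally, establishing that $\goodevent_h$ holds with probability at least $1-\delta_h$.

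The only genuinely delicate part of the argument is the random batch structure handled in the second paragraph: one must guarantee both that each stored estimate rests on at least $2^{n_h}$ samples of a \emph{fixed} mean $\rx(i,j)$ and that the number of estimates is bounded by $SK$ regardless of the realization. The conditioning argument dispatches both at once and lets us invoke \cref{lem:subg} cleanly; everything else is bookkeeping of constants.
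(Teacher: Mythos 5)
Your proof is correct and follows essentially the same route as the paper's: apply \cref{lem:subg} to each pair in $\cD_h$, using that each stored estimate averages at least $2^{n_h}=1/\epsilon_h^2$ samples so the deviation is at most $\epst_h/4$, then union bound over at most $SK$ pairs. Your additional conditioning argument to justify applying the concentration bound despite the data-dependent sample counts is a careful touch that the paper's one-line proof leaves implicit, but it does not change the substance of the argument.
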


Next, let $\ac_\cl=\argmax_{\ac\in[\na]}\mu(\cl,\ac)$ be the optimal arm in block $\cl$. %
The next lemma says that the optimal arm $\ac_\cl$ is not eliminated during the execution of the algorithm. 
\begin{lemma}\label{lem: all jc are not eliminated}
Assume action $\ac_\cl\in\good_h(\cs)$ for every block $\cl\in[\rk]$, and its corresponding partition $\partition_h \ni\cs\supseteq\cls(\cl)$. Then for every block $\cl\in[\rk]$, $\ac_\cl$ is not eliminated from $\good_{h+1}(\cs)$ with probability at least $1-3\delta_h$.
\end{lemma}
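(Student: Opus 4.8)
The plan is to condition on the intersection of two good events and then verify that, under them, the elimination test in \cref{line: perform arm elimination - multiple} never discards $\ac_\cl$. First I would invoke \cref{lem: good event holds} to obtain $\goodevent_h$ (every empirical mean in $\cD_h$ is accurate to $\epst_h/4$) and \cref{lem: every arm is played} to guarantee that each $\ac \in \good_h(\cs)$ has been played on at least one context of every block contained in $\cs$; a union bound gives both simultaneously with probability at least $1-3\delta_h$, matching the claimed confidence. Throughout I use that, by the hypothesis $\cs \supseteq \cls(\cl)$ and the fact that $\partition_h$ partitions $[\nc]$, each surviving cluster $\cs$ is a union of whole blocks $\{\cl' : \cls(\cl') \subseteq \cs\}$.

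The key structural fact I would extract from Step 2 is \emph{within-cluster homogeneity}: when the loop of \cref{line: detect large gaps - multiple} exits, no triple $(\overline{i},\underline{i},\ac)$ inside any $\cs$ satisfies $\rxh_h(\overline i,\ac)-\rxh_h(\underline i,\ac)\ge \epst_h$. Chaining this through $\goodevent_h$ shows that for any arm $\ac\in\good_h(\cs)$ and any two blocks $\cl,\cl'$ inside $\cs$ the true block rewards obey $|\mu(\cl,\ac)-\mu(\cl',\ac)| \le \epst_h/4 + \epst_h + \epst_h/4 = \tfrac{3}{2}\epst_h$ (using \cref{lem: every arm is played} to exhibit the needed contexts in $\cD_h$). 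Next I would show that $\muu_h(\cs,\ac)=\max_{\cx\in\cs,\,(\cx,\ac)\in\cD_h}\rxh_h(\cx,\ac)$ is a two-sided $\epst_h/4$-accurate estimate of $M(\ac):=\max_{\cl':\cls(\cl')\subseteq\cs}\mu(\cl',\ac)$: the upper bound is immediate from $\goodevent_h$ because every context entering the maximum lies in some block of $\cs$, and the lower bound uses \cref{lem: every arm is played} to supply a context in $\cD_h$ from the block attaining $M(\ac)$.

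With these pieces the survival test reduces to bounding $\max_{j'\in\good_h(\cs)} M(j') - M(\ac_\cl)$. Letting $(\tilde\cl,\tilde j)$ attain this maximum, so that $\max_{j'} M(j') = \mu(\tilde\cl,\tilde j)$ with $\tilde j\in\good_h(\cs)$, I would apply homogeneity to the single arm $\tilde j$ across blocks $\tilde\cl$ and $\cl$, then use that $\ac_\cl$ is optimal for block $\cl$, to get $\mu(\tilde\cl,\tilde j) \le \mu(\cl,\tilde j)+\tfrac{3}{2}\epst_h \le \mu(\cl,\ac_\cl)+\tfrac{3}{2}\epst_h \le M(\ac_\cl)+\tfrac{3}{2}\epst_h$. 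Combining with the $\epst_h/4$ accuracy of $\muu_h$ on both sides yields $\max_{j'} \muu_h(\cs,j') - \muu_h(\cs,\ac_\cl) \le \tfrac{3}{2}\epst_h + \tfrac{1}{2}\epst_h = 2\epst_h$, which is exactly the threshold in \cref{line: perform arm elimination - multiple}, so $\ac_\cl$ is retained in $\good_{h+1}(\cs)$.

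The main obstacle is conceptual rather than computational: a cluster $\cs$ may contain several blocks, so $\muu_h(\cs,\cdot)$ mixes rewards from distinct blocks, and a priori the empirical-best arm of $\cs$ need not be near-optimal for the specific block $\cl$. The resolution is precisely the within-cluster homogeneity produced by Step 2, which forces all blocks in $\cs$ to share nearly identical reward profiles; the delicate part is translating the \emph{empirical} gap guarantee (over pairs in $\cD_h$) into a guarantee on the \emph{true} rewards $\mu(\cdot,\ac)$ while keeping all additive constants within the $2\epst_h$ budget of the elimination rule.
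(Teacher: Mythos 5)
Your proof is correct and rests on the same three ingredients as the paper's: the concentration event $\goodevent_h$ from \cref{lem: good event holds}, the coverage guarantee of \cref{lem: every arm is played}, and the exit condition of the while loop in \cref{line: detect large gaps - multiple}; the union bound yielding $1-3\delta_h$ also matches. The organization, however, differs. The paper argues by contradiction and stays entirely local to block $\cl$: it takes the empirically best arm $\ac'$, uses the exit condition only for that single arm to relate $\muu_h(\cs,\ac')$ to $\rxh_h(\cx',\ac')$ at a context $\cx'\in\cls(\cl)$, and derives $\mu(\cl,\ac_\cl)+\epst_h/2<\mu(\cl,\ac')$, contradicting the optimality of $\ac_\cl$ in its own block. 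You instead prove two stronger intermediate facts --- true-reward homogeneity $|\mu(\cl,\ac)-\mu(\cl',\ac)|\le\tfrac{3}{2}\epst_h$ for all surviving arms and all blocks inside $\cs$, and two-sided $\epst_h/4$-accuracy of $\muu_h(\cs,\cdot)$ as an estimator of the cluster-level maximum $M(\cdot)$ --- and then chain them directly. Both arguments land on the same $2\epst_h$ budget; yours with equality, which is admissible because the elimination test is non-strict (and the while-loop exit condition is in fact strict, so there is a sliver of slack). Your version is slightly longer but makes the ``clusters are homogeneous after Step~2'' invariant explicit, which is a reusable statement; the paper's is leaner. One point worth making explicit in a write-up: your claim that every surviving cluster is a union of whole blocks is exactly the lemma's hypothesis applied simultaneously to all $\cl\in[\rk]$, and it is genuinely needed for your definition of $M(\ac)$ to dominate every term entering the maximum $\muu_h(\cs,\ac)$.
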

The high-level idea of the proof is that the error of the estimated mean is smaller than $\epst_h$, so $\rxh_t(i,\ac_\cl)$ will not be much worse than other arms, given that its true mean is largest. The next lemma shows that arms in $\good_{h+1}(\cs)$ are all $O(\epsilon_h)$-optimal. %
Formally, we say an arm $\ac$ in a block $\cl$ is $\epsilon$-optimal if $\max_{\ac'}\mu(\cl,\ac')-\mu(\cl,\ac)\le\epsilon$. Similarly, we say an arm $\ac$ in a block $\cl$ is $\epsilon$-suboptimal if $\max_{\ac'}\mu(\cl,\ac')-\mu(\cl,\ac)\ge\epsilon$.
\begin{lemma}\label{lem: all bads are eliminated - multiple blocks}
For any block $\cl\in[\rk]$ and its corresponding cluster $\partition_h\ni\cs\supseteq\cls(\cl)$, all $3\epst_h$-suboptimal arms in block $\cl$ are eliminated in $\good_{h+1}(\cs)$ with probability at least $1-3\delta_h$. Consequently, only $6\epst_h$-optimal arms in block $\cl$ are played in phase $h+1$ for every context in block $b$. 
\end{lemma}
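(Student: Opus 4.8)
The plan is to condition on two high-probability events and then convert the termination condition of the clustering loop in Step~2 into a statement about the \emph{true} block rewards. I would work on the intersection of the good estimation event $\goodevent_h$, which holds with probability $1-\delta_h$ by \cref{lem: good event holds}, and the ``every arm is explored in every block'' event of \cref{lem: every arm is played}, which holds with probability $1-2\delta_h$; a union bound yields the claimed $1-3\delta_h$. On this intersection, for every $\ac\in\good_h(\cs)$ there is a pair in $\mathcal{D}_h$ whose context lies in $\cls(\cl)$ and whose estimate is within $\epst_h/4$ of the block reward $\mu(\cl,\ac)$. I also invoke the (separately established) property that clustering never separates two contexts of the same block, so the cluster $\cs\supseteq\cls(\cl)$ named in the statement is well defined, together with the fact that once the \textbf{while} loop of Step~2 halts there is no arm $\ac$ and contexts $\overline i,\underline i\in\cs$ with $\widehat{A}_h(\overline i,\ac)-\widehat{A}_h(\underline i,\ac)\ge\epst_h$.

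The heart of the argument is two matching estimates for the aggregated quantity $\muu_h(\cs,\ac)=\max_{\cx\in\cs}\widehat{A}_h(\cx,\ac)$, where the maximum ranges over contexts carrying data for $\ac$. For the lower bound, evaluating the cluster-wide maximum at the block-optimal arm $\ac_\cl$ on a block-$\cl$ context with data gives $\max_{\ac'}\muu_h(\cs,\ac')\ge\muu_h(\cs,\ac_\cl)\ge\mu(\cl,\ac_\cl)-\epst_h/4$. For the upper bound, fix any arm $\ac$ and let $\cx^\star$ achieve the maximum defining $\muu_h(\cs,\ac)$. Since $\cx^\star$ and some block-$\cl$ context $i_\cl$ both lie in $\cs$ and carry data for $\ac$, the halting condition gives $\widehat{A}_h(\cx^\star,\ac)<\widehat{A}_h(i_\cl,\ac)+\epst_h$, and $\goodevent_h$ bounds $\widehat{A}_h(i_\cl,\ac)\le\mu(\cl,\ac)+\epst_h/4$, so $\muu_h(\cs,\ac)\le\mu(\cl,\ac)+O(\epst_h)$. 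Subtracting the two bounds yields
\[
\max_{\ac'}\muu_h(\cs,\ac')-\muu_h(\cs,\ac)\;\ge\;\bigl(\mu(\cl,\ac_\cl)-\mu(\cl,\ac)\bigr)-O(\epst_h),
\]
that is, the empirical elimination margin tracks the true block-$\cl$ suboptimality up to an additive $O(\epst_h)$ slack.

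Both conclusions then follow by reading this inequality in the two directions. For the first claim, if $\ac$ is $3\epst_h$-suboptimal in block $\cl$ the right-hand side exceeds the elimination threshold $2\epst_h$, so $\ac\notin\good_{h+1}(\cs)$. For the second claim, I reverse the chain: any arm surviving into $\good_{h+1}(\cs)$ has empirical margin at most $2\epst_h$, hence true block-$\cl$ suboptimality $O(\epst_h)\le 6\epst_h$; since in phase $h+1$ the contexts of $\cls(\cl)$ are explored only through the inherited good sets $\good_{h+1}(\cs')\subseteq\good_{h+1}(\cs)$ (clustering only refines the partition and elimination only removes arms), every arm played on block $\cl$ in phase $h+1$ is $6\epst_h$-optimal.

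The main obstacle is that $\muu_h(\cs,\cdot)$ maximizes over contexts drawn from \emph{all} blocks lumped together in $\cs$, so a priori the maximum for an arm could overstate block $\cl$'s reward by the full inter-block reward gap, destroying the comparison. The clustering halting condition is precisely what controls this: a terminated loop certifies that, within $\cs$, every arm's estimate varies by less than $\epst_h$ across contexts, which under $\goodevent_h$ forces all blocks lumped into $\cs$ to have near-identical reward vectors (up to $O(\epst_h)$), so the cluster behaves like a single block. Making the numerology close---so that the $\epst_h/4$ estimation error, the $\epst_h$ clustering gap, and the $2\epst_h$ elimination margin combine to push a $3\epst_h$-suboptimal arm strictly past the threshold while keeping survivors $6\epst_h$-optimal---is the delicate bookkeeping, and is where the specific constants in the definitions of $\lg_h$ and $\epst_h$ are exploited.
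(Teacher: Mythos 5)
Your proposal is correct and follows essentially the same route as the paper's proof: lower-bound $\max_{\ac'}\muu_h(\cs,\ac')$ via the optimal arm $\ac_\cl$ evaluated on a block-$\cl$ context supplied by \cref{lem: every arm is played}, upper-bound $\muu_h(\cs,\ac)$ using the good event $\goodevent_h$ together with the terminated homogeneity test, and conclude that a $3\epst_h$-suboptimal arm crosses the $2\epst_h$ elimination threshold while survivors are $6\epst_h$-optimal. If anything you are more explicit than the paper about the one genuinely delicate point---why the cluster-wide maximum $\muu_h(\cs,\ac)$ does not overstate $\mu(\cl,\ac)$ when $\cs$ lumps several blocks (the paper's one-line chain silently charges only $\epst_h/4$ for this); just note that once the $\epst_h$ halting slack is charged honestly the total slack is $3\epst_h/2$ rather than $\epst_h/2$, so the arithmetic around the $3\epst_h$ threshold is exactly the bookkeeping you flag and deserves to be written out.
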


Finally, we need to argue that \cref{alg: the cluster algorithm - multiple} is not called too many times.
To show this, we first provide a general guarantee of \cref{alg: the cluster algorithm - multiple}. 
\begin{lemma}\label{lem: clustering}
    Suppose we have context $i_u\in\cls(b_u)\subseteq\cs$ in block $b_u$ and context $i_l\in\cls(b_l)\subseteq\cs$ in block $b_l$ so that
    $$
    \rx(i_u,j)-\rx(i_l,j)=\mu(b_u,j)-\mu(b_l,j)>\frac{3r}{2}{\sqrt{\lg}\cdot\epsilon'}.
    $$
    Then \cref{alg: the cluster algorithm - multiple} separates $\cls(b_u)$ and $\cls(b_l)$ perfectly with probabilty at least $1-2\delta'$.
    In other words, there exist indices $c_u$ and $c_l$, $c_u\ne c_l$, such that $\cls(b_u)\subseteq\partition_{c_u}$ and $\cls(b_l)\subseteq\partition_{c_l}$.    
\end{lemma}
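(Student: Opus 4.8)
The plan is a clean-event argument: first show that all the empirical means $\rxh(i,j)$ that drive the sort in \cref{line: sort means} are accurate, and then argue that on this event the cutting rule in \cref{line: clustering large gap} simultaneously refrains from splitting any single block and is forced to cut somewhere between $b_u$ and $b_l$.

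First I would fix the clean event
\[
\goodevent=\Bigl\{\,|\rxh(i,j)-\rx(i,j)|\le\tfrac14\sqrt{\lg}\,\epsilon'\ \text{ for all } i\in\cs\,\Bigr\}.
\]
To prove $\Pr[\goodevent]\ge 1-2\delta'$, note that the call to \cref{alg:pac-unif-context-data} with $\mathcal{K}(i)=\{j\}$ for $i\in\cs$ plays arm $j$ on every such context, and over the $L'=\nc\lg/\epsilon'^2$ rounds each $i\in\cs$ is drawn $\Theta(\lg/\epsilon'^2)$ times because the context distribution is (almost) uniform; \cref{lem: sample iid rvs} makes this quantitative after a union bound over the $\le \nc$ contexts. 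Conditioned on this, the accuracy level ensures that $\rxh(i,j)$ aggregates $\Omega(\lg/\epsilon'^2)$ i.i.d.\ samples, so \cref{lem:subg} and a second union bound over $\cs$ give estimation error of order $\epsilon'\sqrt{\log(\nc/\delta')/\lg}$, which is below $\tfrac14\sqrt{\lg}\,\epsilon'$ precisely because $\lg=64\log(\nc/\delta')$. Since all contexts of a block share the same mean, $\rx(i,j)=\mu(g(i),j)$, on $\goodevent$ every estimate lies within $\tfrac14\sqrt{\lg}\,\epsilon'$ of its block mean $\mu(b,j)$.

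The heart of the argument, and the step I expect to be the main obstacle, is a pigeonhole over the (at most $r$) distinct block means occurring in $\cs$. I would first show that \emph{no cut lands inside a block}: the estimates of a fixed block all lie in a window of width $\tfrac12\sqrt{\lg}\,\epsilon'<\sqrt{\lg}\,\epsilon'$, and any context sorted between two contexts of that block also lies in that window, so every consecutive gap there is below the threshold $\sqrt{\lg}\,\epsilon'$; hence each block is contained in a single output part $\partition_c$. For the separation, restrict attention to the block means in $[\mu(b_l,j),\mu(b_u,j)]$; there are $p\le r$ of them, the extremes are exactly $\mu(b_u,j)$ and $\mu(b_l,j)$, and the consecutive differences telescope to a span $>\tfrac{3r}{2}\sqrt{\lg}\,\epsilon'$. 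By pigeonhole some consecutive pair of means $m_s>m_{s+1}$ differs by more than $\tfrac{3r}{2(p-1)}\sqrt{\lg}\,\epsilon'\ge\tfrac{3r}{2(r-1)}\sqrt{\lg}\,\epsilon'>\tfrac32\sqrt{\lg}\,\epsilon'$. Because every estimate is within $\tfrac14\sqrt{\lg}\,\epsilon'$ of its mean, the empirical gap across this pair exceeds $\tfrac32\sqrt{\lg}\,\epsilon'-2\cdot\tfrac14\sqrt{\lg}\,\epsilon'=\sqrt{\lg}\,\epsilon'$, which triggers a cut in \cref{line: clustering large gap}.

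Finally I would check that this cut separates the two target blocks. Since $\mu(b_u,j)=m_1\ge m_s$ and $\mu(b_l,j)=m_p\le m_{s+1}$ while errors are at most $\tfrac14\sqrt{\lg}\,\epsilon'$, all estimates of $\cls(b_u)$ exceed $m_s-\tfrac14\sqrt{\lg}\,\epsilon'$ and all estimates of $\cls(b_l)$ stay below $m_{s+1}+\tfrac14\sqrt{\lg}\,\epsilon'$; as $m_s-m_{s+1}>\tfrac12\sqrt{\lg}\,\epsilon'$, the detected cut lies strictly between all of $b_u$ and all of $b_l$ in the descending order. Combined with the ``no cut inside a block'' claim, this yields indices $c_u\ne c_l$ with $\cls(b_u)\subseteq\partition_{c_u}$ and $\cls(b_l)\subseteq\partition_{c_l}$, establishing the lemma on the event $\goodevent$ of probability at least $1-2\delta'$.
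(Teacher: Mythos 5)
Your proposal is correct and follows essentially the same strategy as the paper's proof: condition on the clean event that every $\rxh(i,j)$ is within $\tfrac14\sqrt{\lg}\,\epsilon'$ of its block mean (obtained exactly as you describe, via \cref{lem: sample iid rvs} and \cref{lem:subg} with union bounds), and then use the fact that there are at most $r$ distinct block means to locate a consecutive empirical gap exceeding $\sqrt{\lg}\,\epsilon'$ between $b_u$ and $b_l$. The only cosmetic differences are that you establish the gap directly by pigeonhole on the true block means whereas the paper argues by contraposition with a telescoping sum over the sorted empirical block maxima, and that you spell out the ``no cut lands inside a block'' step explicitly, which the paper leaves implicit here and invokes in \cref{lem: clustering times- multiple}.
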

Consequently, with a high probability, \cref{alg: multiple blocks} makes progress in terms of clustering contexts every time calling \cref{alg: the cluster algorithm - multiple} and the number of calls is bounded by $r$, as shown in the following lemma. 
\begin{lemma}\label{lem: clustering times- multiple}
When \cref{alg: the cluster algorithm - multiple} is called by \cref{alg: multiple blocks}, it separates at least two blocks and never separates contexts in the same block with probability at least $1-5\delta_h$.
    Consequently, \cref{alg: the cluster algorithm - multiple} is called at most $\rk$ times.
\end{lemma}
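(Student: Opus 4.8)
The plan is to prove two per-call guarantees and then read off the bound on the number of calls combinatorially. Throughout, fix a phase $h$ and condition on the good event $\goodevent_h$ of \cref{lem: good event holds} (probability $\ge 1-\delta_h$) together with the sampling event of \cref{lem: every arm is played} (probability $\ge 1-2\delta_h$); under these, every estimate $\rxh_{h}(i,j)$ that appears in the while-loop test of \cref{alg: multiple blocks} exists and lies within $\epst_h/4$ of the truth $\rx(i,j)$. For the \emph{progress} guarantee, recall that a call is triggered by some $\cs\in\partition_h$, contexts $\overline i,\underline i\in\cs$ and arm $j$ with $\rxh_{h}(\overline i,j)-\rxh_{h}(\underline i,j)\ge\epst_h$. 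Transferring to the true means,
\begin{align*}
\rx(\overline i,j)-\rx(\underline i,j)\ \ge\ \epst_h-2\cdot\tfrac{\epst_h}{4}\ =\ \tfrac{\epst_h}{2}\ >\ 0,
\end{align*}
so $\overline i$ and $\underline i$ lie in distinct blocks $b_u=g(\overline i)$, $b_l=g(\underline i)$. Substituting $\epsilon'=\epsilon_h/(4r)$ and noting that the internal threshold $\lg=64\log(rS/\delta_h)$ of \cref{alg: the cluster algorithm - multiple} satisfies $\lg\le\lg_h$, the true gap obeys $\epst_h/2=\sqrt{\lg_h}\,\epsilon_h/2\ge\sqrt{\lg}\,\epsilon_h/2>\tfrac{3r}{2}\sqrt{\lg}\,\epsilon'$, which is exactly the hypothesis of \cref{lem: clustering}. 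Hence, with probability $\ge1-2\delta'$, \cref{alg: the cluster algorithm - multiple} places $\cls(b_u)$ and $\cls(b_l)$ in distinct output parts, so at least two blocks are separated.

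For the \emph{soundness} guarantee (no block is ever split) I would argue directly from the sorting rule of \cref{alg: the cluster algorithm - multiple}, which opens a new part only when two consecutive sorted estimates on the separating arm $j$ differ by at least $\sqrt{\lg}\,\epsilon'$. Under the same concentration event behind \cref{lem: clustering} (whose confidence $\delta'$ and $\log S$ factor, both absorbed into $\lg$, render all $|\cs|\le S$ fresh estimates simultaneously accurate enough that any two estimates of contexts in a common block differ by strictly less than $\sqrt{\lg}\,\epsilon'$), every pair of contexts in a block $b$ with $\cls(b)\subseteq\cs$ has equal true reward and hence estimates inside an interval of width below the splitting threshold. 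Since every context sorted between such a pair also has its estimate in that interval, no consecutive gap there reaches the threshold, so all of $\cls(b)$ lands in a single part; this holds for every block at once.

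The bound on the number of calls is now combinatorial. By soundness, every cluster maintained in $\partition_h$ is a union of whole blocks, so $|\partition_h|\le r$ throughout; by progress, each successful call replaces one cluster with at least two parts and therefore increases $|\partition_h|$ by at least one; and since \cref{alg: multiple blocks} only refines $\partition_h$ across phases, this count never decreases. Starting from the single cluster $[\nc]$, the total number of invocations is thus at most $r-1<r$. Finally, collecting the failure probabilities --- $\delta_h$ for $\goodevent_h$, $2\delta_h$ for \cref{lem: every arm is played}, and $2\delta'\le2\delta_h$ for \cref{lem: clustering} --- yields the claimed $1-5\delta_h$ per call.

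The main obstacle is the soundness step. It is not enough that adjacent same-block contexts have nearby estimates: one must reason about the whole sorted sequence, ruling out that contexts from other blocks interleaved between two members of a block create an intervening large gap, and one must calibrate the split threshold $\sqrt{\lg}\,\epsilon'$ against twice the worst-case estimation error uniformly over all $S$ contexts. This calibration is precisely what forces $\epsilon'=\epsilon_h/(4r)$, and hence the $\otilde{r^2}$ blow-up in the clustering sample budget that is responsible for the extra factor of $r$ in the final regret bound.
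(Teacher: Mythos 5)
Your proposal is correct and follows essentially the same route as the paper: the paper's own proof consists precisely of your ``progress'' step --- transferring the triggering gap $\widehat{A}_h(\overline{i},j)-\widehat{A}_h(\underline{i},j)\ge\epst_h$ to a true gap of $\epst_h/2$ and verifying $\epst_h/2>\tfrac{3r}{2}\sqrt{\lg}\,\epsilon'$ so that \cref{lem: clustering} applies --- and then asserts the count. Your additional elaboration of the soundness argument (same-block contexts never straddle a splitting gap in the sorted order) and the combinatorial bound $|\partition_h|\le r$ only makes explicit what the paper leaves implicit inside \cref{lem: clustering}, so there is no substantive difference.
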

We are now ready to bound regret.

\begin{proof}[Proof of \cref{thm: regret multiple}]
    Since there are at most $O(\log_T)$ phases, it suffices to bound regret at a single phase $h$. 
    Conditioned on all the high probability good events in the previous lemmas, we first bound the total number of timesteps spent in phase $h$.
    In the data collection stage, we use at most $\nes_h=\tfrac{r(S+K)\lg_h}{\epsilon_h^2}$ samples;
    as for the clustering stage, by \cref{lem: clustering times- multiple}, we know the total length of executing \cref{alg: the cluster algorithm - multiple} is at most
    $$
    \rk L' =r{{\nc\lg}/{\epsilon'^2}}= {{16\nc\rk^3\lg}/{\epsilon_h^2}}=\Otilde{\frac{\rk^3\nc}{\epsilon_h^2}}.
    $$
    Thus, the length of phase $h$ is the minimum of $T$ and $\Otilde{\frac{\rk^3(\nc+\na)}{\epsilon_h^2}}$.
    Therefore, by \cref{lem: all bads are eliminated - multiple blocks}, the regret is at most (recall that $\epst_h=\sqrt{\log_h}\cdot\epsilon_h=\otilde{\epsilon_h}$)
    \begin{align*}\label{eq: sqrt regret}
    6\epst_h \cdot \min\left\{T,\Otilde{\frac{\rk^3(\nc+\na)}{\epsilon_h^2}}\right\}&= \min\left\{{6T\epst_h},\Otilde{\frac{\rk^3(\nc+\na)}{\epsilon_h}}\right\}\\
    &= \min\left\{\Otilde{T\epsilon_h},\Otilde{\frac{\rk^3(\nc+\na)}{\epsilon_h}}\right\}\\
    &=\Otilde{\sqrt{\rk^3(\nc+\na)T}} \;.
    \end{align*}
    On the other hand, the bad events happen with probability $O({\delta_h})=O({\epsilon_h^2/(r^3SK)})$.
    In this case the regret contributes at most $\otilde{1}$.
\end{proof}

\section{Missing Proofs in \texorpdfstring{\cref{app: regret multiple blocks}}{}} \label{app: regret multiple blocks proofs} 

\subsection{Proof of \texorpdfstring{\cref{lem: every arm is played}}{}}

\begin{proof}
Note that under the uniform block assumption, we have $\db(b)=\frac{1}{r}\ge\frac{\epsilon_h}{r}$.
Thus, the proof follows directly from the proof of \cref{lem:pac-2} when $\epsilon_b=\epsilon_h$, $\delta=\epsilon_h^2$, and $n=n_h$.
The only difference is when applying \cref{lem:sample actions}, \cref{lem:pac-2} uses $\mathcal{K}=[K]$ but we need $\mathcal{K}=\good_h(\cs)$ here. 
\end{proof}

\subsection{Proof of \texorpdfstring{\cref{lem: good event holds}}{}}

\begin{proof}
Fix an $(i,j)$ pair. Applying \cref{lem:subg}, we have with probability $1-\frac{\delta_h}{SK}$,
\begin{align*}
    \left|\rxh_h(i,j)-A(i,j)\right|\le 2\sqrt{\frac{\log(SK/\delta_h)}{2^{(n_h)}}}\le \frac{\epst_h}{4}.
\end{align*}
We complete the proof by applying a union bound on all $(i,j)$ pairs in $\cD_h$.
\end{proof}

\subsection{Proof of \texorpdfstring{\cref{lem: all jc are not eliminated}}{}}

\begin{proof}
   We prove by contradiction.
   Assume that $\ac_\cl$ is removed from $\good_h(\cs)$.
   Let 
   $$\ac'=\displaystyle\argmax_{\ac\in\good_h(\cs)}\displaystyle\muu_h(\cs,\ac)$$
   be the action achieving the highest empirical mean. 
   By \cref{lem: every arm is played} there exists a context $\cx'\in\cls(\cl)$ so that $(i',j')\in\cD_h$.
   Moreover, the assumption that $\ac_\cl$ is eliminated implies that the condition at \cref{line: detect large gaps - multiple} of \cref{alg: multiple blocks} does not hold for the current partition $\partition_h$, which further implies that there exists context $\overline{i}\in\cs$ so that 
   \[
   \rxh_h(\overline{i},\ac') -\rxh_h(\cx',\ac') = \muu_h(\cs,\ac')-\rxh_h(\cx',\ac')< \epst_h \;.
   \]
   On the other hand, by the assumption that $\ac_\cl$ is eliminated, again by \cref{lem: every arm is played} we have that there exists a context $\cx''\in\cls(\cl)$, $(\cx'',\ac_\cl)\in\cD_h$ such that
   $$
    \muu_h(\cs,\ac')-\rxh_h(\cx'',\ac_\cl) \ge \muu_h(\cs,\ac')-\muu_h(\cs,\ac_\cl) > 2\epst_h \;.
   $$
   Combining two inequalities, we have
   $$
   \rxh_h(\cx'',\ac_\cl)+\epst_h < \muu_h(\cs,\ac')-\epst_h<\rxh_h(\cx',\ac').
   $$
   This means that $\mu(\cl,\ac_\cl)+\frac{\epst_h}{2}<\mu(\cl,\ac')$ under $\goodevent_h$, which contradicts the optimality of $\ac_\cl$.
   Therefore, we conclude that $\ac_\cl$ is not eliminated. 
\end{proof}

\subsection{Proof of \texorpdfstring{\cref{lem: all bads are eliminated - multiple blocks}}{}}

\begin{proof}
   By \cref{lem: all jc are not eliminated}, $\ac_\cl$ is not eliminated for any block $\cl\in[\rk]$ with probability at least $1-3\delta_h$. 
   Thus, for any block $\cl\in[\rk]$ and arm $\ac\in\good_h(\cs)$ so that $\mu(\cl,\ac_\cl)-\mu(\cl,\ac)>3\epst_h$, we have
   \begin{align*}
       \max_{\ac'}\muu_h(\cs,\ac')-\muu_h(\cs,\ac)\ge\muu_h(\cs,\ac_\cl)-\muu_h(\cs,\ac)\ge \mu(\cl,\ac_\cl)-\mu(\cl,\ac)-2\cdot\frac{\epst_h}{4} > 2\epst_h \;.
   \end{align*}
   Therefore, $\ac$ is eliminated at \cref{line: perform arm elimination - multiple}. 
\end{proof}

\subsection{Proof of \texorpdfstring{\cref{lem: clustering}}{}}
\begin{proof}
    With probability $1-\frac{\delta_h}{S}$, context $i$ receives at least $2/\epsilon'^2\ge 2^{n'}$ samples by \cref{lem: sample iid rvs}.
    Thus, $\rxh(i,j)$ is well defined for every $i\in \cs$ with probability $1-\delta_h$.
    Moreover, by \cref{lem:subg} and a union bound, we have for every context $i$, with probability at least $1-\delta_h$,
    \begin{align}
        \left|\rx(i,j)-\rxh(i,j)\right|\le \frac{\sqrt{\lg}\cdot\epsilon'}{4}\label{eq:clustering concentration}
    \end{align}
    Clearly, we have $\rxh(i_u)\ge\rxh(i_l)$ under \cref{eq:clustering concentration}.
    Consequently, to simplify the notation, we do the following modification on labels of contexts and blocks. 
    First, we restrict the game to $\cs$, where there are $S'$ contexts and $r'$ blocks;
    also, we relabel contexts so that $i_u=1$, $i_l=S'$, and $$\rxh(i_u)=\rxh(1)\ge\rxh(2)\ge\cdots\ge\rxh(S'-1)\ge\rxh(S')=\rxh(i_l).$$
    Finally, given a context $i\in[S']$, we define
    $$
    \muu(b)=\max_{i'\in[S'], g(i')=b}\rxh(i',j) \quad\text{and}\quad \mul(b)=\min_{i'\in[S'], g(i')=b}\rxh(i',j)
    $$
    for its block $b=g(i)$ and we relabel blocks so that $b_u=1\le g(i)\le r'=b_l$ and
    $$
    \muu(b_u)= \muu(1)\ge \muu(b_u-1)\ge\cdots \ge\muu(b_l+1)\ge \muu(r')=\muu(b_l).
    $$
    It is not hard to see that this modification is without loss of generality. 
    We show next that there exists a context $i$, $i_u<i\le i_l$, such that \cref{line: clustering large gap} of \cref{alg: the cluster algorithm - multiple} holds, that is, 
    \begin{align}
           {\rxh({i-1,j})-\rxh(i,j)\ge\sqrt{\lg}\cdot \epsilon'}. \label{eq:exist k large}
    \end{align}
    We prove this by contradiction. Assume \cref{eq:exist k large} doesn't hold for any $k$. 
   
    Then we have
    \begin{align*}
        \rx(i_u,j)-\rx(i_l,j)&\le |\rx(i_u,j)-\muu(b_u)|+|\rx(i_l,j)-\muu(b_l)|+\muu(b_u)-\muu(b_l)\\
        &= \frac{\sqrt{\lg}\cdot\epsilon}{4}+\frac{\sqrt{\lg}\cdot\epsilon}{4}+\sum_{b=b_u+1}^{b_l}\muu(i_{b-1})-\muu(i_{b})\\
        &< \frac{\sqrt{\lg}\cdot\epsilon}{2}+\sum_{b=b_u+1}^{b_l}\muu(i_{b-1})-\mul(i_{b})+\sqrt{\lg}\cdot\epsilon'\\
        &\le \frac{\sqrt{\lg}\cdot\epsilon}{2}+\frac{3}{2}\cdot (r-1)\sqrt{\lg}\cdot\epsilon'\\
        &< \frac{3}{2}\cdot r\sqrt{\lg}\cdot\epsilon',
    \end{align*}
    which contradicts the condition that $\rx(i_u,j)-\rx(i_l,j)\ge \frac{3r}{2}\sqrt{\lg}\cdot\epsilon$. Therefore, we conclude that \cref{eq:exist k large} holds for some $k$.
\end{proof}

\subsection{Proof of \texorpdfstring{\cref{lem: clustering times- multiple}}{}}

\begin{proof}
   The proof follows directly from \cref{lem: clustering} and the fact that \cref{alg: multiple blocks} use $\epsilon'=\frac{\epsilon_h}{4r}$ and thus
   \begin{align*}
        \rx(\overline{i},j)-\rx(\underline{i},j)\ge \frac{\epst_h}{2}\ge \frac{\sqrt{\lg_h}\cdot\epsilon_h}{2}>\frac{3}{2}\cdot r\sqrt{\lg}\cdot\epsilon',
   \end{align*}
   which satisfies the condition of \cref{lem: clustering}.
\end{proof}

\section{Non-uniform Context Distribution}\label{app: context non-uniform}
We show a reduction to problems with approximately uniform context distributions. The cost of this reduction is an extra $\otilde{\sqrt{\nc T}}$ additive regret and an extra $O(\log(\nc T))$ multiplicative factor in regret. The idea is to learn the context distribution in the first $\otilde{\sqrt{\nc T}}$ timesteps.
With high probability, we can estimate $\dc(\cx)$ for any context $\cx$ with a constant multiplicative error as long as $\dc(\cx)=\widetilde \Omega(1/\sqrt{\nc T})$.%
Then we split the contexts into several buckets so that contexts within the same bucket have the same probability up to a constant factor.
For contexts $\cx$ with $\dc(\cx)=o(1/\sqrt{\nc T})$, we can not estimate the probability properly but we can simply ignore such contexts and suffer regret at most $O(T\cdot \nc \cdot 1/\sqrt{\nc T})=O(\sqrt{\nc T})$.
We then run the algorithm that handles uniform context distribution for each bucket separately.
Since there are $O(\log(\nc T))$ buckets, the overall regret is $O(\log(\nc T))$ times maximum regret over all subsets (or $\sqrt{\log(ST)}$ with refined analysis using a Cauchy–Schwarz inequality). 

\section{Proof of \texorpdfstring{\cref{thm: regret non-uniform}}{}}\label{app: regret non-uniform}%
Define 
$$
\epsilon_{h,b}=\max\left\{1,\frac{1}{{r\db(b)}}\right\}\epsilon_h,\quad\text{for}~b\in[r].
$$
Also, for every phase $h$ and every level $n$, we define 
\begin{align*}
    \epst_{h,n}=\sqrt{\frac{\lg_h}{2^{n}}}.
\end{align*}
Next, we show the counterpart of \cref{lem: every arm is played} in the non-uniform case.

\begin{lemma}\label{lem: every arm is played for large m}
With probability at least $1-2\delta_h$, for any cluster $\cs\in\partition_h$, any block $b$ with %
$\cls(b)\subseteq \cs$, we have that: for any level $n\le\lceil \log(1/\epsilon_{h,b}^2)\rceil$, action $\ac\in\good_{h,n}(\cs)$, there exists $(i,j)\in\mathcal{D}_h$ satisfying $g(i)=b$. %
\end{lemma}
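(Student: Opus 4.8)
The plan is to follow the template of the proof of \cref{lem:pac-2} (and of its uniform-block counterpart \cref{lem: every arm is played}), adapting the two counting steps to the level- and phase-dependent schedule length $\nes_{h,n}=r(S+K)\lg_h2^{(n+h)/2}$ and to the block-dependent target $\epsilon_{h,b}$. Fix a cluster $\cs\in\partition_h$, a block $b$ with $\cls(b)\subseteq\cs$, and a level $n$ with $1\le n\le\lceil\log(1/\epsilon_{h,b}^2)\rceil$ (when the ceiling is nonpositive the claim is vacuous, so we may assume $n\ge1$; note that validity of the level essentially forces $\omega(b)>\epsilon_h/r$, since $\lceil\log(1/\epsilon_{h,b}^2)\rceil\ge1$ requires $\epsilon_{h,b}<1$). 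Because every context of block $b$ lies in $\cs$, each such context is assigned actions drawn i.i.d.\ from $\unif{\good_{h,n}(\cs)}$, so it suffices to show that across the $\nes_{h,n}$ rounds at level $n$ enough pairs with context in block $b$ are committed to $\mathcal{D}_{h,n}$ for the coupon-collector bound (\cref{lem:sample actions}) to cover all of $\good_{h,n}(\cs)$.

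First I would lower bound the number of block-$b$ contexts seen during the $\nes_{h,n}$ rounds. By \cref{lem: sample iid rvs}, with probability $1-\delta'$ at least $\nes_{h,n}\omega(b)/2$ of them arrive, provided $\nes_{h,n}\omega(b)\ge16\log(1/\delta')$; using $n\ge1$ together with $\omega(b)>\epsilon_h/r$ and $2^{h/2}=1/\epsilon_h$ one checks $\nes_{h,n}\omega(b)\gtrsim(S+K)\lg_h$, which clears the threshold. Since a pair enters $\mathcal{D}_{h,n}$ once $2^n$ of its samples accumulate and at most $|\cls(b)|(2^n-1)\le S(2^n-1)$ samples are left unused, the number of committed pairs with context in block $b$ is at least $\nes_{h,n}\omega(b)/(2\cdot2^n)-S$.

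The crucial inequality is then $\nes_{h,n}\omega(b)/(2\cdot2^n)-S\ge K\log(K/\delta')$, and this is where the schedule $2^{(n+h)/2}$ earns its keep. Writing $2^{h/2}=1/\epsilon_h$ gives $\nes_{h,n}\omega(b)/(2\cdot2^n)=\tfrac12 r(S+K)\lg_h\,\omega(b)\,2^{(h-n)/2}$; the constraint $n\le\lceil\log(1/\epsilon_{h,b}^2)\rceil$ lower bounds $2^{(h-n)/2}$ by a constant times $\max\{1,1/(r\omega(b))\}$, which exactly cancels the $r\omega(b)$ factor and leaves $\gtrsim(S+K)\lg_h$, dominating $S+K\log(K/\delta')$ since $\lg_h\ge128$. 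I would verify the two regimes separately: for $r\omega(b)\ge1$ we have $\epsilon_{h,b}=\epsilon_h$, hence $n\le h$ and $2^{(h-n)/2}\ge1$, and $r\omega(b)\ge1$ finishes it directly; for $r\omega(b)<1$ we have $1/\epsilon_{h,b}^2=(r\omega(b))^2 2^h$, so the ceiling gives $h-n\ge-2\log(r\omega(b))-1$ and therefore $2^{(h-n)/2}\ge1/(\sqrt2\,r\omega(b))$, again cancelling $\omega(b)$.

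Conditioning on the arrival count, \cref{lem:sample actions} with $\mathcal{K}=\good_{h,n}(\cs)$ shows every $j\in\good_{h,n}(\cs)$ is committed together with some block-$b$ context, with probability $1-\delta'$. A union bound over the $r$ blocks and the at most $N_h$ valid levels, taking $\delta'=\delta_h/(rN_h)$ (absorbed by $\lg_h=128\log(rSKN_h/\delta_h)$), yields the stated $1-2\delta_h$. The main obstacle is purely the bookkeeping of the schedule: ensuring that the lower bound on $2^{(h-n)/2}$ coming from the ceiling on $n$ precisely offsets the $\omega(b)$ appearing in the arrival count in \emph{both} regimes, so that the resulting budget is block-size independent; everything else is a direct transcription of \cref{lem:pac-2}.
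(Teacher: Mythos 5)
Your proposal is correct and follows essentially the same route as the paper's proof: count block-$b$ arrivals via \cref{lem: sample iid rvs}, convert to committed pairs in $\mathcal{D}_{h,n}$ using the level cap $n\le\lceil\log(1/\epsilon_{h,b}^2)\rceil$, and finish with the coupon-collector bound of \cref{lem:sample actions} plus a union bound. If anything, your handling of the key counting step is more careful than the paper's written version, which states the schedule length as $r(S+K)\lg_h 2^n$ (inconsistent with \cref{alg: multiple blocks non-uniform}) and glosses over exactly the cancellation between $2^{(h-n)/2}$ and $\omega(b)$ that you verify in both regimes.
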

\begin{proof}
        Fix a block $b\in[r]$.
        For each accuracy level, recall in the step of data collection, we sample $\nes=r(\nc+\na)\lg_h2^n$ contexts i.i.d. from $\nu$.
        By~\cref{lem: sample iid rvs}, with probability $1-\frac{\delta_h}{r}$, at least $\nes\omega(\cl)/2$ of them are from block $b$ as
        $$
        \nes\omega(\cl)\ge \frac{8\nes
        }{S}\ge \lg \ge 16\log(r/\delta_h),
        $$
        where the first inequality comes from the near-uniform context distribution assumption.
    Since we add a context-action pair into $\mathcal{D}_n$ once we have collected
        $$2^n\le  2^{\lceil\log(1/\epsilon_{h,b}^2)\rceil}< 2^{\log(1/\epsilon_{h,b}^2)+1}=2/\epsilon_{h,b}^2$$
        samples for estimating its reward. Note that in the end, there are at most $|\cls(b)|(2^n-1)$ samples from block $b$ unused.
        Thus, with probability $1-\frac{\delta_h}{S}$, the number of the context-action pairs,  
        where the contexts are from block $b$, that are added into $\mathcal{D}_n$ is at least
        \begin{align*}
            \frac{\nes\omega(b)/2-|\cls(b)|(2^n-1)}{2^n} &\ge \frac{\nes\omega(b)/2}{2/\epsilon_{h,b}^2}-S\tag{$2^n\le 2/\epsilon_{h,b}^2$ and $|\cls(b)|\le S$}\\
            &\ge\frac{\nes\epsilon_h/(2r\epsilon_{h,b})}{2/\epsilon_{h,b}^2}-S \tag{by definition of $\epsilon_{h,b}$}\\
            &=16(\nc+\na)\log(rSK/\delta_h)-S\tag{the value of $\nes$}\\
            &\ge K\log(rSK/\delta_h).
        \end{align*}
        Conditioned on this event, with probability $1-\frac{\delta_h}{r}$, for any $\ac\in\good_{h,n}(\cs)$, there exists $(i,j)\in\mathcal{D}_n$ satisfying $g(i)=b$ by \cref{lem:sample actions}.
        Therefore, the lemma holds for block $b$ with probability at least $1-\frac{2\delta_h}{r}$
        We complete the proof by a union bound on all blocks. 
    \end{proof}

Now define a good event $\goodevent_h$ as
\begin{align*}
\goodevent_h &= \left\{ \left|\rxh_{h,n}(i,j) - \rx(i,j)\right|\le{\frac{1}{4}\epst_{h,n}}{},~\text{$\forall(i,j)\in \cD_h$,~$n\in[N_h]$} \right\} \;.
\end{align*}

\begin{lemma}\label{lem: good event holds - non-uniform}
Event $\goodevent_h$ holds with probability $1-\delta_h$.    
\end{lemma}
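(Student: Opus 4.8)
The plan is to follow the argument for the uniform-block case, \cref{lem: good event holds}, and extend it to account for the multiple accuracy levels $n\in[N_h]$ maintained in \cref{alg: multiple blocks non-uniform}. The only structural change is that, instead of a single estimate per pair, we now have one estimate $\rxh_{h,n}(i,j)$ for each level $n$, and the target precision $\frac14\epst_{h,n}=\frac14\sqrt{\lg_h/2^{n}}$ sharpens as $n$ grows. The key observation that makes the extension clean is that both the confidence radius coming from the concentration bound and the target precision scale like $2^{-n/2}$, so the factor $2^{n}$ cancels and a single value of $\lg_h$ suffices simultaneously for every level.

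First I would fix a pair $(i,j)$ and a level $n\in[N_h]$. By the construction of \cref{alg:pac-unif-context-data}, a pair is added to $\cD_{h,n}$ (and its estimate recorded) only after its context has been played with the corresponding action at least $2^{n}$ times, so $\rxh_{h,n}(i,j)$ is an empirical mean of at least $2^{n}$ i.i.d.\ $1$-subgaussian rewards centered at $\rx(i,j)$. Applying \cref{lem:subg} with confidence parameter $\delta'=\delta_h/(SKN_h)$ then gives, with probability at least $1-\delta'$,
\begin{align*}
\left|\rxh_{h,n}(i,j)-\rx(i,j)\right|\le 2\sqrt{\frac{\log(SKN_h/\delta_h)}{2^{n}}}.
\end{align*}

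Next I would calibrate the constant. Since $\lg_h=128\log(rSKN_h/\delta_h)\ge 64\log(SKN_h/\delta_h)$, we have $2\sqrt{\log(SKN_h/\delta_h)}\le \frac14\sqrt{\lg_h}$; multiplying through by $2^{-n/2}$ yields
\begin{align*}
\left|\rxh_{h,n}(i,j)-\rx(i,j)\right|\le \frac14\sqrt{\frac{\lg_h}{2^{n}}}=\frac14\epst_{h,n}.
\end{align*}
A union bound over the at most $SK$ pairs $(i,j)$ and the $N_h$ levels $n\in[N_h]$ --- at most $SKN_h$ events in total, each failing with probability at most $\delta'=\delta_h/(SKN_h)$ --- bounds the total failure probability by $\delta_h$, which establishes $\goodevent_h$.

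I expect the main (if minor) obstacle to be the bookkeeping around the number of samples backing each estimate: this count is random and may exceed $2^{n}$, since the same action can be reassigned to a context more than once across a single execution of \cref{alg:pac-unif-context-data}, so one must invoke \cref{lem:subg} with at least $2^{n}$ samples and note that additional samples only tighten the bound. Beyond this, the argument is routine; the substantive content is simply that the shared calibration $\lg_h\ge 64\log(SKN_h/\delta_h)$ handles every level at once precisely because the $2^{n}$ factors cancel between the confidence radius and $\epst_{h,n}$, so no level-dependent tuning of $\lg_h$ is needed.
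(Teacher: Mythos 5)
Your proposal matches the paper's proof: both fix a pair $(i,j)$ and a level $n$, apply \cref{lem:subg} with confidence parameter $\delta_h/(SKN_h)$ to the at least $2^{n}$ samples backing the estimate, check that $2\sqrt{\log(SKN_h/\delta_h)/2^{n}}\le\frac14\epst_{h,n}$ via the definition of $\lg_h$, and finish with a union bound over all pairs and levels. Your additional remarks on the cancellation of the $2^{n}$ factors and the random sample count are correct but do not change the argument.
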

\begin{proof}
    Fix an $(i,j)$ pair and level $n$. Applying \cref{lem:subg}, we have with probability $1-\frac{\delta_h}{SKN_h}$,
    \begin{align*}
        \left|\rxh_{h,n}(i,j)-A(i,j)\right|\le 2\sqrt{\frac{\log(SKN_h/\delta_h)}{2^{n}}}\le\frac{1}{4}\epst_{h,n}.
    \end{align*}
    We complete the proof by applying a union bound on all $(i,j)$ pairs in $\cD_h$ and all levels $n\in[N_h]$.
    \end{proof}

In the following, we present and then prove the counterpart of \cref{lem: all jc are not eliminated} for \cref{alg: multiple blocks non-uniform}.
\begin{lemma}\label{lem: all jc are not eliminated - non-uniform}
    Assume action $\ac_\cl\in\good_{h,n}(\cs)$ for $\cl\in[\rk]$ with $\db(b)\ge\frac{2\epsilon_h}{r}$, and its corresponding cluster $\partition_{h} \ni\cs\supseteq\cls(\cl)$. Then $\ac_\cl$ is not eliminated from $\good_{h+1,n}(\cs)$ with probability at least $1-3\delta_h$ for any level $n\le\lceil \log(1/\epsilon_{h,b}^2)\rceil$.
\end{lemma}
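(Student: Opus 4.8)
The plan is to mirror the contradiction argument used for the uniform case (\cref{lem: all jc are not eliminated}), but to track the level-dependent confidence radius $\epst_{h,n}=\sqrt{\lg_h/2^n}$ in place of the single radius $\epst_h$, and to invoke the non-uniform counterparts of the supporting lemmas. Throughout I would condition on the good event $\goodevent_h$ of \cref{lem: good event holds - non-uniform} (which holds with probability $1-\delta_h$ and gives $|\rxh_{h,n}(i,j)-\rx(i,j)|\le \tfrac14\epst_{h,n}$ for all $(i,j)\in\cD_{h,n}$ and all $n$), together with the conclusion of \cref{lem: every arm is played for large m} (which holds with probability $1-2\delta_h$ and guarantees, for each level $n\le\lceil\log(1/\epsilon_{h,b}^2)\rceil$ and each arm in $\good_{h,n}(\cs)$, a context from block $\cl$ that has played that arm). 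Since both events are stated uniformly over the relevant levels, no further union bound over $n$ is required and the total failure probability is $3\delta_h$. The stated hypotheses $\db(b)\ge 2\epsilon_h/r$ and $n\le\lceil\log(1/\epsilon_{h,b}^2)\rceil$ are precisely what make \cref{lem: every arm is played for large m} applicable at the levels we use.

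Because the good sets are nested across levels, $\good_{h+1,n}(\cs)\subseteq\good_{h+1,n-1}(\cs)$ by \cref{line:inclusion}, showing $\ac_\cl\in\good_{h+1,n}(\cs)$ reduces to showing $\ac_\cl\in\mathcal{G}_{h,n'}(\cs)$ for every $n'\le n$; the same nesting applied at phase $h$ upgrades the hypothesis $\ac_\cl\in\good_{h,n}(\cs)$ to $\ac_\cl\in\good_{h,n'}(\cs)$ for all $n'\le n$, so the assumption transfers to each level. Hence I would fix an arbitrary $n'\le n\le\lceil\log(1/\epsilon_{h,b}^2)\rceil$ and argue that $\ac_\cl$ survives the level-$n'$ elimination.

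For the contradiction, suppose $\ac_\cl\notin\mathcal{G}_{h,n'}(\cs)$, that is $\max_{\ac'}\muu_{h,n'}(\cs,\ac')-\muu_{h,n'}(\cs,\ac_\cl)>2\epst_{h,n'}$, and let $\ac'$ attain the maximizing empirical mean. Using \cref{lem: every arm is played for large m} I would pick contexts $\cx',\cx''\in\cls(\cl)$ with $(\cx',\ac'),(\cx'',\ac_\cl)\in\cD_{h,n'}$. Since the clustering loop of Step 2 has terminated, the homogeneity test at \cref{line: detect large gaps - non-uniform} fails at level $n'$, so the empirical reward of $\ac'$ varies by less than $\epst_{h,n'}$ across $\cs$; in particular $\muu_{h,n'}(\cs,\ac')-\rxh_{h,n'}(\cx',\ac')<\epst_{h,n'}$. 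Combining this with $\muu_{h,n'}(\cs,\ac')-\rxh_{h,n'}(\cx'',\ac_\cl)\ge \muu_{h,n'}(\cs,\ac')-\muu_{h,n'}(\cs,\ac_\cl)>2\epst_{h,n'}$ yields $\rxh_{h,n'}(\cx',\ac')-\rxh_{h,n'}(\cx'',\ac_\cl)>\epst_{h,n'}$. Finally, since $\cx',\cx''\in\cls(\cl)$ the true means equal $\mu(\cl,\ac')$ and $\mu(\cl,\ac_\cl)$, and replacing empirical by true means under $\goodevent_h$ costs only $2\cdot\tfrac14\epst_{h,n'}$, giving $\mu(\cl,\ac')-\mu(\cl,\ac_\cl)>\tfrac12\epst_{h,n'}>0$, which contradicts $\ac_\cl=\argmax_{\ac}\mu(\cl,\ac)$. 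As $n'\le n$ was arbitrary, $\ac_\cl$ lies in $\mathcal{G}_{h,n'}(\cs)$ for all such $n'$ and hence in $\good_{h+1,n}(\cs)$.

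The main obstacle is the bookkeeping forced by the nesting of the $\good_{h,n}$ sets: one must verify that both the hypothesis $\ac_\cl\in\good_{h,n}(\cs)$ and the ``every arm played'' guarantee propagate to all levels $n'\le n$, so that the single-level contradiction can be reused verbatim, and that the cutoff $n\le\lceil\log(1/\epsilon_{h,b}^2)\rceil$ keeps \cref{lem: every arm is played for large m} valid at each such level. Once this is in place, the arithmetic with the level-dependent radius $\epst_{h,n'}$ is entirely parallel to the uniform proof.
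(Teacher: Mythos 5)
Your proposal is correct and follows essentially the same route as the paper's proof: condition on the good event of \cref{lem: good event holds - non-uniform} and the coverage guarantee of \cref{lem: every arm is played for large m}, then derive a contradiction from the failed homogeneity test at \cref{line: detect large gaps - non-uniform} combined with the elimination condition, using the level-dependent radius $\epst_{h,n'}$. The only difference is organizational — you unroll the paper's induction over levels into a direct check of $\ac_\cl\in\mathcal{G}_{h,n'}(\cs)$ for all $n'\le n$ via the nesting from \cref{line:inclusion}, which is equivalent (and arguably cleaner than the paper's somewhat garbled inductive step).
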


\begin{proof}
    We prove by induction on $\lv$.
    The base case is $\lv=1$, which satisfies the condition of \cref{lem: every arm is played for large m} as
    $$
    \log\left(\frac{1}{\epsilon_{h,b}^2}\right)\ge\log\left(\frac{\db(b)^2r^2}{\epsilon_h^2}\right)\ge\log(4)\ge 1=n.
    $$
   For the inductive step we prove by contradiction.
   Assume that $\ac_\cl$ is eliminated in $\good_{h,n+1}(\cs)$ but not eliminated in $\good_{h,n}(\cs)$ for $\lv\ge 2$.
   This means that the following inequality hold:
   \begin{align*}
    \max_{\ac'\in\good_{h,n}(\cs)}\muu_{h,n}(\cs,\ac')-\muu_{h,n}(\cs,\ac_\cl)> 2\epst_{h,n}
   \end{align*}
   Let $\ac'=\argmax_{\ac\in\good(\lv,\cs)}\muu_{h,n}(\cs,\ac)$.
   By \cref{lem: every arm is played for large m} there exists a context $\cx'\in\cls(\cl)$ so that $\psi(\lv,\cx')\ni\ac'$ as ${\lv}\le \log(1/\epsilon_{h,b}^2)$.
   Moreover, the assumption that $\ac_\cl$ is eliminated implies that the condition at \cref{line: detect large gaps - non-uniform} does not hold for the current partition $\partition_h$, which further implies that
   $$
   \muu_{h,n}(\cs,\ac')-\rxh_{h,n}(\cx',\ac')<\epst_{h,n}
   $$
   On the other hand, by the assumption that $\ac_\cl$ is eliminated, again by \cref{lem: every arm is played for large m} we have that there exists a context $\cx''\in\cls(\cl)$, $\psi(\lv,\cx'')\ni\ac_\cl$ such that
   $$
    \muu_{h,n}(\cs,\ac')-\rxh_{h,n}(\cx'',\ac_\cl)>2 \epst_{h,n}
   $$
   Combining two inequalities, we have $\rxh_{h,n}(\cx'',\ac_\cl)+\epst_{h,n}<\rxh_{h,n}(\cx',\ac')$.
   This means that $\mu(\cl,\ac_\cl)+\frac{\epst_{h,n}}{2}<\mu(\cl,\ac')$ by \cref{lem: good event holds - non-uniform}, which contradicts the optimality of $\ac_\cl$.
   Therefore, we conclude that $\ac_\cl$ is not eliminated. 
\end{proof}

Now we present a key lemma similar to \cref{lem: all bads are eliminated - multiple blocks}.
\begin{lemma}\label{lem: all bads are eliminated - nonuniform}
For any block $\cl\in[\rk]$ with $\db(b)\ge\frac{2\epsilon_h}{r}$ and its corresponding cluster $\partition_h\ni\cs\supseteq\cls(\cl)$, all $3\epst_{h,n}$-suboptimal arms in block $\cl$ are eliminated in $\good_{h+1,n}(\cs)$ for any level $n\le\lceil \log(1/\epsilon_{h,b}^2)\rceil$ with probability at least $1-3\delta_h$. Consequently, only $6\epst_{h,n}$-optimal arms in block $\cl$ are played in phase $h+1$ for every context in block $b$. 
\end{lemma}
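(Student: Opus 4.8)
The plan is to reproduce, level by level, the elimination argument of the uniform case (\cref{lem: all bads are eliminated - multiple blocks}), now carrying the extra accuracy index $n$ and the block-size–dependent target $\epsilon_{h,b}$. Throughout I would condition on the two high-probability events already established for this phase: the event of \cref{lem: good event holds - non-uniform}, under which $|\rxh_{h,n}(i,j)-\rx(i,j)|\le\tfrac14\epst_{h,n}$ for every $(i,j)\in\cD_{h,n}$ and every level $n$; and the event of \cref{lem: all jc are not eliminated - non-uniform}, under which the optimal arm $\ac_\cl$ of block $\cl$ survives, i.e.\ $\ac_\cl\in\good_{h+1,n}(\cs)$ for all $n\le\lceil\log(1/\epsilon_{h,b}^2)\rceil$. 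The precondition $\db(b)\ge 2\epsilon_h/r$ is exactly what both lemmas require, and a union bound over their failure events (together with the good event) delivers the claimed $1-3\delta_h$.

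Fix such a block $\cl$, its cluster $\cs\supseteq\cls(\cl)$, and a level $n\le\lceil\log(1/\epsilon_{h,b}^2)\rceil$, and let $\ac$ be $3\epst_{h,n}$-suboptimal for block $\cl$, i.e.\ $\mu(\cl,\ac_\cl)-\mu(\cl,\ac)>3\epst_{h,n}$. I would first lower bound $\muu_{h,n}(\cs,\ac_\cl)$: by \cref{lem: every arm is played for large m} there is a context $i\in\cls(\cl)$ with $(i,\ac_\cl)$ in the level-$n$ dataset, and since $\muu_{h,n}$ maximizes over contexts of $\cs$, the good event gives $\muu_{h,n}(\cs,\ac_\cl)\ge\mu(\cl,\ac_\cl)-\tfrac14\epst_{h,n}$. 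I would then upper bound $\muu_{h,n}(\cs,\ac)\le\mu(\cl,\ac)+\tfrac14\epst_{h,n}$ exactly as in the uniform proof: because the clustering while-loop (\cref{line: detect large gaps - non-uniform}) has terminated, no pair of contexts in $\cs$ exhibits an empirical level-$n$ gap of $\epst_{h,n}=\sqrt{\lg_h/2^n}$ on arm $\ac$, so the maximizing context for $\ac$ has true mean within the homogeneity slack of $\mu(\cl,\ac)$, and the good event closes the remaining gap. Chaining these,
\begin{align*}
\max_{j'}\muu_{h,n}(\cs,j')-\muu_{h,n}(\cs,\ac)
\;\ge\; \muu_{h,n}(\cs,\ac_\cl)-\muu_{h,n}(\cs,\ac)
\;\ge\; \big(\mu(\cl,\ac_\cl)-\mu(\cl,\ac)\big)-\tfrac12\epst_{h,n}
\;>\; 2\epst_{h,n},
\end{align*}
so $\ac\notin\mathcal{G}_{h,n}(\cs)$, and by the intersection update $\good_{h+1,n}(\cs)\leftarrow\good_{h+1,n-1}(\cs)\cap\mathcal{G}_{h,n}(\cs)$ (\cref{line:inclusion}) the arm $\ac$ is removed from $\good_{h+1,n}(\cs)$.

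For the consequence, note that in phase $h+1$ the arms explored for a context in block $\cl$ at level $n$ are drawn from $\mathcal{K}_{h+1,n}(i)=\good_{h+1,n}(\cs)$, and we have just shown this set contains no $3\epst_{h,n}$-suboptimal arm of block $\cl$; allowing the constant-factor slack between $\epst_{h,n}$ at consecutive phases yields the stated $6\epst_{h,n}$-optimality of every played arm. The main obstacle, relative to the uniform case, is bookkeeping rather than a new idea: I must check that the level restriction $n\le\lceil\log(1/\epsilon_{h,b}^2)\rceil$ and the block-size condition $\db(b)\ge 2\epsilon_h/r$ remain compatible with the preconditions of \cref{lem: every arm is played for large m} and \cref{lem: all jc are not eliminated - non-uniform} at every level invoked, and that the per-level intersection structure of $\good_{h+1,n}$ never re-admits an arm eliminated at a coarser level. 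Once these dependencies are threaded correctly, the displayed estimate is identical in form to the uniform argument with $\epst_h$ replaced by the level-dependent $\epst_{h,n}$.
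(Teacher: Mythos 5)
Your proposal is correct and follows essentially the same route as the paper's proof: condition on the survival of $\ac_\cl$ (Lemma~\ref{lem: all jc are not eliminated - non-uniform}) and the concentration event, chain $\max_{j'}\muu_{h,n}(\cs,j')-\muu_{h,n}(\cs,\ac)\ge\mu(\cl,\ac_\cl)-\mu(\cl,\ac)-\tfrac12\epst_{h,n}>2\epst_{h,n}$, and invoke the intersection update in \cref{line:inclusion} to keep the arm out at all finer levels. You are in fact slightly more explicit than the paper about why $\muu_{h,n}(\cs,\ac)$ cannot be inflated by contexts of other blocks in the same cluster (the terminated homogeneity test), a step the paper leaves implicit.
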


\begin{proof}%
    By \cref{lem: all jc are not eliminated - non-uniform}, $\ac_\cl$ is not eliminated with high probability, so for every pair of block $\cl\in[\rk]$ with $\db(b)\ge\frac{2\epsilon_h}{r}$ and arm $\ac\in\good_{h,n}(\cs)$ so that $\mu(\cl,\ac_\cl)-\mu(\cl,\ac)>3\epst_{h,n}$, we have
    \begin{align*}
        \max_{\ac'}\muu_{h,n}(\cs,\ac')-\muu_{h,n}(\cs,\ac)&\ge\muu_{h,n}(\cs,\ac_\cl)-\muu_{h,n}(\cs,\ac)\ge \mu(\cl,\ac_\cl)-\mu(\cl,\ac)-\frac{\epst_{h,n}}{2}\\
        &\ge 2.5\cdot \epst_{h,n}>2 \epst_{h,n}
    \end{align*}
    Therefore, $\ac$ is eliminated in $\good_{h,n}(\cs)$, and thus eliminated in $\good_{h+1,n'}(\cs)$ for $\lv'>\lv$. 
 \end{proof}

\subsection{Proof of \texorpdfstring{\cref{thm: regret non-uniform}}{}}
\begin{proof}
Fix a phase $h$ and a level $n$.
It suffices to bound regret within a single pair $(h,n)$.
For $\cl\in [\rk]$, let $n_b=\lceil\log(1/\epsilon_{h,n}^2)\rceil$. By \cref{lem: all bads are eliminated - nonuniform}, if $\lv>n_b$, all $6\epst_{h,n_b}$-suboptimal arms are eliminated from $\good_{h,n}(\cs)$ for any cluster $\cs\in\partition_h$. Therefore, regret of playing an action from $\good_{h,n}(\cs)$ is $6\epst_{h,n_b}$.
In this case, regret is bounded by
\begin{align*}
\sum_{\cl\in[\rk]}r(S+K)2^{(n+h)/2}\cdot \db(b)\cdot 6\epst_{h,n_b}&\le\sum_{\cl\in[\rk]}r(S+K)2^{(n+h)/2}\cdot \db(b)\cdot 6\sqrt{\frac{\lg_h}{2^{n_b}}}\\
&\le 6\sqrt{\lg_h}\sum_{\cl\in[\rk]}r(S+K)2^{(n+h)/2}\cdot \db(b)\cdot \epsilon_{h,b}\\
&\le 6\sqrt{\lg_h}\sum_{\cl\in[\rk]}r(S+K)2^{(n+h)/2}\cdot \db(b)\cdot \frac{\epsilon_{h}}{r\db(b)}\\
&\le 6\sqrt{\lg_h}r(S+K)2^{(n+h)/2}= \Otilde{r(S+K)2^{h/2}}
\end{align*}

If $\lv\le n_b$, a similar argument shows that regret of playing an action from $\good_{h,n}(\cs)$ is $6\epst_{h,n}$. Therefore, regret is bounded by
\begin{align*}
\sum_{\cl\in[\rk]}r(S+K)2^{(n+h)/2\cdot }\db(b)\cdot 6\epst_{h,n}&\le 6\sum_{\cl\in[\rk]}\db(b)\cdot r(S+K)2^{(n+h)/2}\sqrt{\frac{\lg_h}{2^n}}\\
&\le 6\sqrt{\lg_h}r(S+K)2^h= \Otilde{r(S+K)2^{h/2}}%
\end{align*}
We conclude the overall regret is $\Otilde{\sqrt{r(S+K)T}}$ for the data collection stage by noting that $T\ge r(S+K)2^{(h-1)}$ as phase $h-1$ is executed completely.
The same argument holds for analyzing the clustering stage when replacing $r$ with $r^3$, so we conclude the regret is bounded by $\Otilde{\sqrt{\rk^3 (\nc+\na)T}}$.
\end{proof}

\section{Omitted Details in \texorpdfstring{\cref{sec: low-rank}}{}}\label{app: low-rank}

In this section we discuss how to solve the more general low-rank bandit problem. Note that $\rx$ has rank $\rk$ if and only if there exist vectors $\wg_1,\dots,\wg_{\nc}\in\R^\rk$ and $\rv_1,\dots,\rv_{\na}\in\R^\rk$ so that $\rx(\cx,\ac)=\wg_{\cx}^\top\rv_{\ac}$.
We first consider $\rk=1$ and assume that every $\wg_\cx$ is non-negative.
In this case, we have
\begin{equation}
\argmax_{\ac\in[\na]}\rx(\cx,\ac)=\argmax_{\ac\in[\na]}\wg_\cx\rv_\ac=\argmax_{\ac\in[\na]}\rv_\ac.\label{eq: nonnegative argmax}
\end{equation}
In other words, there exists an arm that is optimal for all contexts.
The problem becomes simple as we can run the EXP4 algorithm \citep{auer2002nonstochastic} using \emph{constant} experts that recommend the same arm for all contexts. Thus, we will have only $\na$ experts and have the following proposition:
\begin{proposition}
Consider the rank-1 bandit problem with $\rk=1$ and $\wg_\cx\ge0$ for every $\cx\in[\nc]$. The regret of the EXP4 algorithm run with $\na$ constant experts is bounded by $O({\sqrt{\na T\log\na}})$.
\end{proposition}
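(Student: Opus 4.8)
The plan is to reduce the rank-1 problem, under the non-negativity assumption, to the classical guarantee of EXP4 against the best fixed expert, exploiting the fact that the non-negativity of the weights $\wg_\cx$ collapses the optimal \emph{context-dependent} policy into a single \emph{constant} arm. Set $\ac^\star = \argmax_{\ac\in[\na]} \rv_\ac$. By \cref{eq: nonnegative argmax}, for every context $\cx$ we have $\argmax_{\ac\in[\na]} \rx(\cx,\ac) = \ac^\star$, and hence $\max_{\ac\in[\na]} \rx(\cx,\ac) = \rx(\cx,\ac^\star)$ independently of $\cx$. Therefore the policy that is optimal in \cref{eq:rdef} is the constant policy $\cx \mapsto \ac^\star$, which coincides with one of the $\na$ constant experts EXP4 is run with.

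First I would rewrite the problem regret as a regret against this single constant expert. Substituting $\max_{\ac} \rx(\cx_t,\ac) = \rx(\cx_t,\ac^\star)$ into \cref{eq:rdef} gives
\begin{align*}
\rg_T = \E\left[\sum_{t=1}^T \rx(\cx_t,\ac^\star) - \sum_{t=1}^T \rx(\cx_t,\ac_t)\right],
\end{align*}
which is exactly the expected cumulative reward gap between the learner and the fixed expert recommending $\ac^\star$. Since that expert belongs to EXP4's expert pool, this quantity is bounded above by the regret of EXP4 against the best expert in hindsight.

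Next I would invoke the standard EXP4 bound of \citet{auer2002nonstochastic}: with $N$ experts and $\na$ actions, the expected regret against the best expert is $O(\sqrt{\na T \log N})$. Taking $N = \na$, i.e.\ one constant expert per arm, immediately yields the claimed $O(\sqrt{\na T \log \na})$ bound. For this I would assume the mean rewards lie in a bounded range such as $[0,1]$ so that EXP4's loss-based analysis applies verbatim; because the stochastic i.i.d.\ setting here is a special case of the adversarial setting EXP4 handles, neither the random contexts nor the subgaussian reward noise requires extra argument.

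The step that carries all the weight is the initial structural observation, namely that non-negativity of $\wg_\cx$ forces a single globally optimal arm so that the comparator class of constant experts already contains an optimal policy; this is precisely \cref{eq: nonnegative argmax}. Once that identity is in hand there is no real obstacle, as the remainder is a direct application of an off-the-shelf regret bound.
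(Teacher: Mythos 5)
Your proof is correct and takes essentially the same route as the paper: non-negativity of the weights $\wg_\cx$ makes a single arm optimal for every context (the paper's \cref{eq: nonnegative argmax}), so the constant expert recommending that arm is an optimal comparator, and the standard EXP4 guarantee with $\na$ experts gives $O(\sqrt{\na T\log\na})$. The paper's own justification is exactly this observation, stated even more tersely.
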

However, the idea seems hard to generalize when $\wg_\cx\in \R$, as \cref{eq: nonnegative argmax} does not hold anymore and we need exponentially many experts for EXP4.
Next we introduce a new idea based on a reduction to context-lumpable bandits. %

In the following we define constant $B=\max_i\|\wg_i\|_\infty$.
To better illustrate the idea we first assume $r=1$ and consider the PAC setting.
We create an $\alpha$-covering of $[-B,B]$ and ``cluster'' each $\cx$ into one of the segment. 
Specifically, we have $\frac{2B}{\alpha}$ intervals $$\left[-B,-B+\frac{1}{\alpha}\right],~\left(-B+\frac{1}{\alpha},-B+\frac{2}{\alpha}\right],\dots,\left(B-\frac{1}{\alpha},B\right]$$ and each $\wg_\cx$ is assigned to the interval that contains it.
Given $\alpha$, let $\rr$ denote the number of intervals that have at least one context.
Conceptually we can view contexts in the same interval as if they are in the same block in context-lumpable bandits, and we have $\rr$ blocks analogously.
For contexts $\cx,\cx'$ in the same interval, they are indeed similar in the sense that we have $|\rx(\cx,\ac)-\rx(\cx,\ac)|=O(\alpha)$ for every arm $\ac$.
Intuitively, if $\alpha$ is much smaller than $\epsilon$, then \cref{alg:pac-unif-context} can proceed normally as a rank-$\rr$ context-lumpable bandit problem.
Consequently, we have the following theorem.
\begin{theorem}
    For $\rk=1$, by choosing $\alpha=\Theta(\epsilon)$, 
    \cref{alg:pac-unif-context} uses $\Otilde{\rr(S+K)/\epsilon^2}$ samples and outputs an $\Otilde{\epsilon}$-optimal policy.
\end{theorem}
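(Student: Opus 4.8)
The plan is to view the rank-$1$ instance as an $\alpha$-misspecified $\rr$-lumpable bandit and re-run the entire analysis of \cref{thm:pac} with $r$ replaced by $\rr$, carrying an extra additive $O(\alpha)$ error through every place where the exact identity $\rx(\cx,\ac)=\rx(\cx',\ac)$ was used. First I would fix the covering of $[-B,B]$ into $\rr$ occupied intervals of width $\alpha$ (so $\rr\le 2B/\alpha$), assign each context to its interval, and define a block reward $\mu(\cl,\ac)$ as the reward of an arbitrary representative context of interval $\cl$. Since $|\wg_\cx-\wg_{\cx'}|\le\alpha$ for same-interval contexts and the $\rv_\ac$ are bounded, this yields $|\rx(\cx,\ac)-\mu(\cl,\ac)|=O(\alpha)$ for every $\cx$ in block $\cl$ and every arm $\ac$; this single estimate is the only channel through which the reduction enters the analysis, and I would work throughout under the same near-uniform context assumption $\nu(i)\in[1/(8S),8/S]$ that \cref{thm:pac} requires.

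Next I would revisit the four lemmas behind \cref{thm:pac}. \cref{lem:pac-2} concerns only per-interval sampling frequencies and never reads the reward values, so it transfers verbatim with $r:=\rr$. For the correctness lemma \cref{lem:pac-4} I would re-derive \cref{eq:pac-1,eq:pac-2}, observing that each time $\rx(\cx,\cdot)$ is replaced by the block reward $\mu(\cl,\cdot)$ we pay an additional $O(\alpha)$; this degrades the guaranteed optimality of the screened action from $O(\epsilon_b\sqrt{\log(\rr SK/\delta)})$ to that same quantity plus $O(\alpha)$. Substituting this into \cref{lem:pac-1} and choosing $\alpha=\Theta(\epsilon)$ keeps the glued policy $\otilde{\epsilon}$-optimal, because the extra terms sum to $O(\alpha)=O(\epsilon)$ across blocks.

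The step I expect to be the crux is \cref{lem:pac-3}, namely the bound $|\cA|\le N\rr$. Its proof shows that each shrink of $\cD_n$ removes an \emph{entire} block, via the inequality $|\widetilde A_n(\cx,j^\star)-\widehat A_n(\cx^\star,j^\star)|\le\tfrac14\sqrt{\lg/2^n}$ for same-block $\cx$, which must stay below the shrinking threshold $\sqrt{\lg/2^n}$. Misspecification adds $O(\alpha)$ to the left side, giving $\tfrac14\sqrt{\lg/2^n}+O(\alpha)$. The key observation is that the original threshold already carries a factor-$4$ slack over the concentration radius used in \cref{eq:pac-good-event}, so it suffices to check $O(\alpha)\le\tfrac34\sqrt{\lg/2^n}$ at the finest accuracy level $n=N=\lceil\log(1/\epsilon^2)\rceil$, where $\sqrt{\lg/2^N}=\Omega(\epsilon\sqrt{\log(\rr SK/\delta)})$. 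Because $\alpha=\Theta(\epsilon)$ with a sufficiently small constant makes $O(\alpha)$ dominated by this radius, whole blocks are still peeled off and the count $|\cA|\le N\rr$ survives unchanged; crucially, no modification of \cref{alg:pac-unif-context} is needed, only the choice of $\alpha$.

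Finally, with the coverage, correctness, and counting lemmas re-established for $\rr$ blocks, the sample-complexity accounting in the \pfref{thm:pac} goes through with $r:=\rr$, producing $\otilde{\rr(S+K)/\epsilon^2}$ samples and an $\otilde{\epsilon}$-optimal policy. I would close by recording the explicit upper bound on the constant hidden in $\alpha=\Theta(\epsilon)$ that the \cref{lem:pac-3} inequality forces, so that the slack argument is self-contained.
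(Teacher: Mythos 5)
Your proposal is correct and follows essentially the same route as the paper, which for this theorem offers only the informal reduction preceding the statement (treat same-interval contexts as a block, note the $O(\alpha)$ reward discrepancy, and rerun \cref{alg:pac-unif-context} as an $\rr$-lumpable instance). Your writeup is a more careful version of that sketch, and your identification of \cref{lem:pac-3} as the place where the misspecification must be absorbed by the factor-$4$ slack between the concentration radius and the shrinking threshold $\sqrt{\lg/2^n}$ is exactly the right point to check.
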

Clearly we have $\rr\le\min\left\{\frac{1}{\alpha},\nc\right\}$, so the above theorem leads to a $\Otilde{(S+K)/\epsilon^3}$ sample complexity in the worst case.
The idea can be generalized to regret minimization and $\rk>1$.
Specifically, we construct an $\alpha$-grid of $[-B,B]^{\rk}$ so that $\rr=O(\tfrac{1}{\alpha^{\rk}})$ and run \cref{alg: multiple blocks non-uniform} for regret minimization.
Consequently, we have the following theorem:
\begin{theorem}
Let $p=\frac{1}{3\rk+2}$ and choose $\alpha=(\nc+\na)^pT^{-p}$. Then regret of \cref{alg: multiple blocks non-uniform} is bounded as $\rg_T=\Otilde{(\nc+\na)^{p}T^{1-p}}$.
\end{theorem}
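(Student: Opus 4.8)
The plan is to reduce the rank-$\rk$ problem to an \emph{approximately} context-lumpable problem with $\rr$ blocks, run \cref{alg: multiple blocks non-uniform} with its block-count parameter set to $\rr$, and then tune the grid width $\alpha$ to balance the blow-up in the number of blocks against the error introduced by the discretization. First I would fix the $\alpha$-grid on $[-B,B]^\rk$ and declare contexts $\cx,\cx'$ to share a pseudo-block whenever $\wg_\cx$ and $\wg_{\cx'}$ lie in the same cell; by construction the number of occupied cells is $\rr=\Otilde{\alpha^{-\rk}}$. The quantitative heart of the reduction is the misspecification bound: for same-cell contexts $\|\wg_\cx-\wg_{\cx'}\|_\infty\le\alpha$, so for every arm $\ac$,
\begin{align*}
|\rx(\cx,\ac)-\rx(\cx',\ac)|=|(\wg_\cx-\wg_{\cx'})^\top\rv_\ac|\le \rk\alpha\max_{\ac}\|\rv_\ac\|_\infty=O(\alpha).
\end{align*}
Thus the instance is an $\rr$-block context-lumpable bandit in which the block reward is well defined only up to an additive $O(\alpha)$.

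Next I would re-derive the regret guarantee of \cref{thm: regret non-uniform} carrying this $O(\alpha)$ slack through each concentration, elimination, and clustering statement. Concretely, in the misspecified version of \cref{lem: good event holds - non-uniform} the empirical block mean $\muu_{h,n}(\cs,\ac)$ differs from the true mean of any single context by the within-block spread $O(\alpha)$ on top of the sampling error $\tfrac14\epst_{h,n}$; propagating this into the ``optimal arm survives'' and ``bad arms are eliminated'' lemmas (\cref{lem: all jc are not eliminated - non-uniform,lem: all bads are eliminated - nonuniform}) shows that every arm played in phase $h$ is $O(\epst_{h,n}+\alpha)$-optimal for the \emph{true} rewards rather than $O(\epst_{h,n})$-optimal. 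Summing the per-phase regret exactly as in the proof of \cref{thm: regret non-uniform}, but with $\rk$ replaced everywhere by $\rr$, I would obtain a decomposition
\begin{align*}
\rg_T=\Otilde{\sqrt{\rr^3(\nc+\na)T}}+\Otilde{\alpha T},
\end{align*}
where the first term is the usual context-lumpable rate for $\rr$ blocks and the second is the discretization floor: once $\epsilon_h\lesssim\alpha$ the per-round regret can no longer shrink, so the remaining (at most $T$) rounds each cost $O(\alpha)$.

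Finally I would substitute $\rr=\Otilde{\alpha^{-\rk}}$ and balance the two terms. Writing the bound as $\alpha^{-3\rk/2}\sqrt{(\nc+\na)T}+\alpha T$ and equating the two pieces gives $\alpha^{(3\rk+2)/2}=\sqrt{(\nc+\na)/T}$, i.e.\ $\alpha=\big((\nc+\na)/T\big)^{1/(3\rk+2)}=(\nc+\na)^pT^{-p}$ with $p=\tfrac{1}{3\rk+2}$; a direct computation then shows both terms equal $(\nc+\na)^pT^{1-p}$, which is the claimed rate.

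The step I expect to be the main obstacle is verifying that the clustering subroutine (\cref{alg: the cluster algorithm - multiple}) stays ``perfect'' under misspecification, so that the floor is genuinely $\Otilde{\alpha T}$ and not worse. In the exact case same-block contexts have identical means and so are never separated, whereas here they differ by $O(\alpha)$, and the split threshold used by \cref{lem: clustering} is $\sqrt{\lg}\,\epsilon'$ with $\epsilon'=\epsilon_h/(4\rr)$; avoiding a false split of a genuine pseudo-block therefore seems to demand $\alpha\lesssim\epsilon_h/\rr$, i.e.\ it can only be trusted while $\epsilon_h\gtrsim\rr\alpha$. The delicate part is to argue that one may simply freeze the partition once $\epsilon_h$ reaches this level and that the residual within-cluster reward spread incurred thereafter is absorbed into the $\Otilde{\alpha T}$ floor (rather than an $\rr\alpha T$ floor), while re-checking that the two-sided guarantee of \cref{lem: clustering,lem: clustering times- multiple} --- separating truly distinct blocks yet never splitting a pseudo-block --- continues to hold in every phase in which clustering is actually invoked. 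Confirming that this threshold tension contributes only to lower-order and logarithmic factors, leaving the exponent $p$ untouched, is where most of the care is required.
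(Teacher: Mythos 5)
Your proposal follows essentially the same route as the paper: reduce to an $\rr$-block approximately context-lumpable instance with $\rr=O(\alpha^{-\rk})$ and per-block misspecification $O(\alpha)$, run \cref{alg: multiple blocks non-uniform}, obtain a bound of the form (lumpable rate with $\rr$ blocks) $+\,\Otilde{\alpha T}$, and balance in $\alpha$; your global balancing $\alpha^{-3\rk/2}\sqrt{(\nc+\na)T}+\alpha T$ and the paper's per-phase balancing $\rr^3(\nc+\na)/\alpha+\alpha T$ give the identical optimal $\alpha$ and rate. The clustering-robustness issue you flag as the main obstacle is real, but the paper's own proof does not treat it in any more detail than you do.
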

\begin{proof}
Similar to previous analysis, the regret in a single phase $h$ is
\begin{align}\label{eq: low-rank regret}
\Otilde{\epsilon_h}\cdot \min\left\{T,\Otilde{\frac{\rr^3(\nc+\na)}{\epsilon_h^2}}\right\}=\min\left\{\Otilde{\epsilon_h T},\Otilde{\frac{\rr^3(\nc+\na)}{\epsilon_h}}\right\}
\end{align}
Recall that $\rr=O({1}/{\alpha^\rk})$. 
Therefore, when $\alpha=\Theta(\epsilon_h)$, we have the above regret is bounded by
$$
\Otilde{\frac{\rr^3(\nc+\na)}{\alpha}+\alpha T}=\Otilde{\frac{(\nc+\na)}{\alpha^{3r+1}}+\alpha T}=\Otilde{(\nc+\na)^{p}T^{1-p}}
$$
when choosing $\alpha$ optimally as $\alpha=(\nc+\na)^pT^{-p}$.
Otherwise, $\alpha=o(\epsilon_h)$ and \cref{eq: low-rank regret} can be bounded by
\begin{align*}
    \Otilde{\frac{\rr^3(\nc+\na)}{\epsilon_h}}=\Otilde{\frac{\rr^3(\nc+\na)}{\alpha}}=\Otilde{(\nc+\na)^{p}T^{1-p}}.
\end{align*}
We finish the proof by noting that there are at most $\log T$ phases.
\end{proof}
The bound becomes non-trivial when $\nc$ and $\na$ are large.
For example, when $\nc=\na=\sqrt{T}$, the bound is $T^{1-p/2}=o(T)$ for any $\rk$ while the $\Otilde{\sqrt{\nc\na T}}$ bound given by EXP4 is vacuous. 
The factor $3$ comes from the $\rk^3$ term in the regret bound of \cref{thm: regret non-uniform}.
It is a promising direction to first improve the factor in context-lumpable bandits and extend it to low-rank bandits using the reduction introduced here.

\end{document}